\definecolor{darkgreen}{rgb}{0,0.5,0}
\theoremstyle{plain}
\newtheorem{theorem}{Theorem}
\newtheorem{lemma}[theorem]{Lemma}
\theoremstyle{definition}
\newtheorem{definition}[theorem]{Definition}
\theoremstyle{remark}
\newtheorem*{remark*}{Remark}
\newcommand{\R}{\mathbb{R}}
\newcommand{\cA}{\mathcal{A}}
\newcommand{\cB}{\mathcal{B}}
\newcommand{\cE}{\mathcal{E}}
\newcommand{\cI}{\mathcal{I}}
\newcommand{\cJ}{\mathcal{J}}
\newcommand{\cO}{\mathcal{O}}
\newcommand{\cX}{\mathcal{X}}
\newcommand{\opt}{\mathrm{\textbf{opt}}}
\newcommand{\argmax}{\mathrm{argmax}}
\newcommand{\argmin}{\mathrm{argmin}}
\newcommand{\Des}{\texttt{Des}}
\newcommand{\Anc}{\texttt{Anc}}
\newcommand{\Pa}{\texttt{Pa}}
\newcommand{\Ch}{\texttt{Ch}}
\newcommand{\last}{\texttt{last}}
\title{Subset verification and search algorithms for causal DAGs}
\author{
Davin Choo\thanks{Equal contribution}\\
National University of Singapore\\
\texttt{davin@u.nus.edu}
\and
Kirankumar Shiragur\footnotemark[1]\\
Broad Institute of MIT and Harvard\\
\texttt{shiragur@stanford.edu}
}
\date{}
\begin{document}

\maketitle

\begin{abstract}
Learning causal relationships between variables is a fundamental task in causal inference and directed acyclic graphs (DAGs) are a popular choice to represent the causal relationships.
As one can recover a causal graph only up to its Markov equivalence class from observations, interventions are often used for the recovery task.
Interventions are costly in general and it is important to design algorithms that minimize the number of interventions performed.

In this work, we study the problem of identifying the smallest set of interventions required to learn the causal relationships between a subset of edges (target edges).
Under the assumptions of faithfulness, causal sufficiency, and ideal interventions, we study this problem in two settings: when the underlying ground truth causal graph is known (subset verification) and when it is unknown (subset search).

For the subset verification problem, we provide an efficient algorithm to compute a minimum sized interventional set; we further extend these results to bounded size non-atomic interventions and node-dependent interventional costs.
For the subset search problem, in the worst case, we show that no algorithm (even with adaptivity or randomization) can achieve an approximation ratio that is asymptotically better than the vertex cover of the target edges when compared with the subset verification number.
This result is surprising as there exists a logarithmic approximation algorithm for the search problem when we wish to recover the whole causal graph.
To obtain our results, we prove several interesting structural properties of interventional causal graphs that we believe have applications beyond the subset verification/search problems studied here.
\end{abstract}

\section{Introduction}
\label{sec:introduction}

Learning causal relationships from data is an important problem with applications in many areas such as biology \cite{king2004functional,sverchkov2017review,rotmensch2017learning,pingault2018using}, economics \cite{hoover1990logic,rubin2006estimating}, and philosopy \cite{reichenbach1956direction,woodward2005making,eberhardt2007interventions}.
More recently, causal inference techniques have also been used to design methods that generalize to out-of-distribution samples \cite{ganin2016domain,arjovsky2019invariant,arjovsky2020out}.
In many of these applications, directed acyclic graphs (DAG) are used to model the causal relationships, where an arc $x \to y$ encodes $x$ causes $y$, and the goal is to recover these graphs from data.

One can only recover causal graphs up to a Markov equivalence class using observational data \cite{pearl2009causality,spirtes2000causation}, and additional model assumptions \cite{shimizu2006linear,peters2014identifiability,mooij2016distinguishing} or interventions \cite{eberhardt2006n,eberhardt2010causal,eberhardt2012number,hu2014randomized,shanmugam2015learning,greenewald2019sample,squires2020active} are often used to recover the true underlying causal graph.
In this work, we study the causal discovery problem using interventions.
Performing interventions in real life are often costly as in many cases they correspond to performing randomized controlled trials or gene knockout experiments.
As such, most of the prior works focused on recovering the causal graph while minimizing the interventions performed.

Besides minimizing the number of interventions performed, many applications care about recovering only a \emph{subset} of the causal relationships.
For instance, in \emph{local graph discovery}, efficient learning of localized causal relationships play a central role in feature selection via Markov blankets\footnote{A Markov blanket of a variable $X \in V$ is a subset of variables $S \subseteq V$ such that all other variables are independent of $X$, conditioned on $S$.} \cite{aliferis2003hiton,tsamardinos2003towards,mani2004causal,aliferis2010local} while scalability is of significant concern when one only wishes to learn localized causal effects (e.g.\ the direct causes and effects of a target variable of interest) \cite{statnikov2015ultra,frangieh2021multimodal} within a potentially large causal graph (e.g.\ gene regulatory networks~\cite{davidson2005gene}).
Meanwhile, in the context of designing algorithms that generalize to novel distributions~\cite{arjovsky2019invariant,lu2021invariant}, it suffices to just learn the causal relationship between the target variable and feature/latent variables while ignoring all other causal relationships.
Furthermore, in practice, there may be constraints on the interventions that one can perform and it is natural to prioritize the recovery of important causal relationships.
As such, in many practical situations, one is interested in learning the causal relationship only for a \emph{subset} of the edges of the causal graph while minimizing the number of interventions.

In this work, we formally initiate the study of this question by studying two fundamental problems under the assumptions of
ideal interventions\footnote{Ideal interventions assume hard interventions (forcefully setting a variable value) and the ability to obtain as many interventional samples as desired, ensuring that we always recover the directions of all edges cut by interventions. Without this assumption, we may fail to correctly infer some arc directions and our algorithms will only succeed with some success probability.},
faithfulness\footnote{Faithfulness assumes that independencies that occur in data does not occur due to ``cancellations'' in the functional relationships, but rather due to the causal graph structure. It is known \cite{meek1995strong,spirtes2000causation} that, under many natural parameterizations and settings, the set of unfaithful parameters for any given causal DAG has zero Lebesgue measure (i.e.\ faithfulness holds; see also \cite[Section 3.2]{zhang2002strong} for a discussion about faithfulness). However, one should be aware that the faithfulness assumption may be violated in reality \cite{andersen2013expect,uhler2013geometry}, especially in the presence of sampling errors in the finite sample regime.},
and causal sufficiency\footnote{Under causal sufficiency, there are no hidden confounders (i.e.\ unobserved common causes to the observed variables). While causal sufficiency may not always hold, it is still a reasonable one to make in certain applications such as studying gene regulatory networks (e.g.\ see \cite{wang2017permutation}).}: verification and search for learning the subset of edges in the causal graph, which we formally define below.

\begin{definition}[Subset verification problem]
Given a DAG $G = (V,E)$ and target edges $T \subseteq E$, find the minimum size/cost intervention set $\cI \subseteq 2^V$ such that $T \subseteq R(G,\cI)$.
\end{definition}

\begin{definition}[Subset search problem]
Given an \emph{unknown} ground truth DAG $G^* = (V,E)$ and target edges $T \subseteq E$, find the minimum size/cost intervention set $\cI \subseteq 2^V$ such that $T \subseteq R(G^*,\cI)$.
\end{definition}

\emph{Assuming $G$ was the ground truth}, we use $R(G,\cI)$ to denote the set of recovered arc directions due to interventions $\cI$ performed on an unoriented graph\footnote{The graph on which we perform interventions may not be completely oriented but a partially oriented one that is consistent with the Markov equivalence class of $G$. See \cref{sec:prelim} for a more accurate definition.}, and $\nu_1(G, T)$ to denote the minimum number of interventions needed to fully orient edges in $T$.
The above definitions are natural generalizations of the standard well-studied verification and search problems when $T = E$.
For $T=E$, the authors of \cite{choo2022verification} gave an efficient algorithm to compute the verification set of size $\nu_1(G, E)$ using the notion of covered edges and also provided an adaptive search algorithm that orients the whole causal graph using $\cO(\log n \cdot \nu_1(G, E))$ interventions.
Just as how the verification number $\nu_1(G,E)$ is a natural lower bound for the search problem, the \emph{subset} verification number $\nu_1(G,T)$ also serves as a lower bound for the \emph{subset} search problem.

\subsection{Our Contributions}

Despite being a simple generalization, the approach of \cite{choo2022verification} fails to directly extend to the subset verification and search problems. In our work, we show the following.

\subsubsection{Subset verification}

We give an efficient dynamic programming algorithm to compute a minimal subset verification set, and also extend to more general settings involving non-atomic interventions and additive vertex costs.

\subsubsection{Subset search}

We provide an explicit family of graphs $G$ and subsets $T$ such that the subset verification number $\nu_{1}(G,T)=1$ while any search algorithm needs $\mathrm{vc}(T)$ interventions to orient edges in $T$ against an adaptive adversary, where $\mathrm{vc}(T)$ is the size of the minimum vertex cover of $T$.
Thus, \emph{no} subset search algorithm has a better approximation ratio than $\mathrm{vc}(T)$ when compared with the $\nu_{1}(G,T)$ in general.
Furthermore, as $\nu_{1}(G,T) \leq \mathrm{vc}(T)$, it is trivial to design an algorithm to achieve this approximation ratio.
Meanwhile, in the special case where $T$ is a subset of all edges within a node-induced subgraph -- a setting that we believe is of practical interest -- we give a subset search algorithm that only incurs a logarithmic overhead in the \emph{size of the subgraph}, with respect to $\nu_1(G^*)$.
Note that here we compete against $\nu_1(G^*)$ and \emph{not} $\nu_1(G^*, T)$.

To obtain the above results, we provide a better understanding of how interventions work and prove several other interesting results.
We believe these results are fundamental and could be of independent interest.
For instance, we show that in the context of minimizing the number of ideal interventions used in causal graph discovery, it suffices to study DAGs without v-structures.
We also characterize the set of vertices orienting any given arc via Hasse diagrams.
We formalize these and other properties in \cref{sec:results}.

\subsection{Organization}

\cref{sec:prelim} contains preliminary notions and some related work.
We state our main results in \cref{sec:results}.
One of our results show that the subset verification problem is equivalent to a computational problem called \emph{interval stabbing on a rooted tree}, which we solve in \cref{sec:interval-stabbing-on-a-tree}.
Some experimental results are shown in \cref{sec:experiments}.
Full proofs and source code are given in the appendix.

\section{Preliminaries and related work}
\label{sec:prelim}

We write $A \,\dot\cup\, B$ to represent the disjoint union of two disjoint sets $A$ and $B$.

\subsection{Basic graph definitions}

Let $G = (V,E)$ be a graph on $|V| = n$ vertices.
We use $V(G)$, $E(G)$ and $A(G) \subseteq E(G)$ to denote its vertices, edges, and oriented arcs respectively.
The graph $G$ is said to be directed or fully oriented if $A(G) = E(G)$, and partially oriented otherwise.
For any two vertices $u,v \in V$, we write $u \sim v$ if these vertices are connected in the graph and $u \not\sim v$ otherwise.
To specify the arc directions, we use $u \to v$ or $u \gets v$.
For any subset $V' \subseteq V$ and $E' \subseteq E$, $G[V']$ and $G[E']$ denote the vertex-induced and edge-induced subgraphs respectively.

Consider an vertex $v \in V$ in a directed graph, let $\Pa(v), \Anc(v), \Des(v)$ denote the parents, ancestors and descendants of $v$ respectively.
Let vector $pa(v)$ denotes the values taken by $v$'s parents, e.g.\ if a parent node $v$ represents \textsc{Season}, then $pa(v) \in \{\textsc{Spring}, \textsc{Summer}, \textsc{Autumn}, \textsc{Winter}\}$.
Let $\Des[v] = \Des(v) \cup \{v\}$ and $\Anc[v] = \Anc(v) \cup \{v\}$.
We define $\Ch(v) \subseteq \Des(v)$ as the set of \emph{direct children} of $v$, that is, for any $w \in \Ch(v)$ there does \emph{not} exists $z \in V \setminus \{v,w\}$ such that $z \in \Des(v) \cap \Anc(w)$.
Note that, $\Ch(v) \subseteq \{w \in V : v \to w\} \subseteq \Des(v)$, i.e.\ $\Ch(v)$ is a subset of the standard notion of children in a directed graph, which in turn is a subset of all reachable vertices in a directed graph.

The \emph{skeleton} $skel(G)$ of a (partially oriented) graph $G$ is the underlying graph where all edges are made undirected.
A \emph{v-structure} refers to three distinct vertices $u,v,w \in V$ such that $u \to v \gets w$ and $u \not\sim w$.
A simple cycle is a sequence of $k \geq 3$ vertices where $v_1 \sim v_2 \sim \ldots \sim v_k \sim v_1$.
The cycle is partially directed if at least one of the edges is directed and all directed arcs are in the same direction along the cycle.
A partially directed graph is a \emph{chain graph} if it contains no partially directed cycle.
In the undirected graph $G[E \setminus A]$ obtained by removing all arcs from a chain graph $G$, each connected component in $G[E \setminus A]$ is called a \emph{chain component}.
We use $CC(G)$ to denote the set of chain components, where each $H \in CC(G)$ is a subgraph of $G$ and $V = \dot\cup_{H \in CC(G)} V(H)$.
For any partially directed graph, an \emph{acyclic completion / consistent extension} is an assignment of edge directions to all unoriented edges such that the resulting directed graph has no directed cycles.

Directed acyclic graphs (DAGs) are fully oriented chain graphs that are commonly used as graphical causal models \cite{pearl2009causality}, where vertices represents random variables and the joint probability density $f$ factorizes according to the Markov property:
$
f(v_1, \ldots, v_n) = \prod_{i=1}^n f(v_i \mid pa(v))
$.
We can associate a (not necessarily unique) \emph{valid permutation / topological ordering} $\pi : V \to [n]$ to any (partially directed) DAG such that oriented arcs $(u,v)$ satisfy $\pi(u) < \pi(v)$ and unoriented arcs $\{u,v\}$ can be oriented as $u \to v$ without forming directed cycles when $\pi(u) < \pi(v)$.

For any DAG $G$, we denote its \emph{Markov equivalence class} (MEC) by $[G]$ and \emph{essential graph} by $\cE(G)$.
DAGs in the same MEC $[G]$ have the same skeleton and essential graph $\cE(G)$ is a partially directed graph such that an arc $u \to v$ is directed if $u \to v$ in \emph{every} DAG in MEC $[G]$, and an edge $u \sim v$ is undirected if there exists two DAGs $G_1, G_2 \in [G]$ such that $u \to v$ in $G_1$ and $v \to u$ in $G_2$.
It is known that two graphs are Markov equivalent if and only if they have the same skeleton and v-structures \cite{verma1990,andersson1997characterization}.
An edge $u \sim v$ is a \emph{covered edge} \cite[Definition 2]{chickering2013transformational} if $\Pa(u) \setminus \{v\} = \Pa(v) \setminus \{u\}$.\footnote{On fully oriented graphs, the related notion of \emph{protected edges} \cite[Definition 3.2]{andersson1997characterization} is equivalent: an edge $a \sim b$ is \emph{not} protected if and only if it is a covered edge in $G[A]$.}

\subsection{Interventions and verifying sets}

An \emph{intervention} $S \subseteq V$ is an experiment where all variables $s \in S$ is forcefully set to some value, independent of the underlying causal structure.
An intervention is \emph{atomic} if $|S| = 1$ and \emph{bounded} if $|S| \leq k$ for some $k>0$; observational data is a special case where $S = \emptyset$.
The effect of interventions is formally captured by Pearl's do-calculus \cite{pearl2009causality}.
We call any $\cI \subseteq 2^V$ a \emph{intervention set}: an intervention set is a set of interventions where each intervention corresponds to a subset of variables.
An \emph{ideal intervention} on $S \subseteq V$ in $G$ induces an interventional graph $G_S$ where all incoming arcs to vertices $v \in S$ are removed \cite{eberhardt2012number}.
It is known that intervening on $S$ allows us to infer the edge orientation of any edge cut by $S$ and $V \setminus S$ \cite{eberhardt2007causation,hyttinen2013experiment,hu2014randomized,shanmugam2015learning,kocaoglu2017cost}.

For ideal interventions, an $\cI$-essential graph $\cE_{\cI}(G)$ of $G$ is the essential graph representing the Markov equivalence class of graphs whose interventional graphs for each intervention is Markov equivalent to $G_S$ for any intervention $S \in \cI$.
There are several known properties about $\cI$-essential graph properties \cite{hauser2012characterization,hauser2014two}:
Every $\cI$-essential graph is a chain graph with chordal\footnote{A chordal graph is a graph where every cycle of length at least 4 has a chord, which is an edge that is not part of the cycle but connects two vertices of the cycle; see \cite{blair1993introduction} for an introduction.} chain components.
This includes the case of $S = \emptyset$.
Orientations in one chain component do not affect orientations in other components.
In other words, to fully orient any essential graph $\cE(G^*)$, it is necessary and sufficient to orient every chain component in $\cE(G^*)$.

A \emph{verifying set} $\cI$ for a DAG $G \in [G^*]$ is an intervention set that fully orients $G$ from $\cE(G^*)$, possibly with repeated applications of Meek rules (see \cref{sec:appendix-meek-rules}).
In other words, for any graph $G = (V,E)$ and any verifying set $\cI$ of $G$, we have $\cE_{\cI}(G)[V'] = G[V']$ for \emph{any} subset of vertices $V' \subseteq V$.
Furthermore, if $\cI$ is a verifying set for $G$, then $\cI \cup S$ is also a verifying set for $G$ for any additional intervention $S \subseteq V$.
An \emph{subset verifying set} $\cI$ for a subset of \emph{target edges} $T \subseteq E$ in a DAG $G \in [G^*]$ is an intervention set that fully orients all arcs in $T$ given $\cE(G^*)$, possibly with repeated applications of Meek rules.
Note that the subset verifying set depends on the target edges \emph{and} the underlying ground truth DAG --- the subset verifying set for the same $T \subseteq E$ may differ across two different DAGs $G,G' \in [G^*]$ in the same Markov equivalence class.
While there may be multiple verifying sets in general, we are often interested in finding one with a minimum size or cost.

\begin{definition}[Minimum size/cost subset verifying set]
Let $w$ be a weight function on intervention sets.
An intervention set $\cI$ is called a subset verifying set for a subset of target edges $T \subseteq E$ in a DAG $G^*$ if all edges in $T$ are oriented in $\cE_{\cI}(G^*)$.
In the special case of $T = E$, we have $\cE_{\cI}(G^*) = G^*$.
$\cI$ is a \emph{minimum size (resp.\ cost) subset verifying set} if some edge in $T$ remains unoriented in $\cE_{\cI'}(G^*)$ for any $|\cI'| < |\cI|$ (resp.\ for any $w(\cI') < w(\cI)$).
\end{definition}

While restricting to interventions of size at most $k$, the \emph{minimum verification number} $\nu_k(G, T)$ denotes the size of the minimum size subset verifying set for any DAG $G \in [G^*]$ and subset of target edges $T \subseteq E$.
That is, any revealed arc directions when performing interventions on $\cE(G^*)$ respects $G$.
We write $\nu_1(G, T)$ when we restrict to atomic interventions.
When $k = 1$ and $T = E$, \cite{choo2022verification} tells us that it is necessary and sufficient to intervene on a minimum vertex cover of the covered edges in $G$.

For any intervention set $\cI \subseteq 2^V$, we write $R(G, \cI) = A(\cE_{\cI}(G)) \subseteq E$ to mean the set of oriented arcs in the $\cI$-essential graph of a DAG $G$.
Under this notation, we see that the directed arcs in the partially directed graph $\cE_{\cI}(G)$ can be expressed as $A(\cE_{\cI}(G)) = R(G,\cI)$.
For cleaner notation, we write $R(G,I)$ for single interventions $\cI = \{I\}$ for some $I \subseteq V$, and $R(G,v)$ for single atomic interventions $\cI = \{\{v\}\}$ for some $v \in V$.
The following lemma\footnote{While \cite{ghassami2018budgeted} studies atomic interventions, their proof extends to non-atomic intervention sets, and even the observational case where the intervention set could be $\emptyset$. However, there are some minor fixable bugs in their proof. For completeness, we provide a shorter fixed proof of \cref{lem:recovered-union} in \cref{sec:appendix-stronger-GSKB}.} implies that the combined knowledge of two intervention sets do not further trigger any Meek rules.

\begin{restatable}[Modified lemma 2 of \cite{ghassami2018budgeted}]{lemma}{recoveredunion}
\label{lem:recovered-union}
For any DAG $G = (V,E)$ and any two intervention sets $\cI_1, \cI_2 \subseteq 2^V$, we have $R(G, \cI_1 \cup \cI_2) = R(G, \cI_1) \cup R(G, \cI_2)$.
\end{restatable}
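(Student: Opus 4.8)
The plan is to prove the two inclusions separately. The inclusion $R(G, \cI_1) \cup R(G, \cI_2) \subseteq R(G, \cI_1 \cup \cI_2)$ is the easy \emph{monotonicity} direction: any DAG whose interventional graphs match those of $G$ under every target in $\cI_1 \cup \cI_2$ in particular matches them under every target in $\cI_1$ (and likewise $\cI_2$), so the $(\cI_1 \cup \cI_2)$-Markov equivalence class of $G$ is contained in each of the $\cI_1$- and $\cI_2$-Markov equivalence classes. Hence every arc whose direction is common to the larger class is also common to each smaller class, giving $R(G,\cI_j) \subseteq R(G, \cI_1 \cup \cI_2)$ for $j \in \{1,2\}$, and the inclusion follows.

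The substance is the reverse inclusion $R(G, \cI_1 \cup \cI_2) \subseteq R(G,\cI_1) \cup R(G,\cI_2)$, which formalizes the claim that combining the two intervention sets triggers no new Meek rule. First I would record two facts. (i) The arcs oriented \emph{directly} are additive: an arc is cut by some target in $\cI_1 \cup \cI_2$ iff it is cut by a target in $\cI_1$ or in $\cI_2$, while the v-structures and observational orientations lie in both $R(G,\cI_1)$ and $R(G,\cI_2)$ by monotonicity. (ii) Since the Meek-rule closure is monotone and idempotent, $R(G, \cI_1 \cup \cI_2)$ equals the Meek closure of $R(G,\cI_1) \cup R(G,\cI_2)$. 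Combining (i) and (ii), it suffices to prove that $R(G,\cI_1) \cup R(G,\cI_2)$ is \emph{already closed} under the Meek rules: no Meek rule applied to the partially oriented graph on skeleton $E$ whose oriented arcs are exactly $R(G,\cI_1) \cup R(G,\cI_2)$ produces a new orientation.

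Suppose for contradiction that some Meek rule fires and orients an edge $e \notin R(G,\cI_1) \cup R(G,\cI_2)$. All oriented arcs in its premise lie in $R(G,\cI_1) \cup R(G,\cI_2)$, and since these orientations all agree with the single ground-truth DAG $G$, the concluded orientation of $e$ also agrees with $G$. If every oriented premise arc lay in $R(G,\cI_1)$, then, as $R(G,\cI_1)$ is itself Meek-closed, the same rule would already have oriented $e$ inside $R(G,\cI_1)$ (the reverse orientation is impossible there, since $R(G,\cI_1) \subseteq A(G)$ is $G$-consistent), contradicting $e \notin R(G,\cI_1)$; symmetrically for $R(G,\cI_2)$. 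Hence the premise arcs must be \emph{split}, with at least one arc in $R(G,\cI_1) \setminus R(G,\cI_2)$ and another in $R(G,\cI_2) \setminus R(G,\cI_1)$. Meek's rule R1 has a single oriented premise arc, so a split is impossible and R1 is ruled out immediately; the remaining work is to rule out a split premise for the two-arc rules R2, R3, and R4.

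The split case for R2--R4 is the main obstacle, and it is precisely where consistency with the ground-truth DAG must be used, since a naive distributivity of Meek closure over unions is false in general. Here I would invoke the reversibility characterization of interventional Markov equivalence: an arc $a \to b$ fails to lie in $R(G,\cI_j)$ exactly when some DAG $G_j$ in the $\cI_j$-Markov equivalence class of $G$ has $b \to a$, reachable from $G$ by reversing covered edges that are separated by no target in $\cI_j$. Pushing the split premise arcs through the two witnesses $G_1$ and $G_2$ and invoking acyclicity shows that reversing $e$ forces, in each witness, the reversal of exactly the opposite one of the two split premise arcs. The plan is then to merge these two partial reversals into a single acyclic, v-structure-preserving orientation of the chain component containing $e$ that is consistent with every target in $\cI_1 \cup \cI_2$ and reverses $e$; chordality of the interventional-essential chain components, which supplies a perfect elimination ordering realizing the desired reversal, is what makes this merge possible. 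Such a witness reverses $e$ in the $(\cI_1 \cup \cI_2)$-class, contradicting that a Meek rule orients $e$, and thereby shows $R(G,\cI_1) \cup R(G,\cI_2)$ is Meek-closed. I expect this merging step --- reconciling the two reversal certificates inside one chordal chain component --- to be the delicate part, and the place where the gap in the original argument of \cite{ghassami2018budgeted} must be repaired.
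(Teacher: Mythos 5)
Your first direction, the reduction of the hard direction to showing that $R(G,\cI_1)\cup R(G,\cI_2)$ is Meek-closed, and the dismissal of R1 via its single oriented premise all match the paper's proof, which runs the same case analysis over the rule that fires. (One small miss: R3 can never have split premises at all, since its two premise arcs $c \to b \gets d$ with $c \not\sim d$ form a v-structure of $G$ and hence lie in $R(G,\emptyset) \subseteq R(G,\cI_1)\cap R(G,\cI_2)$; the paper disposes of R3 exactly this way, with no further machinery.) The divergence --- and the genuine gap --- is your treatment of split premises for R2 and R4.

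The merged witness you plan to build provably does not exist, so the merging step is not merely ``delicate'': it is impossible, and no appeal to chordality or perfect elimination orderings can produce it. Take R2 with $a \to c \in R(G,\cI_1)$, $c \to b \in R(G,\cI_2)$, and conclusion edge $a \sim b$. Any DAG $G'$ with the skeleton and v-structures of $G$ that agrees with $G$ on every edge cut by a target of $\cI_1 \cup \cI_2$ is, by the very characterization of interventional Markov equivalence that your witness argument relies on, a member of both the $\cI_1$-class and the $\cI_2$-class of $G$; it must therefore contain every arc of $R(G,\cI_1) = A(\cE_{\cI_1}(G))$ and of $R(G,\cI_2)$, since essential-graph arcs are common to all members of the class --- in particular both $a \to c$ and $c \to b$. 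If $G'$ also reverses $e$, then $b \to a \to c \to b$ is a directed cycle. (For R4, acyclicity forces $c \to a$ and $d \to a$ in $G'$, and then $b \to a \gets d$ with $b \not\sim d$ is a v-structure absent from $G$.) So the case hypothesis itself refutes the existence of your witness; the only way a ``construction'' of it could succeed is by first deriving that the case hypothesis is contradictory --- which is precisely the content that is missing. The paper supplies that content with \cref{lem:triangle-lemma}, a direct consequence of interventional essential graphs being chain graphs (\cref{lem:essential-graph-is-chain-graph}): no triangle can have exactly one oriented arc in $\cE_{\cI_1}(G)$. Applied to the triangle $\{a,c,b\}$, the assumption $a \to c \in R(G,\cI_1)$ forces either $c \to b \in R(G,\cI_1)$ (un-splitting the premises, so R2 already fires inside $\cE_{\cI_1}(G)$) or $a \to b \in R(G,\cI_1)$ (so $e$ was in the union after all); either option closes the case in one line. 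Without this chain-graph property, or something equivalent to it, your split case cannot be completed.
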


We define $R^{-1}_1(G, a \to b) \subseteq V$ and $R^{-1}_k(G, a \to b) \subseteq 2^V$ to refer to interventions orienting an arc $a \to b$:
\begin{align*}
R^{-1}_1(G, a \to b) &= \{v \in V: a \to b \in R(G,v)\}\\
R^{-1}_k(G, a \to b) &= \{I \subseteq V: |I| \leq k, a \to b \in R(G,I)\}
\end{align*}
For any oriented arc $a \to b \in A$, we let $R^{-1}_1(a \to b) = V$ and $R^{-1}_k(a \to b) = \{I \subseteq V: |I| \leq k\}$.
For any subset $S \subseteq E$, we denote $R(G,S) \subseteq E$ as the set of oriented arcs in the essential graph of $G$ if we orient $S$, along with the v-structure arcs in $G$, then apply Meek rules till convergence.
In particular, when $S = \{(u,v) : u \in \cI \text{ or } v \in \cI\} \subseteq E$ is the set of incident edges to some vertex set $\cI \subseteq V$, then $R(G,S) = R(G,\cI)$ are precisely the oriented arcs in the interventional essential graph $\cE_{\cI}(G)$.
Furthermore, if $S$ is a \emph{superset} of the set of incident edges to some vertex set $\cI \subseteq V$, then $R(G,\cI) \subseteq R(G,S)$.
When we use the $R(G, \cdot)$ notation, we will be explicit about its type -- whether $\cdot$ is a subset of vertices $V$, a subset of a subset of vertices $2^V$, or a subset of edges $E$.

\subsection{Hasse diagrams}

\begin{definition}[Directed Hasse diagram]
Any poset $(\cX, \leq)$ can be \emph{uniquely} represented by a \emph{directed Hasse diagram} $H_{(X,\leq)}$, a directed graph where each element in $\cX$ is a vertex and
there is an arc $y \to x$ whenever $y$ covers $x$ for any two elements $x,y \in \cX$.
We call these arcs as \emph{Hasse arcs}.
\end{definition}

Any DAG $G = (V,E)$ induces a poset on the vertices $V$ with respect to the ancestral relationships in the graph: $x \leq_{\Anc} y$ whenever $x \in \Anc[y]$.
Since ``covers'' correspond to ``direct children'' for DAGs, we will say ``$y$ is a direct child of $x$'' instead of ``$x$ covers $y$'' to avoid confusion with the notion of covered edges.
We will use $H_G = H_{(V, \leq_{\Anc})}$ to denote the Hasse diagram corresponding to a DAG $G$.
The Hasse diagram $H_G$ can be computed in polynomial time \cite{aho1972transitive} and may have multiple roots (vertices without incoming arcs) in general.
Background on posets and related notions are given in \cref{sec:appendix-hasse}.

\subsection{Related work}

As discussed in \cref{sec:introduction}, the most relevant prior work to ours is \cite{choo2022verification} where they studied the problems of verification and search under the \emph{special case} of $T = E$.

Other related works using non-atomic interventions include:
\cite{hu2014randomized} showed that $G^*$ can be identified using $\cO(\log(\log(n)))$ unbounded randomized interventions in expectation;
\cite{shanmugam2015learning} showed that $\cO(\frac{n}{k} \log(\log(k)))$ bounded sized interventions suffices.

Other related works in the setting of additive vertex costs include:
\cite{ghassami2018budgeted} studied the problem of maximizing the number of oriented edges given a budget of atomic interventions;
\cite{kocaoglu2017cost,lindgren2018experimental} studied the problem of finding a minimum cost (bounded size) intervention set to identify $G^*$;
\cite{lindgren2018experimental} showed that computing the minimum cost intervention set is NP-hard and gave search algorithms with constant approximation factors.

\section{Results}
\label{sec:results}

Here, we present our main results for the subset verification and subset search problems: we provide an efficient algorithm to compute a minimal subset verifying set for any given subset of target edges and show asymptotically matching upper and lower bounds for subset search.

In \cref{sec:sufficient-to-study-without-v-structures}, we show that it suffices to study the subset search and verification problems on DAGs without v-structures.
Then, in \cref{sec:hasse}, we consider DAGs without v-structures and show several interesting properties regarding their Hasse diagram $H_G$.
In \cref{sec:atomic}, we use structural properties on $H_G$ to show that the subset verification problem is equivalent to another problem called \emph{interval stabbing on a rooted tree}, which we solve in \cref{sec:interval-stabbing-on-a-tree}.
By further exploiting the structural properties of $H_G$, we show how to extend our subset verification results to the settings of non-atomic interventions and additive vertex costs in \cref{sec:nonatomic}.
Finally, we present our results for the subset search problem in \cref{sec:search}.

We believe that the properties presented in \cref{sec:sufficient-to-study-without-v-structures} and \cref{sec:hasse} are of independent interest and have applications beyond the subset verification and search problems.

\subsection{Sufficient to study DAGs without v-structures}
\label{sec:sufficient-to-study-without-v-structures}

Here, we state some structural properties of interventional essential graphs $\cE_{\cI}(G)$. These properties enable us to ignore v-structures and justify the study of the subset verification and search problems solely on DAGs without v-structures. Recall the observational essential graph $\cE(G)$ is an interventional essential graph for $\cI = {\emptyset}$.

\begin{definition}[Oriented subgraphs and recovered parents]
For any interventional set $\cI \subseteq 2^V$ and $u \in V$, define $G^{\cI} = G[E \setminus R(G,\cI)]$ as the \emph{fully directed} subgraph DAG induced by the \emph{unoriented arcs} in $G$ and $\Pa_{G,\cI}(u) = \{x \in V: x \to u \in R(G,\cI)\}$ as the recovered parents of $u$ by $\cI$.
\end{definition}

\begin{restatable}[Properties of interventional essential graphs]{theorem}{interventionalessentialgraphproperties}
\label{thm:properties}
Consider a DAG $G = (V,E)$ and intervention sets $\cA, \cB \subseteq 2^V$.
Then, the following statements are true:
\begin{enumerate}
    \item $skel(G^{\cA})$ is exactly the chain components of $\cE_{\cA}(G)$.
    \item $G^{\cA}$ does not have new v-structures.\footnote{
    While classic results \cite{andersson1997characterization,hauser2012characterization} tell us that chain components of interventional essential graphs are chordal, i.e.\ $\cE(G)[E \setminus A]$ is a chordal graph, it is not immediately obvious why such edge-induced subgraphs cannot have v-structures in any of the DAGs compatible with $\cE(G)$.
    Here, we formalize this fact.
    }
    \item For any two vertices $u$ and $v$ in the same chain component of $\cE_{\cA}(G)$, we have $\Pa_{G,\cA}(u) = \Pa_{G,\cA}(v)$.
    \item If the arc $u \to v \in R(G,\cA)$, then $u$ and $v$ belong to different chain components of $\cE_{\cA}(G)$.
    \item Any acyclic completion of $\cE(G^{\cA})$ that does not form new v-structures can be combined with $R(G,\cA)$ to obtain a valid DAG that belongs to both $\cE(G)$ and $\cE_{\cA}(G)$.\footnote{Stated in a different language in \cite[Proposition 16]{hauser2012characterization}.}
    \item $R(G^{\cA},\cB) = R(G,\cB) \setminus R(G,\cA)$.
    \item $R(G,\cA \cup \cB) = R(G^{\cA},\cB) \;\dot\cup\; R(G,\cA)$.
    \item \raggedright $R(G,\cA \cup \cB) = R(G^{\cA},\cB) \;\dot\cup\ R(G^{\cB},\cA) \;\dot\cup\; (R(G,\cA) \cap R(G,\cB))$.
    \item $R(G,\emptyset)$ does not contain any covered edge of $G$.
\end{enumerate}
\end{restatable}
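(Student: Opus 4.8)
The plan is to read statement 9 through the characterization of the observational essential graph. By definition $R(G,\emptyset) = A(\cE(G))$, and an arc lies in $A(\cE(G))$ exactly when it is oriented the same way in \emph{every} DAG of the Markov equivalence class $[G]$; equivalently, an edge $u \sim v$ stays unoriented iff some $G_1,G_2 \in [G]$ orient it oppositely. So to show that a covered edge of $G$ is never in $R(G,\emptyset)$, it suffices, assuming WLOG that $u \to v$ in $G$, to produce a DAG $G' \in [G]$ in which $v \to u$. I would take $G'$ to be $G$ with the single arc $u \to v$ reversed to $v \to u$, and verify $G' \in [G]$. Since $G$ and $G'$ orient $u \sim v$ in opposite directions, the edge is undirected in $\cE(G)$, giving the claim.

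Because two DAGs are Markov equivalent iff they share a skeleton and v-structures, there are three things to check about $G'$: acyclicity, identical skeleton, and identical v-structures. The skeleton is immediate, since reversing an arc leaves the undirected edge set unchanged. First I would unpack the covered-edge hypothesis: as $u \to v$, acyclicity forces $v \notin \Pa(u)$ and $u \in \Pa(v)$, so $\Pa(u) \setminus \{v\} = \Pa(v) \setminus \{u\}$ collapses to the single identity $\Pa(v) = \Pa(u) \,\dot\cup\, \{u\}$, which drives the rest of the argument. For acyclicity, I would show $G$ has no directed path from $u$ to $v$ other than the arc itself: if $p$ is the last vertex before $v$ on such a path, then $p \in \Pa(v) \setminus \{u\} = \Pa(u)$, producing a cycle $u \rightsquigarrow p \to u$ in $G$, a contradiction; hence reversing to $v \to u$ creates no directed cycle.

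The main work is the v-structure equality. Since only the $u$–$v$ arc changes, the only v-structures that can appear or disappear are those centered at $u$ or at $v$: a v-structure centered at any other node $c$ does not use this arc as one of its two incoming arrows, and cannot have $\{u,v\}$ as its non-adjacent pair since $u \sim v$. Recording the new parent sets $\Pa_{G'}(v) = \Pa(u)$ and $\Pa_{G'}(u) = \Pa(u) \,\dot\cup\, \{v\}$, I would run a short case analysis built on the identity above: (a) $G$ has no v-structure at $v$ involving $u$, because any other parent $b$ of $v$ lies in $\Pa(u)$ and is therefore adjacent to $u$; (b) v-structures at $v$ using two parents from $\Pa(u)$ correspond bijectively to those in $G'$; symmetrically (c) $G'$ has no v-structure at $u$ involving $v$, since any other parent of $u$ lies in $\Pa(v)$ and is adjacent to $v$; and (d) the remaining v-structures at $u$ are untouched. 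This parent-adjacency dichotomy is precisely where the covered-edge identity is consumed.

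I expect the main obstacle to be bookkeeping the non-adjacency conditions $a \not\sim b$ across the reversal so that the bijection in the v-structure case analysis is clean; the acyclicity and skeleton parts are routine by comparison. The statement is the classical covered-edge reversal fact of \cite{chickering2013transformational}, reinterpreted here as the assertion that observational data cannot orient a covered edge.
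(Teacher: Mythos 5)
Your proposal proves only statement 9 of the theorem; statements 1--8 are never addressed. \Cref{thm:properties} is a nine-part result, and the bulk of the paper's proof is devoted to the parts you skip: statements 1--5 rest on the triangle lemma (\cref{lem:triangle-lemma}, that a triangle in an interventional essential graph cannot have exactly one oriented arc, which follows from the chain-graph property of $\cE_{\cI}(G)$), and statements 6--8 are proved by case analyses over Meek rule configurations combined with \cref{lem:recovered-union}. None of that machinery, nor any substitute for it, appears in your write-up, so as a proof of the stated theorem it has a decisive gap: eight of the nine claims are unproven, including the ones (6--8) that the rest of the paper actually leans on to reduce verification and search to v-structure-free DAGs.

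The one part you did prove is correct, and it takes a genuinely different route from the paper. For statement 9 the paper simply observes that covered edges are not v-structure edges and cites \cite[Lemma 27]{choo2022verification} for the fact that Meek rules never orient a covered edge. You instead prove the covered-edge reversal theorem of \cite{chickering2013transformational} from scratch: reversing a covered edge $u \to v$ preserves acyclicity, skeleton, and v-structures, hence yields $G' \in [G]$ orienting the edge oppositely, so the edge is undirected in $\cE(G)$ and thus absent from $R(G,\emptyset) = A(\cE(G))$. Your details check out: the identity $\Pa(v) = \Pa(u) \,\dot\cup\, \{u\}$ correctly unpacks the covered-edge hypothesis; the acyclicity argument via the last vertex $p$ on an alternative directed $u$--$v$ path (which forces $p \in \Pa(u)$ and a cycle) is sound; and the four-case v-structure bookkeeping, with the observation that v-structures centered outside $\{u,v\}$ are untouched, is complete. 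This self-contained argument is a merit relative to the paper's citation-based one sentence, but it covers one-ninth of the statement you were asked to prove.
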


From prior work \cref{lem:recovered-union}, we have $R(G, \cA \cup \cB) = R(G, \cA) \cup R(G, \cB)$ for any two interventions $\cA$ and $\cB$.
Informally, this means that combining prior \emph{orientations} will not trigger Meek rules.
Meanwhile, \cref{thm:properties} says that the \emph{adjacencies} will also not, thus we can simplify the causal graphs by removing any oriented edges before performing further interventions.
\cref{fig:interventional-essential-graph} gives an illustration.

An important implication of \cref{thm:properties} for verification and search problems is that it suffices to solve these problems only on DAGs without v-structures.
As any oriented arcs in the observational graph can be removed \emph{before performing any interventions}, the optimality of the solution is unaffected since $R(G,\cI) = R(G^{\emptyset},\cI) \;\dot\cup\; R(G, \emptyset)$, where $G^{\emptyset}$ is the graph obtained after removing all the oriented arcs in the observational essential graph due to v-structures.

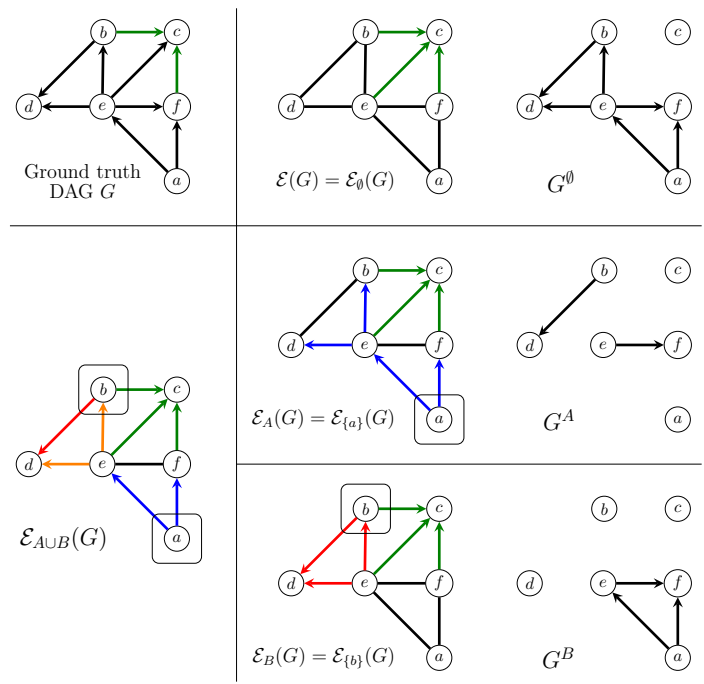
\begin{figure}[h]
\centering
\resizebox{0.55\linewidth}{!}{%
\begin{tikzpicture}
%
%
\node[draw, circle, minimum size=15pt, inner sep=2pt] at (-0.5,0) (Gc) {\normalsize $c$};
\node[draw, circle, minimum size=15pt, inner sep=2pt, left=of Gc] (Gb) {\normalsize $b$};
\node[draw, circle, minimum size=15pt, inner sep=2pt, below=of Gc] (Gf) {\normalsize $f$};
\node[draw, circle, minimum size=15pt, inner sep=2pt, left=of Gf] (Ge) {\normalsize $e$};
\node[draw, circle, minimum size=15pt, inner sep=2pt, left=of Ge] (Gd) {\normalsize $d$};
\node[draw, circle, minimum size=15pt, inner sep=2pt, below=of Gf] (Ga) {\normalsize $a$};
\draw[ultra thick, -stealth] (Ga) -- (Ge);
\draw[ultra thick, -stealth] (Ga) -- (Gf);
\draw[ultra thick, green!50!black, -stealth] (Gb) -- (Gc);
\draw[ultra thick, -stealth] (Gb) -- (Gd);
\draw[ultra thick, -stealth] (Ge) -- (Gb);
\draw[ultra thick, -stealth] (Ge) -- (Gc);
\draw[ultra thick, -stealth] (Ge) -- (Gd);
\draw[ultra thick, -stealth] (Ge) -- (Gf);
\draw[green!50!black, ultra thick, -stealth] (Gf) -- (Gc);
\node[align=center, left=10pt of Ga] {\large Ground truth\\ \large DAG $G$};

%
%
\node[draw, circle, minimum size=15pt, inner sep=2pt] at (5,0) (EGc) {\normalsize $c$};
\node[draw, circle, minimum size=15pt, inner sep=2pt, left=of EGc] (EGb) {\normalsize $b$};
\node[draw, circle, minimum size=15pt, inner sep=2pt, below=of EGc] (EGf) {\normalsize $f$};
\node[draw, circle, minimum size=15pt, inner sep=2pt, left=of EGf] (EGe) {\normalsize $e$};
\node[draw, circle, minimum size=15pt, inner sep=2pt, left=of EGe] (EGd) {\normalsize $d$};
\node[draw, circle, minimum size=15pt, inner sep=2pt, below=of EGf] (EGa) {\normalsize $a$};
\draw[ultra thick] (EGa) -- (EGe);
\draw[ultra thick] (EGa) -- (EGf);
\draw[green!50!black, ultra thick, -stealth] (EGb) -- (EGc);
\draw[ultra thick] (EGb) -- (EGd);
\draw[ultra thick] (EGe) -- (EGb);
\draw[green!50!black, ultra thick, -stealth] (EGe) -- (EGc);
\draw[ultra thick] (EGe) -- (EGd);
\draw[ultra thick] (EGe) -- (EGf);
\draw[green!50!black, ultra thick, -stealth] (EGf) -- (EGc);
\node[text centered, left=15pt of EGa] {\large $\cE(G) = \cE_{\emptyset}(G)$};

%
%
\node[draw, circle, minimum size=15pt, inner sep=2pt] at (10,0) (G-empty-c) {\normalsize $c$};
\node[draw, circle, minimum size=15pt, inner sep=2pt, left=of G-empty-c] (G-empty-b) {\normalsize $b$};
\node[draw, circle, minimum size=15pt, inner sep=2pt, below=of G-empty-c] (G-empty-f) {\normalsize $f$};
\node[draw, circle, minimum size=15pt, inner sep=2pt, left=of G-empty-f] (G-empty-e) {\normalsize $e$};
\node[draw, circle, minimum size=15pt, inner sep=2pt, left=of G-empty-e] (G-empty-d) {\normalsize $d$};
\node[draw, circle, minimum size=15pt, inner sep=2pt, below=of G-empty-f] (G-empty-a) {\normalsize $a$};
\draw[ultra thick, -stealth] (G-empty-a) -- (G-empty-e);
\draw[ultra thick, -stealth] (G-empty-a) -- (G-empty-f);
\draw[ultra thick, -stealth] (G-empty-b) -- (G-empty-d);
\draw[ultra thick, -stealth] (G-empty-e) -- (G-empty-b);
\draw[ultra thick, -stealth] (G-empty-e) -- (G-empty-d);
\draw[ultra thick, -stealth] (G-empty-e) -- (G-empty-f);
\node[text centered, left=50pt of G-empty-a] {\Large $G^{\emptyset}$};

%
%
\node[draw, circle, minimum size=15pt, inner sep=2pt] at (-0.5,-7.5) (EABGc) {\normalsize $c$};
\node[draw, circle, minimum size=15pt, inner sep=2pt, left=of EABGc] (EABGb) {\normalsize $b$};
\node[draw, circle, minimum size=15pt, inner sep=2pt, below=of EABGc] (EABGf) {\normalsize $f$};
\node[draw, circle, minimum size=15pt, inner sep=2pt, left=of EABGf] (EABGe) {\normalsize $e$};
\node[draw, circle, minimum size=15pt, inner sep=2pt, left=of EABGe] (EABGd) {\normalsize $d$};
\node[draw, circle, minimum size=15pt, inner sep=2pt, below=of EABGf] (EABGa) {\normalsize $a$};
\draw[blue, ultra thick, -stealth] (EABGa) -- (EABGe);
\draw[blue, ultra thick, -stealth] (EABGa) -- (EABGf);
\draw[green!50!black, ultra thick, -stealth] (EABGb) -- (EABGc);
\draw[red, ultra thick, -stealth] (EABGb) -- (EABGd);
\draw[orange, ultra thick, -stealth] (EABGe) -- (EABGb);
\draw[green!50!black, ultra thick, -stealth] (EABGe) -- (EABGc);
\draw[orange, ultra thick, -stealth] (EABGe) -- (EABGd);
\draw[ultra thick] (EABGe) -- (EABGf);
\draw[green!50!black, ultra thick, -stealth] (EABGf) -- (EABGc);
\node[draw, fit=(EABGa), rounded corners, inner sep=7pt] {};
\node[draw, fit=(EABGb), rounded corners, inner sep=7pt] {};
\node[text centered, left=30pt of EABGa] {\Large $\cE_{A \cup B}(G)$};

%
%
\node[draw, circle, minimum size=15pt, inner sep=2pt] at (5,-5) (EAGc) {\normalsize $c$};
\node[draw, circle, minimum size=15pt, inner sep=2pt, left=of EAGc] (EAGb) {\normalsize $b$};
\node[draw, circle, minimum size=15pt, inner sep=2pt, below=of EAGc] (EAGf) {\normalsize $f$};
\node[draw, circle, minimum size=15pt, inner sep=2pt, left=of EAGf] (EAGe) {\normalsize $e$};
\node[draw, circle, minimum size=15pt, inner sep=2pt, left=of EAGe] (EAGd) {\normalsize $d$};
\node[draw, circle, minimum size=15pt, inner sep=2pt, below=of EAGf] (EAGa) {\normalsize $a$};
\draw[blue, ultra thick, -stealth] (EAGa) -- (EAGe);
\draw[blue, ultra thick, -stealth] (EAGa) -- (EAGf);
\draw[green!50!black, ultra thick, -stealth] (EAGb) -- (EAGc);
\draw[ultra thick] (EAGb) -- (EAGd);
\draw[blue, ultra thick, -stealth] (EAGe) -- (EAGb);
\draw[green!50!black, ultra thick, -stealth] (EAGe) -- (EAGc);
\draw[blue, ultra thick, -stealth] (EAGe) -- (EAGd);
\draw[ultra thick] (EAGe) -- (EAGf);
\draw[green!50!black, ultra thick, -stealth] (EAGf) -- (EAGc);
\node[draw, fit=(EAGa), rounded corners, inner sep=7pt] {};
\node[text centered, left=15pt of EAGa] {\large $\cE_{A}(G) = \cE_{\{a\}}(G)$};

%
%
\node[draw, circle, minimum size=15pt, inner sep=2pt] at (10,-5) (G-A-c) {\normalsize $c$};
\node[draw, circle, minimum size=15pt, inner sep=2pt, left=of G-A-c] (G-A-b) {\normalsize $b$};
\node[draw, circle, minimum size=15pt, inner sep=2pt, below=of G-A-c] (G-A-f) {\normalsize $f$};
\node[draw, circle, minimum size=15pt, inner sep=2pt, left=of G-A-f] (G-A-e) {\normalsize $e$};
\node[draw, circle, minimum size=15pt, inner sep=2pt, left=of G-A-e] (G-A-d) {\normalsize $d$};
\node[draw, circle, minimum size=15pt, inner sep=2pt, below=of G-A-f] (G-A-a) {\normalsize $a$};
\draw[ultra thick, -stealth] (G-A-b) -- (G-A-d);
\draw[ultra thick, -stealth] (G-A-e) -- (G-A-f);
\node[text centered, left=50pt of G-A-a] {\Large $G^{A}$};

%
%
\node[draw, circle, minimum size=15pt, inner sep=2pt] at (5,-10) (EBGc) {\normalsize $c$};
\node[draw, circle, minimum size=15pt, inner sep=2pt, left=of EBGc] (EBGb) {\normalsize $b$};
\node[draw, circle, minimum size=15pt, inner sep=2pt, below=of EBGc] (EBGf) {\normalsize $f$};
\node[draw, circle, minimum size=15pt, inner sep=2pt, left=of EBGf] (EBGe) {\normalsize $e$};
\node[draw, circle, minimum size=15pt, inner sep=2pt, left=of EBGe] (EBGd) {\normalsize $d$};
\node[draw, circle, minimum size=15pt, inner sep=2pt, below=of EBGf] (EBGa) {\normalsize $a$};
\draw[ultra thick] (EBGa) -- (EBGe);
\draw[ultra thick] (EBGa) -- (EBGf);
\draw[green!50!black, ultra thick, -stealth] (EBGb) -- (EBGc);
\draw[red, ultra thick, -stealth] (EBGb) -- (EBGd);
\draw[red, ultra thick, -stealth] (EBGe) -- (EBGb);
\draw[green!50!black, ultra thick, -stealth] (EBGe) -- (EBGc);
\draw[red, ultra thick, -stealth] (EBGe) -- (EBGd);
\draw[ultra thick] (EBGe) -- (EBGf);
\draw[green!50!black, ultra thick, -stealth] (EBGf) -- (EBGc);
\node[draw, fit=(EBGb), rounded corners, inner sep=7pt] {};
\node[text centered, left=15pt of EBGa] {\large $\cE_{B}(G) = \cE_{\{b\}}(G)$};

%
%
\node[draw, circle, minimum size=15pt, inner sep=2pt] at (10,-10) (G-B-c) {\normalsize $c$};
\node[draw, circle, minimum size=15pt, inner sep=2pt, left=of G-B-c] (G-B-b) {\normalsize $b$};
\node[draw, circle, minimum size=15pt, inner sep=2pt, below=of G-B-c] (G-B-f) {\normalsize $f$};
\node[draw, circle, minimum size=15pt, inner sep=2pt, left=of G-B-f] (G-B-e) {\normalsize $e$};
\node[draw, circle, minimum size=15pt, inner sep=2pt, left=of G-B-e] (G-B-d) {\normalsize $d$};
\node[draw, circle, minimum size=15pt, inner sep=2pt, below=of G-B-f] (G-B-a) {\normalsize $a$};
\draw[ultra thick, -stealth] (G-B-a) -- (G-B-e);
\draw[ultra thick, -stealth] (G-B-a) -- (G-B-f);
\draw[ultra thick, -stealth] (G-B-e) -- (G-B-f);
\node[text centered, left=50pt of G-B-a] {\Large $G^{B}$};

%
%
\draw ($(EGa)!0.5!(EAGc) + (-9,0)$) -- ($(EGa)!0.5!(EAGc) + (5.5,0)$);
\draw ($(EAGa)!0.5!(EBGc) + (-4.25,0)$) -- ($(EAGa)!0.5!(EBGc) + (5.5,0)$);
\draw ($(EGc) + (-4.25,0.5)$) -- ($(EBGa) + (-4.25,-0.5)$);
\end{tikzpicture}
}
\caption{
Example for \cref{thm:properties}.
Here, recovered edges $R(G, \cdot)$ are colored while the black edges are the hidden arc directions.
Since $b \to c \gets f$ is a v-structure in $G$, these edges are oriented in the observational essential graph $\cE(G)$ and so Meek rule R3 orients $e \to c$ in $\cE(G)$.
Intervening on $A = \{a\}$ orients $\{a \to e, a \to f\}$ and Meek rule R1 further orients $\{e \to b, e \to d\}$.
Intervening on $B = \{b\}$ orients $\{e \to b, b \to d\}$ and Meek rule R2 further orients $\{e \to d\}$.
Observe that $R(G^A, B) = \{b \to d\}$, $R(G^B, A) = \{a \to e, a \to f\}$, and $R(G,A) \cap R(G,B) \setminus R(G, \emptyset) = \{e \to b, e \to d\}$.
}
\label{fig:interventional-essential-graph}
\end{figure}

\subsection{Hasse diagrams of DAGs without v-structures}
\label{sec:hasse}

Here, we show some interesting properties of the Hasse diagrams for DAGs without v-structures.
\begin{restatable}{lemma}{vstructfreeiffHasseisatree}
\label{lem:v-struct-free-iff-Hasse-is-a-tree}
A DAG $G = (V,E)$ is a single connected component without v-structures if and only if the Hasse diagram $H_G$ is a directed tree with a unique root vertex.
\end{restatable}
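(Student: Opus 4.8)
The plan is to observe that $H_G$ is nothing other than the transitive reduction of $G$: the covering relation of the ancestral poset $\leq_{\Anc}$ keeps an arc $u \to v$ precisely when $u \in \Pa(v)$ and there is no intermediate vertex $z \in \Des(u) \cap \Anc(v)$, so $A(H_G) \subseteq E$ and $H_G$ has the same reachability — hence the same weak connectivity — as $G$. With this reframing the whole lemma reduces to controlling the number of \emph{Hasse-parents} (immediate $\leq_{\Anc}$-predecessors) of each vertex, i.e.\ the in-degree in $H_G$.

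For the forward direction I would assume $G$ is weakly connected with no v-structures and first show every vertex has at most one Hasse-parent. Suppose $w$ had two distinct Hasse-parents $u_1, u_2$. Being immediate predecessors, $u_1 \to w$ and $u_2 \to w$ are arcs of $G$ with no vertex strictly between; if $u_1 \sim u_2$ then acyclicity forces, say, $u_1 \to u_2$, but then $u_2 \in \Des(u_1) \cap \Anc(w)$ is an intermediate vertex, contradicting that $u_1$ is a Hasse-parent of $w$. Hence $u_1 \not\sim u_2$, so $u_1 \to w \gets u_2$ is a v-structure — a contradiction. Thus $H_G$ is acyclic, weakly connected, and has in-degree at most one everywhere; a standard edge count ($\sum_v \deg^-_{H_G}(v) \leq n$, weak connectivity forcing at least $n-1$ arcs, and acyclicity ruling out the all-in-degree-one case, which would be a cycle-containing functional graph) gives exactly $n-1$ arcs and a unique in-degree-zero vertex, i.e.\ an arborescence with a unique root.

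For the backward direction, assume $H_G$ is a directed tree with a unique root. Weak connectivity of $G$ is immediate since $A(H_G) \subseteq E$ and a tree is connected, so it remains only to rule out v-structures, and this is the step I expect to be the main obstacle. The natural route is the contrapositive: from a v-structure $u \to w \gets v$ with $u \not\sim v$, exhibit a vertex of in-degree $\geq 2$ in $H_G$, contradicting the tree structure. The clean subcase is when $u$ and $v$ are $\leq_{\Anc}$-incomparable, where both sit immediately below $w$ and so $w$ genuinely acquires two Hasse-parents. The delicate subcase — the real content of the converse — is when the two collider-parents are ancestrally comparable (say $u \in \Anc(v)$): then $u \to w$ need not be a Hasse arc, so one cannot simply read off two Hasse-parents of $w$, and the contradiction must instead be drawn by tracing the $\leq_{\Anc}$-relations among $u$, $v$, $w$ back to the unique root. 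Arguing carefully that a genuine unshielded collider cannot persist while $H_G$ remains a single-rooted tree is where the subtlety concentrates, and it is the part I would budget the most care for.
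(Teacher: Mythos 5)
Your forward direction is complete and is essentially the paper's own argument: if a vertex had two distinct Hasse-parents, they are either adjacent, which contradicts the covering relation, or non-adjacent, which creates a v-structure; in-degree at most one together with weak connectivity and acyclicity then forces an arborescence with a unique root. The paper leaves the final counting step implicit, but otherwise the two proofs coincide.

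The backward direction is where your proposal stops being a proof, and the gap you flagged is not one that more care can close: the implication is \emph{false} in precisely the subcase you deferred. Take $V = \{u,w,v,x\}$ with arcs $u \to w$, $w \to v$, $v \to x$ and $u \to x$. Here $u \to x \gets v$ is a v-structure, since $u$ and $v$ are non-adjacent (there is no arc $u \to v$, even though $u \in \Anc(v)$ through $w$). Yet the ancestral poset is the chain $u <_{\Anc} w <_{\Anc} v <_{\Anc} x$, so $H_G$ is the directed path $u \to w \to v \to x$: a directed tree with a unique root. The shortcut arc $u \to x$ is transitively implied and hence absent from $H_G$, which is exactly how a v-structure whose collider-parents are comparable hides from the tree structure; no argument ``tracing the $\leq_{\Anc}$-relations back to the root'' can exist. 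For what it is worth, the paper's own proof of this direction commits the error you were circling: it extracts Hasse paths $P_1, P_2$ from the two collider-parents into $x$ and claims they yield two distinct paths from a common ancestor to $x$, but when the parents are comparable both paths funnel through the unique tree-parent of $x$ (in the example above both end $\ldots \to v \to x$) and merge, so no contradiction arises. Your instinct that the comparable case is ``the real content of the converse'' was exactly right --- it is where the converse, and the paper's proof of it, break. (Only the forward implication is ever used in the paper, e.g.\ to define rooted subtrees, so the damage does not propagate.)

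A smaller point: in your ``clean'' incomparable subcase, the claim that both collider-parents sit immediately below $w$ is also not right as stated --- an intermediate vertex $z \in \Des(u) \cap \Anc(w)$ can prevent $u$ from being a Hasse-parent of $w$. That subcase is nevertheless salvageable by a cleaner argument: in a rooted tree the ancestors of any vertex form a chain, so two incomparable ancestors of $w$ already contradict tree-ness, wherever the in-degree-two vertex actually sits.
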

As it is known \cite[Lemma 23]{hauser2012characterization} that any DAG without v-structures whose skeleton is a connected chordal graph has exactly one source vertex, \cref{lem:v-struct-free-iff-Hasse-is-a-tree} is not entirely surprising.
However, it enables us to properly define the notion of rooted subtrees in a Hasse diagram.

\begin{definition}[Rooted subtree]
\label{def:rooted-subtree}
Let $H_G$ be a Hasse diagram of a single component DAG $G = (V,E)$ without v-structures.
By \cref{lem:v-struct-free-iff-Hasse-is-a-tree}, $H_G$ is a rooted tree.
For any vertex $y \in V$, the rooted subtree $T_y$ has vertices $V(T_y) = \{u \in V: y \in \Anc[u]\}$ and edges $E(T_y) = \{a \to b : a,b \in V(T_y)\}$.
See \cref{fig:rooted-subtree} for an illustration.
\end{definition}

\begin{figure}[h]
\centering
\resizebox{0.3\linewidth}{!}{%
\begin{tikzpicture}
\node[draw, circle, minimum size=15pt, inner sep=2pt] at (0,0) (root) {\small $r$};
\node[draw, circle, minimum size=15pt, inner sep=2pt] at ($(root) + (0,-2)$) (paw) {$z$};
\node[draw, circle, minimum size=15pt, inner sep=2pt] at ($(paw) + (0,-1)$) (w) {\small $w$};
\node[draw, circle, minimum size=15pt, inner sep=2pt] at ($(w) + (-2,-1)$) (w-child1) {\small $a$};
\node[draw, circle, minimum size=15pt, inner sep=2pt] at ($(w) + (2,-1)$) (w-child2) {\small $b$};
\node[draw, circle, minimum size=15pt, inner sep=2pt] at ($(w) + (0,-1)$) (y) {\small $y$};
\node[draw, circle, minimum size=15pt, inner sep=2pt] at ($(y) + (0,-1.25)$) (c) {\small $c$};

\node[draw, shape border uses incircle, isosceles triangle, shape border rotate=90] at ($(root) + (-2,-1)$) (root-subtree1) {};
\node[draw, shape border uses incircle, isosceles triangle, shape border rotate=90] at ($(root) + (-1,-1)$) (root-subtree2) {};
\node[draw, shape border uses incircle, isosceles triangle, shape border rotate=90] at ($(root) + (1,-1)$) (root-subtree3) {};
\node[draw, shape border uses incircle, isosceles triangle, shape border rotate=90] at ($(root) + (2,-1)$) (root-subtree4) {};
\node[draw, shape border uses incircle, isosceles triangle, shape border rotate=90] at ($(w-child1) + (0,-1.25)$) (w-subtree1) {};
\node[draw, shape border uses incircle, isosceles triangle, shape border rotate=90] at ($(w-child2) + (0,-1.25)$) (w-subtree2) {};
\node[draw, shape border uses incircle, isosceles triangle, shape border rotate=90] at ($(y) + (-1,-1.25)$) (y-subtree1) {};
\node[draw, shape border uses incircle, isosceles triangle, shape border rotate=90] at ($(y) + (1,-1.25)$) (y-subtree2) {};
\node[draw, shape border uses incircle, isosceles triangle, shape border rotate=90] at ($(c) + (0,-1)$) (c-subtree) {};

\draw[thick, -stealth] (root) -- node[pos=0.4, fill=white, text=black] {$\vdots$} (paw);
\draw[thick, -stealth] (paw) -- (w);
\draw[thick, -stealth] (w) -- (y);
\draw[thick, -stealth] (y) -- (c);
\draw[dashed, thick, -stealth] (w) to[out=315,in=45] (c);
\draw[thick, -stealth] (root) -- (root-subtree1.north);
\draw[thick, -stealth] (root) -- (root-subtree2.north);
\draw[thick, -stealth] (root) -- (root-subtree3.north);
\draw[thick, -stealth] (root) -- (root-subtree4.north);
\draw[thick, -stealth] (w) to[out=180,in=90] (w-child1);
\draw[thick, -stealth] (w) to[out=0,in=90] (w-child2);
\draw[thick, -stealth] (w-child1) -- (w-subtree1.north);
\draw[thick, -stealth] (w-child2) -- (w-subtree2.north);
\draw[thick, -stealth] (y) -- (y-subtree1.north);
\draw[thick, -stealth] (y) -- (y-subtree2.north);
\draw[thick, -stealth] (c) -- (c-subtree.north);

\node[text centered, right=5pt of paw] {\small $z = \Pa(w)$};
\node[red, draw, fit=(root)(paw), rounded corners, label={[red,yshift=-25pt]left:{\small $\Anc(w)$}}] {};
\node[blue, draw, fit=(y)(w-child1)(w-child2), rounded corners, inner sep=6pt, label={[blue,xshift=60pt]left:{\small $\Ch(w)$}}] {};
\node[orange, draw, fit=(y)(c)(y-subtree1)(y-subtree2)(c-subtree), rounded corners, inner sep=10pt, label={[orange,xshift=30pt,yshift=20pt]below:{\small $T_y$}}] {};
\end{tikzpicture}
}
\caption{
A Hasse diagram $H_G$ of some DAG $G$ with root $r$ where triangles represent unexpanded subtrees.
For a vertex $w$, $\Anc(w)$ is the set of vertices along the unique path from $r$ to $w$ and $z = \Pa(w)$ is the vertex directly before $w$.
The direct children of $w$ are $\Ch(w) = \{a,b,y\}$.
If the arc $w \to c$ exists in $G$, it will \emph{not} appear in $H_G$ because $w \to y \to c$ exists, i.e.\ $c \not\in \Ch(w)$.
The rooted subtree $T_y$ at $y$ includes \emph{all} the nodes that have $y$ as an ancestor.
}
\label{fig:rooted-subtree}
\end{figure}
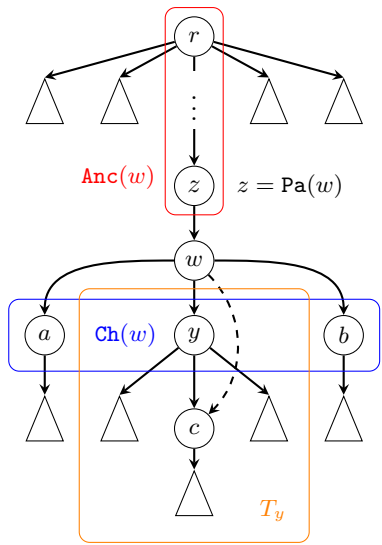

Using rooted subtrees, we prove several structural properties regarding the arc directions that are recovered by an atomic intervention, cumulating into \cref{thm:orienting-vertices-form-an-interval} which states that the set $R^{-1}_1(G, u \to v)$ of vertices whose intervention recovers $u \to v$ forms a consecutive sequence of vertices in some branch in the Hasse diagram $H_G$.

\begin{restatable}{theorem}{orientingverticesformaninterval}
\label{thm:orienting-vertices-form-an-interval}
Let $G = (V,E)$ be a DAG without v-structures and $u \to v$ be an unoriented arc in $\cE(G)$.
Then, $R^{-1}_1(G, u \to v) = \Des[w] \cap \Anc[v]$ for some $w \in \Anc[u]$.
\end{restatable}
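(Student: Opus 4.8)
The plan is to reduce to a single connected component (orientations in distinct chain components do not interact), so that by \cref{lem:v-struct-free-iff-Hasse-is-a-tree} the Hasse diagram $H_G$ is a rooted tree and the ancestor set $\Anc[v]$ is exactly the unique root-to-$v$ path, a chain $p_0, p_1, \dots, p_m = v$ with $u = p_j$ for some $j < m$. Writing $S = R^{-1}_1(G, u \to v)$, I will use the standard interventional-MEC characterization: intervening on $x$ fails to orient $u \to v$ iff there is a DAG $H \in [G]$ with $v \to u$ that agrees with $G$ on every edge incident to $x$. The theorem then splits into two claims: (i) $S \subseteq \Anc[v]$, and (ii) $S$ is a contiguous suffix of the chain $\Anc[v]$ ending at $v$. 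Granting both, I set $w$ to be the topmost (closest-to-root) element of $S$; since intervening on $u$ or on $v$ directly orients the incident edge $u \to v$ we have $u, v \in S$, so $w$ is an ancestor of $u$ (hence $w \in \Anc[u]$) and $S = \Des[w] \cap \Anc[v]$, as required.

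For claim (i), given $x \notin \Anc[v]$ I construct the witness $H$ explicitly. The induced subgraph $G[\Anc[v]]$ is connected (it contains the Hasse path), chordal, and has no v-structures, so its observational essential graph is fully unoriented; in particular the edge $u \sim v$ is reversible inside $G[\Anc[v]]$, giving a v-structure-free acyclic reorientation $H'$ of $G[\Anc[v]]$ with $v \to u$. I then define $H$ to equal $H'$ on $\Anc[v]$ and to equal $G$ on every other edge. The key observation is that every edge leaving $\Anc[v]$ is directed out of $\Anc[v]$ in $G$ (an edge pointing into an ancestor of $v$ would make its tail an ancestor of $v$ as well), so no directed cycle can cross the boundary and, at each vertex, the parent set is either entirely inside $\Anc[v]$ (a clique by v-structure-freeness of $H'$) or exactly as in $G$; hence $H \in [G]$. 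Since $x \notin \Anc[v]$, every edge incident to $x$ has an endpoint outside $\Anc[v]$ and is therefore untouched, so $H$ agrees with $G$ at $x$ and witnesses $x \notin S$.

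A useful byproduct of this construction is that, for any $p_i \in \Anc[v]$, intervening on $p_i$ orients $u \to v$ in $G$ if and only if it does so in the reduced graph $G[\Anc[v]]$ (the same cut-and-paste turns a reduced witness into a global one). This lets me prove claim (ii) entirely inside $G[\Anc[v]]$, where $v$ is the unique sink and the Hasse diagram is a path. There I will establish monotonicity: if intervening on $p_i$ orients $u \to v$, then so does intervening on the deeper vertex $p_{i+1}$. Equivalently, the set of vertices that fail to orient $u \to v$ is closed upward toward the root, which makes $S$ a suffix of the chain and completes the assembly above.

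I expect claim (ii) to be the main obstacle. A naive attempt -- take a witness $H$ for $p_{i+1} \notin S$ and restore the edges above $p_{i+1}$ to their $G$-orientation -- can fail, because reversing $u \to v$ may force an edge $p_l \to p_b$ (with $b > l$) to flip, and combining the flipped parent $p_b$ with a restored upper parent $p_a$ can create a new v-structure $p_a \to p_l \gets p_b$ when $p_a \not\sim p_b$. Controlling this Meek-rule cascade is precisely what the structural properties of the Hasse subtrees (developed en route to this theorem) are for: rather than tracking rule firings by hand, I will argue that the orientations recovered by intervening at $p_i$ are inherited by the deeper intervention at $p_{i+1}$, using the recovered-parent and chain-component identities of \cref{thm:properties} together with the fact that in a v-structure-free chordal graph every parent set is a clique. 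The cleanest route is likely a structural induction along the path $p_0, \dots, p_m$ that simultaneously produces the threshold vertex $w$ and the interval description $S = \Des[w] \cap \Anc[v]$.
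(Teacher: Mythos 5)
Your reduction to a connected component and your claim (i) are correct, and claim (i) is proved by a genuinely different route than the paper's: the paper gets $R^{-1}_1(G,u\to v)\subseteq \Anc[v]$ by induction over Meek-rule firings (\cref{lem:if-v-orients-then-v-to-b-path-exists}), whereas you build an explicit witness DAG in the $\{x\}$-interventional MEC by re-orienting $G[\Anc[v]]$ and gluing it to $G$; the observation that every boundary edge of $\Anc[v]$ points outward makes the glued graph acyclic and v-structure-free, so this argument is sound. The same cut-and-paste correctly reduces the whole problem to the chain $G[\Anc[v]]$. The assembly of the theorem from claims (i) and (ii) (take $w$ to be the topmost vertex of $S$) is also fine.

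The genuine gap is claim (ii), which is the heart of the theorem, and you do not prove it --- you only announce a plan (``I will argue\ldots'', ``the cleanest route is likely a structural induction\ldots''). Worse, the tool you propose is unsound as stated: ``the orientations recovered by intervening at $p_i$ are inherited by the deeper intervention at $p_{i+1}$'', i.e.\ $R(G,p_i)\subseteq R(G,p_{i+1})$, is false in general. Take the triangle $a\to b$, $b\to c$, $a\to c$ (no v-structures, Hasse path $a\to b\to c$): then $a\to b\in R(G,b)$ via the intervention itself, but $R(G,c)=\{a\to c,\, b\to c\}$, so $a\to b\notin R(G,c)$. Inheritance only holds when restricted to arcs whose \emph{both} endpoints lie in the subtree below $p_{i+1}$, and that restricted statement is exactly the paper's \cref{lem:if-w-orients-then-y-orients}, whose proof is a delicate Meek-rule analysis (partitioning the newly recovered arcs by where their endpoints sit relative to $w$, ordering arcs by when a rule fires, and checking the R3/R4 configurations), resting in turn on \cref{lem:intermediate-direct-arcs-exist} and on \cref{lem:middle} (the cascade argument showing every vertex strictly between $u$ and $v$ orients $u\to v$, which you also need and do not prove). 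Finally, \cref{thm:properties} cannot substitute for this work: its identities relate $R(G,\cA)$, $R(G,\cB)$, and $R(G^{\cA},\cB)$ for a \emph{fixed} pair of intervention sets, and say nothing comparing the outcomes of two different single-vertex interventions; so the cited ``recovered-parent and chain-component identities'' do not yield the monotonicity you need. In short: the easy containment is done (by a nice alternative argument), but the downward-closure step that makes $S$ an interval is missing, and the sketched route to it would fail without essentially re-deriving the paper's key lemmas.
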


By \cref{thm:orienting-vertices-form-an-interval}, we only need to intervene on some vertex within each sequence of Hasse arcs.
Meanwhile, \cref{lem:covered-edges-are-Hasse-edges} tells us that covered edges correspond directly to an interval involving only the endpoints.
Thus, we see that our subset verification algorithm is a non-trivial generalization of \cite{choo2022verification}.

\begin{restatable}{lemma}{coverededgesareHasseedges}
\label{lem:covered-edges-are-Hasse-edges}
If $G$ be a DAG without v-structures, then the covered edges of $G$ are a subset of the Hasse edges in $H_G$.
\end{restatable}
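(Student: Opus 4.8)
The plan is to argue by contradiction, showing that a covered edge that fails to be a Hasse edge would force a directed cycle in $G$. Fix a covered edge and let $u \to v$ denote its orientation in the DAG $G$. Since $u \to v$ is an arc we have $v \notin \Pa(u)$, so the covered-edge condition $\Pa(u) \setminus \{v\} = \Pa(v) \setminus \{u\}$ simplifies to $\Pa(u) = \Pa(v) \setminus \{u\}$; in words, every parent of $v$ other than $u$ is also a parent of $u$. Recalling that the Hasse arcs of $H_G$ are exactly the direct-child arcs, $u \to v$ is a Hasse edge precisely when $v \in \Ch(u)$, i.e.\ when there is no vertex $z \in \Des(u) \cap \Anc(v)$ with $z \notin \{u,v\}$. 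The translation of ``not a Hasse edge'' into ``there is an intermediate vertex'' is the pivot of the whole argument.

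Next I would suppose, for contradiction, that $u \to v$ is a covered edge but \emph{not} a Hasse edge. Then there exists some $z \in \Des(u) \cap \Anc(v)$ with $z \notin \{u,v\}$. Since $z \in \Anc(v)$ and $z \neq v$, there is a directed path from $z$ to $v$ of positive length; let $p$ be the vertex immediately preceding $v$ on this path, so that $p \to v$ is an arc and hence $p \in \Pa(v)$. Because $z \in \Des(u)$ and $p$ lies on a directed path emanating from $z$, we also get $p \in \Des(u)$, which in particular forces $p \neq u$. Now $p \in \Pa(v)$ together with $p \neq u$ lets me apply the simplified covered-edge condition to conclude $p \in \Pa(u)$, i.e.\ the arc $p \to u$ is present. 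But $p \in \Des(u)$ supplies a directed path from $u$ to $p$, and together with $p \to u$ this yields a directed cycle through $u$ and $p$, contradicting the acyclicity of $G$. Therefore every covered edge is a Hasse edge.

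The step I expect to be most delicate is the extraction of the parent $p$ of $v$ that simultaneously lies strictly below $u$ in $G$: one must justify both $p \in \Pa(v)$ and $p \in \Des(u)$ (and hence $p \neq u$) before the covered-edge identity can fold $p$ back into $\Pa(u)$ and close the cycle. Once this vertex is correctly identified, the rest is immediate. I would also note that acyclicity alone drives the argument; the absence of v-structures is the ambient assumption of this section (and is what makes $H_G$ a rooted tree via \cref{lem:v-struct-free-iff-Hasse-is-a-tree}), but it is not actually invoked in this particular proof, so the inclusion ``covered edges $\subseteq$ Hasse edges'' holds for arbitrary DAGs.
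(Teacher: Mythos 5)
Your proof is correct and is essentially the paper's argument: the paper proves the contrapositive (if $a \to b$ is not a Hasse arc, an intermediate vertex yields a parent $z$ of $b$ with $z \in \Des(a)$, hence $z \notin \Pa(a)$, so the edge is not covered), while you package the same extraction of a parent $p \in \Pa(v) \cap \Des(u)$ as a contradiction producing a directed cycle. Your version is in fact slightly more explicit than the paper's (which glosses over taking the last vertex on the path from the intermediate vertex to $b$), and your closing observation that acyclicity alone suffices -- so the inclusion holds for arbitrary DAGs -- matches the paper's proof, which likewise never invokes the absence of v-structures.
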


\subsection{Subset verification on DAGs without v-structures with atomic interventions}
\label{sec:atomic}

In this section, we show that the atomic subset verification problem is equivalent to the problem of interval stabbing on a rooted tree that we define next.
For a DAG $G$ without v-structures, let $H_G$ be its rooted Hasse tree.

For any rooted tree $\widehat{G} = (V,E)$, an ordered pair $[u,v]_{\widehat{G}} \in V \times V$ is called an \emph{interval} if $u \in \Anc(v)$.
If the graph is clear from context, we will drop the subscript $\widehat{G}$.
We say that a vertex $z \in V$ \emph{stabs} an interval $[u,v]$ if and only if $z \in \Des[u] \cap \Anc[v]$, and that a subset $S \subseteq V$ stabs $[a,b]$ if $S$ has a vertex that stabs it.

Interpreting \cref{thm:orienting-vertices-form-an-interval} with respect to the definition of an interval, we see that every edge $u \to v$ can be associated with some interval $[w,v]_{H_G}$, for some $w \in \Anc[u]$, such that $u \to v \in R(G, \cI)$ if and only if $\cI$ stabs $[w,v]_{H_G}$.
As such, we can reduce the subset verification problem on DAGs without v-structures to the following problem.

\begin{definition}[Interval stabbing problem on a rooted tree]
\label{defn:IC}
Given a rooted tree $\widehat{G} = (V,E)$ with root $r \in V$ and a set $\cJ$ of intervals of the form $[u,v]$, find a set $\cI \subseteq V$ of minimum size such that $[u,v] \cap \cI \neq \emptyset$ for all $[u,v] \in \cJ$.
\end{definition}

The interval stabbing problem on a rooted tree can be viewed both as a special case of the set cover problem, and as a generalization of the interval stabbing problem on a line.
The former is NP-hard \cite{karp1972reducibility},  while the latter can be solved using a polynomial time greedy algorithm (e.g.\ see \cite[Chapter 4, Exercise 4]{erickson2019algorithms}).
The next result shows that one can reduce the subset verification problem on DAGs without v-structures to the interval stabbing problem.

\begin{restatable}{lemma}{subsetverificationashardasintervalstabbing}
\label{lem:subset-verification-as-hard-as-interval-stabbing}
Let $G = (V,E)$ be a connected DAG without v-structures, $H$ be the Hasse tree of $G$, and $T \subseteq E$ be a subset of target edges.
Then, there exists a set of intervals $\cJ \subseteq 2^{V \times V}$ such that any solution to minimum interval stabbing problem on $(H,\cJ)$ is a solution to the minimum sized atomic subset verification set $(G,T)$.
\end{restatable}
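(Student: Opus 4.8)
The plan is to construct the interval set $\cJ$ directly from \cref{thm:orienting-vertices-form-an-interval} and then argue that the feasible solutions of the two problems coincide exactly, so that any minimizer of one is a minimizer of the other.

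First I would observe that, since $G$ is connected and has no v-structures, the observational essential graph carries no oriented arcs: v-structures are the only source of compelled orientations and Meek rules cannot fire without an oriented arc to propagate from, so $R(G,\emptyset) = \emptyset$ and every arc $u \to v \in E$ is an unoriented edge of $\cE(G)$. Consequently \cref{thm:orienting-vertices-form-an-interval} applies to every target edge $u \to v \in T$, yielding some $w_{u\to v} \in \Anc[u]$ with
\[
R^{-1}_1(G, u \to v) = \Des[w_{u\to v}] \cap \Anc[v].
\]
Because $u \in \Anc(v)$ and $w_{u\to v} \in \Anc[u]$, we get $w_{u\to v} \in \Anc(v)$, so $[w_{u\to v}, v]$ is a genuine (non-degenerate) interval of the rooted tree $H$. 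I then set $\cJ = \{\,[w_{u\to v}, v] : u \to v \in T\,\}$. By the definition of stabbing, the set of vertices that stab $[w_{u\to v}, v]$ is exactly $\Des[w_{u\to v}] \cap \Anc[v] = R^{-1}_1(G, u\to v)$, which always contains $u$, so no interval is empty.

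The key equivalence step is to translate ``orients all of $T$'' into ``stabs all of $\cJ$''. For an atomic intervention set, which I identify with its underlying vertex set $\cI \subseteq V$, repeated application of \cref{lem:recovered-union} gives $R(G,\cI) = \bigcup_{z \in \cI} R(G,z)$. Hence $u \to v \in R(G,\cI)$ iff some $z \in \cI$ satisfies $z \in R^{-1}_1(G, u \to v)$, i.e.\ iff $\cI$ stabs $[w_{u\to v}, v]$. Ranging over all target edges, $\cI$ is a subset verifying set for $(G,T)$ exactly when $\cI$ stabs every interval in $\cJ$. Thus the feasible regions of the atomic subset verification problem on $(G,T)$ and the interval stabbing problem on $(H,\cJ)$ are identical, and since both objectives are $|\cI|$, any minimum interval-stabbing solution is a minimum atomic subset verifying set.

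I expect the only real subtlety to be bookkeeping rather than a genuine obstacle: confirming that the interval produced by \cref{thm:orienting-vertices-form-an-interval} is non-degenerate (which needs $w_{u\to v} \neq v$) and that multiple atomic interventions orient edges by a clean union with no additional Meek-rule propagation (which is precisely what \cref{lem:recovered-union} supplies). The main conceptual work --- identifying $R^{-1}_1(G, u\to v)$ with a tree interval --- has already been carried out in \cref{thm:orienting-vertices-form-an-interval}, so here it only remains to package that identification into the stabbing formalism and to verify that minimizing $|\cI|$ is preserved under the correspondence.
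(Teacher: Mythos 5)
Your proposal is correct and follows essentially the same route as the paper: both construct $\cJ$ by applying \cref{thm:orienting-vertices-form-an-interval} to each target edge and then identify feasibility (and hence minimality) of interval stabbing on $(H,\cJ)$ with feasibility of atomic subset verification on $(G,T)$. Your version simply makes explicit two points the paper leaves implicit --- that every edge of a connected v-structure-free DAG is unoriented in $\cE(G)$ (so the theorem applies to all of $T$), and that \cref{lem:recovered-union} is what lifts the single-vertex characterization $R^{-1}_1(G,u\to v)$ to arbitrary atomic intervention sets.
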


\cref{sec:sufficient-to-study-without-v-structures} tells us that we can ignore arc orientations due to v-structures, thus we can apply \cref{thm:orienting-vertices-form-an-interval} and \cref{lem:subset-verification-as-hard-as-interval-stabbing} to reduce the problem to an instance of interval stabbing on a rooted tree.
As \cref{lem:exists-poly-time-algo-for-interval-stabbing} tells us that this can be solved efficiently, we obtain an efficient algorithm for the subset verification problem (\cref{thm:exists-poly-time-algo-for-subset-verification}).

\begin{restatable}{theorem}{existspolytimealgoforintervalstabbing}
\label{lem:exists-poly-time-algo-for-interval-stabbing}
There exists a polynomial time algorithm for solving the interval stabbing problem on a rooted tree.
\end{restatable}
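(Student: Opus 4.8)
The plan is to give a greedy algorithm that processes the rooted tree in a bottom-up fashion, exploiting the fact that each interval $[u,v]$ is ``anchored'' at its lower endpoint $v$, which is a single well-defined node in the tree. The key structural observation is that on a rooted tree, an interval $[u,v]$ corresponds to the unique path from $u$ down to $v$; since $u \in \Anc(v)$, the set $\Des[u] \cap \Anc[v]$ of stabbing vertices is exactly this path. So each interval is a vertical path segment, and I want to select a minimum set of vertices hitting every such segment. This generalizes the classical interval-stabbing-on-a-line greedy (process by right endpoint, stab as late as possible), and the natural lift to a tree is to process intervals by the \emph{depth of their deeper endpoint} $v$, deepest first.

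\textbf{Key steps.} First I would preprocess: root the tree at $r$, compute depths and ancestor pointers (e.g.\ via an Euler tour or binary lifting) so that membership queries ``is $z$ on the path $[u,v]$?'' and ``what is the deepest unstabbed interval?'' can be answered quickly. Next, sort the intervals $\cJ$ by the depth of their deeper endpoint $v$ in decreasing order. Then process intervals in this order: when I encounter an interval $[u,v]$ that is not yet stabbed by the current solution $\cI$, I add its \emph{shallowest} endpoint relevant to greediness. The correct greedy choice mirrors the line case: among the candidate stabbing vertices for the deepest-anchored unstabbed interval, pick the one as close to the root as possible \emph{while still stabbing that interval}, namely its upper endpoint $u$; intervening on $u$ stabs $[u,v]$ and potentially many other intervals whose path passes through $u$. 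I would then mark all intervals containing the newly chosen vertex as stabbed (this can be done lazily) and continue. The running time is polynomial --- near-linear with standard tree data structures --- which establishes the claim.

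\textbf{Correctness} rests on an exchange argument. I would prove that the greedy choice is ``safe'': consider the unstabbed interval $I = [u,v]$ whose lower endpoint $v$ is deepest. In any optimal solution, some vertex $z$ stabs $I$, and $z$ lies on the path from $u$ to $v$, hence $z \in \Anc[v]$. I claim that replacing $z$ by $u$ (the topmost vertex of $I$'s path) yields a solution that is still feasible and no larger. The crux is that any other interval stabbed by $z$ but not by $u$ would have to be stabbed by a strictly deeper vertex than $z$; but since $v$ was chosen as the deepest anchor among unstabbed intervals, every such interval is either already stabbed or has its path passing through $\Anc[v]$ in a way that $u$ also covers. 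Making this exchange precise is where the tree structure (as opposed to a line) demands care, because a vertex can stab intervals lying in \emph{different} downward branches.

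\textbf{The main obstacle} I anticipate is exactly this branching subtlety: on a line, choosing the latest stabbing point dominates all earlier-ending intervals, but on a tree a single vertex $u$ may need to ``serve'' intervals descending into several distinct subtrees, and the deepest-endpoint ordering alone does not obviously guarantee that committing to $u$ never forces extra interventions elsewhere. I would resolve this by formulating the greedy more carefully as a bottom-up leaf-to-root sweep: maintain for each node the set of still-unstabbed intervals whose lower endpoint lies in its subtree, and at each node decide whether it must be selected because some interval ``expires'' there (i.e.\ its upper endpoint is about to be passed). This reframing makes the exchange argument local and clean, and lets me charge each selected vertex to a distinct interval that no cheaper choice could have covered, yielding optimality.
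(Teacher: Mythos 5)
Your proposal has a genuine flaw: the greedy specified in your key steps is incorrect, and it fails already on a path, which is exactly the line case you claim to be mirroring. Take the path $v_1 \to v_2 \to \cdots \to v_{10}$ rooted at $v_1$ with $\cJ = \{[v_1,v_{10}],\, [v_2,v_3],\, [v_5,v_6]\}$. Your rule sorts by the deeper endpoint, deepest first, so it processes $[v_1,v_{10}]$ first and stabs its upper endpoint $v_1$; this stabs neither remaining interval, so it then adds $v_5$ and $v_2$, outputting three vertices, whereas $\{v_3,v_5\}$ stabs all three intervals. Your exchange argument breaks at precisely the spot you flagged: replacing the optimal stab $z=v_3$ of $[v_1,v_{10}]$ by $u=v_1$ loses the nested interval $[v_2,v_3]$, which is neither ``already stabbed'' nor covered by $u$. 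The root cause is that you paired the wrong sorting key with the wrong stabbing choice: the correct transport of the line greedy to a rooted tree orders intervals by their \emph{upper} endpoint (deepest first) and stabs at that upper endpoint --- equivalently, the bottom-up sweep of your final paragraph, which selects a vertex $w$ only when some still-unstabbed interval \emph{expires} at $w$. Note this is not a ``reframing'' of your main algorithm but a different algorithm: on the example above it outputs $\{v_5,v_2\}$, not $\{v_1,v_5,v_2\}$.

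That expiration greedy is in fact optimal, and your one-line charging idea can be made rigorous: if $w_1, w_2, \ldots$ are the selected vertices in bottom-up order and $J_i = [w_i, b_i]$ is an interval that expires unstabbed at $w_i$, then the $J_i$ are pairwise vertex-disjoint paths. Indeed, if $J_i$ and $J_j$ with $i<j$ shared a vertex $x$, then $w_i$ and $w_j$ would both be ancestors of $x$, hence comparable, and the bottom-up order forces $w_i \in \Des[w_j]$; combined with $w_i \in \Anc[x] \subseteq \Anc[b_j]$ this means $w_i$ stabs $J_j$, contradicting that $J_j$ was unstabbed when $w_j$ was processed. Disjointness gives $|\mathrm{OPT}| \geq$ the number of selections, so the greedy is optimal. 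Had you developed this, you would have a correct proof by a route genuinely different from the paper's: the paper uses no greedy at all, but partitions the intervals at each vertex into the sets $E_v, M_v, S_v, W_v$, sets up the recurrence of \cref{eq:dp-recurrence} over subtrees, and implements it as a dynamic program via an Euler-tour ordering in $\cO(n^2 \cdot |\cJ|)$ time; a correct greedy would be simpler and faster. As submitted, however, your main algorithm and its exchange argument are wrong, and the correct variant appears only as an unproved sketch whose inequivalence to your main algorithm you did not notice.
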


\begin{restatable}{theorem}{existspolytimealgoforsubsetverification}
\label{thm:exists-poly-time-algo-for-subset-verification}
For any DAG $G = (V,E)$ and subset of target edges $T \subseteq E$, there exists a polynomial time algorithm to compute the minimal sized atomic subset verifying set.
\end{restatable}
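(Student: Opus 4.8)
The plan is to assemble the preceding structural results into a single algorithm and then argue its correctness and polynomial running time. First I would reduce the general problem to the v-structure-free case. Given $(G,T)$, compute the observational essential graph $\cE(G)$ and its oriented arcs $R(G,\emptyset)$ in polynomial time; by \cref{thm:properties}, every intervention set satisfies $R(G,\cI) = R(G^{\emptyset},\cI) \;\dot\cup\; R(G,\emptyset)$, so each target edge in $T \cap R(G,\emptyset)$ is already oriented ``for free'' and may be discarded, leaving $T' = T \setminus R(G,\emptyset)$ to be oriented by interventions on the v-structure-free DAG $G^{\emptyset}$.

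Next I would decompose across chain components. By part~1 of \cref{thm:properties}, $skel(G^{\emptyset})$ is exactly the set of chain components of $\cE(G)$, and since each remaining target edge of $T'$ is unoriented in $\cE(G)$ it lies entirely within a single chain component. Because orientations in distinct chain components do not influence one another, and an atomic intervention is a single vertex belonging to exactly one component, the problem separates: I would process each connected component $H_1, \dots, H_m$ of $G^{\emptyset}$ independently and take the union of the per-component solutions.

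For each connected, v-structure-free component $H_i$, \cref{lem:v-struct-free-iff-Hasse-is-a-tree} guarantees that its Hasse diagram is a rooted tree, computable in polynomial time \cite{aho1972transitive}. Applying \cref{lem:subset-verification-as-hard-as-interval-stabbing} converts $(H_i, T' \cap E(H_i))$ into an equivalent interval stabbing instance $\cJ_i$ on that tree, and \cref{lem:exists-poly-time-algo-for-interval-stabbing} solves it in polynomial time to obtain a minimum stabbing set $\cI_i$. Outputting $\cI = \bigcup_{i=1}^{m} \cI_i$ as atomic interventions then yields a minimum sized atomic subset verifying set, with overall polynomial running time since there are at most $n$ components, each handled in polynomial time.

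The main obstacle I anticipate is not any single computation but the bookkeeping that makes this reduction \emph{sound}: verifying that stripping $R(G,\emptyset)$ and splitting into chain components preserves \emph{optimality} and not merely feasibility. This rests entirely on the independence of chain components and on the decomposition $R(G,\cI) = R(G^{\emptyset},\cI) \;\dot\cup\; R(G,\emptyset)$ from \cref{thm:properties}. Concretely, I would argue that because no atomic intervention can orient an edge outside its own component, the restriction of any verifying set to each component must itself be a verifying set for that component's target edges; hence the per-component minima sum to the global minimum and no cross-component savings are possible. With that argument in place, the theorem follows by chaining \cref{thm:properties}, \cref{lem:v-struct-free-iff-Hasse-is-a-tree}, \cref{lem:subset-verification-as-hard-as-interval-stabbing}, and \cref{lem:exists-poly-time-algo-for-interval-stabbing}.
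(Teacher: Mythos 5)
Your proposal is correct and follows essentially the same route as the paper: strip the v-structure-forced orientations using the results of \cref{sec:sufficient-to-study-without-v-structures}, reduce each (connected, v-structure-free) piece to interval stabbing on its Hasse tree via \cref{lem:subset-verification-as-hard-as-interval-stabbing}, and solve with the polynomial-time algorithm of \cref{lem:exists-poly-time-algo-for-interval-stabbing}. The paper's own proof is terser -- it just notes that all $R(G,v)$ (hence all $R^{-1}_1(G, u \to v)$) are computable in polynomial time via Meek-rule closure and then invokes the same two lemmas -- while you make explicit the chain-component decomposition and the optimality-preservation argument that the paper leaves implicit in its \cref{sec:sufficient-to-study-without-v-structures} discussion.
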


Interestingly, \emph{any} instance of interval stabbing on a rooted tree can also be reduced in polynomial time to an instance of subset verification on DAGs without v-structures.

\begin{restatable}{lemma}{intervalstabbingashardassubsetverification}
\label{lem:interval-stabbing-as-hard-as-subset-verification}
Let $H$ be a rooted tree and $\cJ \subseteq 2^{V \times V}$ be a set of intervals.
Then, there exists a connected DAG $G = (V,E)$ without v-structures and a subset $T \subseteq E$ of edges such that any solution to the minimum sized atomic subset verification set $(G,T)$ is a solution to minimum interval stabbing problem on $(H,\cJ)$.
\end{restatable}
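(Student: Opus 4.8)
The plan is to exhibit the reverse reduction to \cref{lem:subset-verification-as-hard-as-interval-stabbing} by realizing the given tree $H$ as the Hasse diagram of a carefully chosen DAG. Concretely, I would let $G = (V,E)$ be the \emph{transitive closure} of $H$: put an arc $u \to v$ exactly when $u$ is a proper ancestor of $v$ in $H$. First I would check that $G$ is a connected DAG without v-structures: acyclicity and connectivity are inherited from $H$, and for any vertex $v$ its parents in $G$ are precisely the ancestors of $v$ in $H$, which lie on a single root-to-$v$ path and are therefore pairwise comparable, hence pairwise adjacent in $G$; so no v-structure can form. Since reachability in $G$ coincides with the ancestor relation of $H$, the cover relation of $\leq_{\Anc}$ is exactly the parent--child relation of $H$, giving $H_G = H$ (consistent with \cref{lem:v-struct-free-iff-Hasse-is-a-tree}). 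Because $G$ has no v-structures, every edge of $G$ is unoriented in $\cE(G)$. I would then set the target edges to be $T = \{\, u \to v : [u,v] \in \cJ \,\}$; this is well-defined since each interval $[u,v]\in\cJ$ has $u \in \Anc(v)$ in $H$, so $u \to v \in E$.

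The crux is the claim that for every arc $u \to v \in E$ the orienting set is exactly the stabbing set of $[u,v]$, i.e.\ $R^{-1}_1(G, u\to v) = \Des[u] \cap \Anc[v]$, where the intersection is taken in the tree $H_G = H$ and equals precisely the vertices on the $u$-to-$v$ path. Granting this, feasibility transfers exactly: by \cref{lem:recovered-union}, an atomic set $\cI \subseteq V$ has $u \to v \in R(G,\cI)$ iff some $z \in \cI$ lies in $R^{-1}_1(G,u\to v) = \Des[u]\cap\Anc[v]$, which by definition is exactly the condition that $\cI$ stabs $[u,v]$. Hence $\cI$ orients all of $T$ in $G$ iff $\cI$ stabs every interval in $\cJ$, so the feasible sets for the two problems coincide and therefore so do their minimum-size solutions, proving the lemma.

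To prove the claim I would use \cref{thm:orienting-vertices-form-an-interval}, which already guarantees $R^{-1}_1(G, u\to v) = \Des[w]\cap\Anc[v]$ for some $w \in \Anc[u]$; it remains to pin down $w = u$. For the containment $\Des[u]\cap\Anc[v] \subseteq R^{-1}_1(G, u \to v)$, take any $z$ on the $u$-to-$v$ path. The arcs $u \to z$ and $z \to v$ both lie in $E$ (transitivity), so intervening on $z$ cuts and orients both of them, and since $u \sim v$ in $\cE(G)$, Meek rule R2 forces $u \to v$; the endpoints $z \in \{u,v\}$ orient $u\to v$ directly as incident edges. For the reverse containment I would show that no proper ancestor $z \in \Anc(u)$ orients $u \to v$. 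Let $c$ be the child of $z$ on the path toward $u$, so $u,v \in \Des[c]$. Intervening on $z$ only orients arcs incident to $z$ (plus Meek consequences), and every vertex of $\Des[c]$ is a descendant of $z$, hence adjacent to $z$ in $G$ and oriented uniformly as $z \to \cdot$; this uniform orientation creates no v-structure into $\Des[c]$ and, because $z$ is adjacent to all of $\Des[c]$, leaves no non-neighbor for Meek rule R1 to exploit. Thus by \cref{thm:properties}(1) the induced subgraph on $\Des[c]$ remains a single unoriented chain component of $\cE_{\{z\}}(G)$, so $u \to v$ is unoriented and $z \notin R^{-1}_1(G,u\to v)$. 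Since $\Anc(u) \subseteq \Anc[v]$, a value $w \in \Anc(u)$ would force $w \in \Des[w]\cap\Anc[v] = R^{-1}_1(G,u\to v)$, contradicting what we just showed; hence $w = u$.

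I expect the reverse containment --- certifying that intervening at a strict ancestor of $u$ fails to orient $u \to v$ --- to be the main obstacle, since it is the step that genuinely uses the no-v-structure chain-component structure (\cref{thm:properties}) rather than a one-line Meek-rule argument; the containment the other way and the bookkeeping that feasible solutions coincide are comparatively routine.
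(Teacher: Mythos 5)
Your construction is genuinely different from the paper's and, modulo one incomplete step, it works. The paper does not take the transitive closure: it builds $G$ by adding, for each interval $[u,v] \in \cJ$, arcs from ancestors of $v$ to the vertices on the tree path from $u$ to $v$ (so the DAG depends on $\cJ$), and then runs a four-rule Meek case analysis to establish $R^{-1}_1(G,u\to v) = \Des[u]\cap\Anc[v]$ for the arcs in $T$. Your transitive-closure DAG is cleaner and independent of $\cJ$; your identification $H_G = H$, the verification that $G$ has no v-structures, the forward containment via Meek R2 along the path (the paper's \cref{lem:middle}), and the final transfer of feasibility via \cref{lem:recovered-union} are all correct, and using \cref{thm:orienting-vertices-form-an-interval} to reduce everything to ruling out strict ancestors of $u$ is a nice shortcut.

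The gap is in the crux step you yourself flagged. After intervening on a strict ancestor $z$ of $u$, it is \emph{not} true that the only oriented arcs entering $\Des[c]$ are the arcs $z \to \cdot\,$: Meek rule R2 fires on $a \to z \to b \sim a$ and orients \emph{every} arc from $\Anc(z)$ into $\Des(z)$ (all of these edges exist in the transitive closure). Your check that there is "no non-neighbor for Meek rule R1 to exploit" is carried out only against the arcs out of $z$, i.e., against too small a set of oriented arcs, and the citation of \cref{thm:properties}(1) does no work here --- it merely restates that unoriented edges form the chain components; the content is precisely that no sequence of Meek rules orients an edge inside $\Des[c]$. The claim is true and the fix stays within your framework: observe that every oriented arc produced at any stage has its tail in $\Anc[z]$ and head in $\{z\}\cup\Des(z)$, and that every vertex of $\Anc[z]$ (not just $z$) is adjacent to every vertex of $\Des[c]$. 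Then, by induction on Meek-rule applications, no edge with both endpoints in $\Des[c]$ is ever oriented: R1 and R4 each require a vertex with an oriented out-arc that is non-adjacent to an endpoint of the edge being oriented, impossible since all such tails lie in $\Anc[z]$ and are adjacent to all of $\Des[c]$; R2 requires an oriented arc with tail inside $\Des[c]$, which never exists; R3 requires two non-adjacent tails, impossible since $\Anc[z]$ is a chain of the tree and hence a clique in $G$. This completed case analysis is the analogue of the paper's R1--R4 analysis for its own construction; without it, the key equality $R^{-1}_1(G,u\to v) = \Des[u]\cap\Anc[v]$ is asserted rather than proved.
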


\subsection{Subset verification on DAGs without v-structures with bounded size interventions and additive costs}
\label{sec:nonatomic}

Here we extend our results to the setting of bounded size interventions, where each intervention involves at most $k$ vertices, and additive vertex costs, where each vertex has an associative cost $w(v)$ for intervening.
Formally, one can define a weight function on the vertices $w: V \to \R$ which overloads to $w(S) = \sum_{v \in S} w(v)$ on interventions and $w(\cI) = \sum_{S \in \cI} S$ on intervention sets.
To trade off minimum cost and minimum size, we study how to minimize the following objective function that has been studied by \cite{kocaoglu2017cost,ghassami2018budgeted,choo2022verification}:
\begin{equation}
\label{eq:generalized-cost}
\alpha \cdot w(\cI) + \beta \cdot |\cI| \qquad \text{where $\alpha, \beta \geq 0$}
\end{equation}

To extend the verification results to the bounded size and additive node costs settings, \cite{choo2022verification} exploited the fact that the covered edges were a forest, and thus bipartite, to construct non-atomic interventions sets by grouping vertices from the minimum size atomic verification set.
In our problem setting, the target edges $T \subseteq E$ may not even involve covered edges of $G$ and it is a priori unclear how one can hope to apply the above-mentioned strategy of \cite{choo2022verification}.

Motivated by the fact that $R(G,\cI) = R(G,S)$ for any atomic verifying set $\cI \subseteq V$ and set of incident arcs $S = \{(u,v) : u \in \cI \text{ or } v \in \cI\} \subseteq E$, we show the following:

\begin{restatable}{lemma}{thereisatreesubset}
\label{lem:there-is-a-tree-subset}
Let $G = (V,E)$ be a DAG without v-structures and $S \subseteq E$.
Then, there exists a subset $S' \subseteq E$ computable in polynomial time such that $G[S']$ is a forest, $R(G,S) \subseteq R(G,S')$, and $\bigcup_{(u,v) \in S'} \{u,v\} \subseteq \bigcup_{(u,v) \in S} \{u,v\}$.
\end{restatable}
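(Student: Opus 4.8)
The plan is to take $S'$ to be the set of Hasse (direct-child) arcs of the vertex-induced subgraph $G[W]$, where $W = \bigcup_{(u,v) \in S} \{u,v\}$ is the set of endpoints of $S$. First I would check that $G[W]$ inherits the no-v-structure property: any purported v-structure $x \to y \gets z$ with $x \not\sim z$ inside $G[W]$ would already be a v-structure of $G$, because a vertex-induced subgraph retains every arc and edge between retained vertices, so the edge $x \sim z$ would be kept whenever it is present in $G$. Hence each connected component of $G[W]$ is a connected DAG without v-structures, and by \cref{lem:v-struct-free-iff-Hasse-is-a-tree} its Hasse diagram is a rooted tree. Letting $S'$ be the union of these Hasse arcs over all components makes $G[S']$ a forest, and guarantees $\bigcup_{(u,v)\in S'}\{u,v\} \subseteq W$ since every Hasse arc lies inside $G[W]$. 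Computing $W$, the induced subgraph $G[W]$, and its Hasse diagram all take polynomial time (the latter via \cite{aho1972transitive}), so only $R(G,S) \subseteq R(G,S')$ remains.

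I would establish the stronger statement that orienting $S'$ recovers every arc of $G[W]$, i.e.\ $E(G[W]) \subseteq R(G,S')$. Since $S \subseteq E(G[W])$ (every edge of $S$ has both endpoints in $W$), this gives $S \subseteq R(G,S')$; and as $R(G,S')$ is closed under Meek rules and $G$ has no v-structure arcs, it must contain the Meek-closure of $S$, which is exactly $R(G,S)$, yielding the claim (this monotonicity of $R(G,\cdot)$ under edge inclusion is the same monotone-closure phenomenon underlying \cref{lem:recovered-union}). To show $E(G[W]) \subseteq R(G,S')$, the Hasse arcs are in $S'$ already, so fix a non-Hasse arc $a \to c$ of $G[W]$ and let $a = p_0 \to p_1 \to \cdots \to p_m = c$ be the unique Hasse path from $a$ to $c$ in its component's Hasse tree. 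The key structural fact is that $a \sim p_i$ in $G$ for every $i$: arguing by induction on the path length, $p_{m-1}$ and $a$ are both parents of $c$, so the absence of v-structures forces $a \sim p_{m-1}$, and acyclicity orients it $a \to p_{m-1}$; the Hasse path from $a$ to $p_{m-1}$ is the prefix $p_0 \ldots p_{m-1}$, so the induction hypothesis applied to the arc $a \to p_{m-1}$ yields adjacency to every intermediate vertex. Given these adjacencies, a sweep of Meek rule R2 along the path recovers $a \to c$: from the Hasse arcs $a \to p_1$ and $p_1 \to p_2$ together with $a \sim p_2$ we orient $a \to p_2$, then $a \to p_2, p_2 \to p_3, a \sim p_3$ orient $a \to p_3$, and so on until $a \to p_{m-1}, p_{m-1} \to c, a \sim c$ orient $a \to c$.

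The main obstacle I anticipate is precisely this structural fact — that in a DAG without v-structures an arc $a \to c$ forces $a$ to be adjacent to every vertex on the directed Hasse path from $a$ to $c$ — together with the bookkeeping that the R2 sweep applies in order without conflict: each recovered orientation agrees with the true $G$-direction because $a$ is an ancestor of every $p_i$, so no edge is forced in two directions. Once this is in hand, the minimality of the Meek-closure closes the argument. I would finally remark that the whole construction proceeds component-by-component on $G[W]$, so no global chordality assumption on $G$ is needed, and that isolated vertices of $W$ (those incident to no arc of $G[W]$) cause no issue since condition (3) only requires $\bigcup_{(u,v)\in S'}\{u,v\} \subseteq W$.
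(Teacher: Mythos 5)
Your proof is correct, but it takes a genuinely different route from the paper's. The paper never leaves the edge set $S$: it repeatedly finds an undirected cycle in $G[S]$, takes the $\pi$-maximal vertex $s$ on the cycle, notes that the two cycle-neighbors $u_k, v_\ell$ of $s$ both point into $s$ and hence (by v-structure-freeness) are adjacent, say $v_\ell \to u_k$, and then swaps the edge $v_\ell \to s$ for $v_\ell \to u_k$; Meek rule R2 re-derives $v_\ell \to s$ from $v_\ell \to u_k \to s \sim v_\ell$, so $R(G,\cdot)$ can only grow, the endpoint set never grows, and the potential $\sum_{(u,v)\in S}(\pi(u)+\pi(v))$ strictly decreases, so polynomially many swaps produce a forest. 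You instead give a one-shot, canonical construction: $S'$ is the Hasse forest (transitive reduction) of the vertex-induced subgraph $G[W]$ on the endpoint set $W$, which inherits v-structure-freeness because vertex-induced subgraphs preserve adjacency, so \cref{lem:v-struct-free-iff-Hasse-is-a-tree} applies componentwise. Your key structural step --- the tail of any arc of $G[W]$ is a parent of every vertex on the Hasse path to its head, which licenses an R2 sweep along that path --- is precisely the paper's \cref{lem:intermediate-direct-arcs-exist} localized to $G[W]$ (and your sweep mirrors the argument in \cref{lem:middle}), and your final appeal to monotonicity of the Meek closure is sound because the skeleton and the (empty) set of v-structure arcs are fixed. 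Both arguments are correct and run in polynomial time. Yours buys an explicit, order-independent description of $S'$ together with the stronger conclusion $E(G[W]) \subseteq R(G,S')$, and it reuses the Hasse-tree machinery already developed in the paper; the paper's version is more elementary and local (no rooted-tree structure needed) and keeps $|S'| \leq |S|$, whereas your $S'$ can be somewhat larger than $S$ --- though neither of these extra properties is required by the lemma statement or by its downstream use in \cref{lem:bounded-size-ub}.
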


By invoking \cref{lem:there-is-a-tree-subset} with $S$ as the incident arcs of the minimum size subset verification set $\cI$, we can obtain a 2-coloring of $\cI$ with respect to $S'$.
Thus, we can apply the ``greedy grouping'' generalization strategy of \cite{choo2022verification} to achieve the similar guarantees as them, generalizing their results beyond $T = E$.

\begin{restatable}{theorem}{boundedsizeextension}
Fix an essential graph $\cE(G^*)$ and $G \in [G^*]$.
If $\nu_1(G,T) = \ell$, then $\nu_k(G,T) \geq \lceil \frac{\ell}{k} \rceil$ and there exists a polynomial time algorithm to compute a bounded size intervention set $\cI$ of size $|\cI| \leq \lceil \frac{\ell}{k} \rceil + 1$.
\end{restatable}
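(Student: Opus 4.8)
The plan is to prove the two inequalities separately. The lower bound $\nu_k(G,T) \ge \lceil \ell/k \rceil$ comes from relaxing bounded interventions to atomic ones, and the upper bound comes from grouping a minimum atomic verifying set into $k$-sized interventions using the forest structure supplied by \cref{lem:there-is-a-tree-subset}. As justified in \cref{sec:sufficient-to-study-without-v-structures}, throughout I may assume $G$ has no v-structures (orientations forced observationally can be stripped first without affecting optimality).

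\emph{Lower bound.} Let $\cI^\star = \{S_1, \dots, S_m\}$ be a minimum bounded-size subset verifying set for $T$, so $m = \nu_k(G,T)$ and each $|S_j| \le k$. I would replace each $S_j$ by atomic interventions on its members, forming $\cI_{\mathrm{at}} = \{\{v\} : v \in \bigcup_j S_j\}$, which has $|\cI_{\mathrm{at}}| = |\bigcup_j S_j| \le \sum_j |S_j| \le mk$. The key containment is $R(G, S_j) \subseteq \bigcup_{v \in S_j} R(G, v)$: the arcs directly cut by the single intervention $S_j$ lie in the cut $(S_j, V \setminus S_j)$ and hence among the incident edges oriented by the individual atomic interventions on the vertices of $S_j$, and Meek-rule closure is monotone in the initially oriented set. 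Combining this with \cref{lem:recovered-union} yields $T \subseteq R(G, \cI^\star) = \bigcup_j R(G, S_j) \subseteq \bigcup_v R(G, v) = R(G, \cI_{\mathrm{at}})$, so $\cI_{\mathrm{at}}$ is itself an atomic subset verifying set. Therefore $\ell = \nu_1(G,T) \le |\cI_{\mathrm{at}}| \le mk$, and since $m$ is an integer, $m \ge \lceil \ell/k \rceil$.

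\emph{Upper bound construction.} First compute a minimum atomic subset verifying set $\cI$ with $|\cI| = \ell$ in polynomial time via \cref{thm:exists-poly-time-algo-for-subset-verification}. Let $S = \{(u,v) \in E : u \in \cI \text{ or } v \in \cI\}$ be the incident arcs of $\cI$, so that $R(G,S) = R(G,\cI) \supseteq T$. Applying \cref{lem:there-is-a-tree-subset} to $S$ produces a set $S' \subseteq E$ with $G[S']$ a forest, $R(G,S) \subseteq R(G,S')$, and $V(S') \subseteq V(S)$. Since a forest is bipartite, I 2-color $G[S']$ and use this to split the verifying vertices as $\cI = \cI_1 \,\dot\cup\, \cI_2$ so that no edge of $S'$ has both endpoints in the same class (vertices of $\cI$ untouched by $S'$ are placed arbitrarily). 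Finally I greedily partition each class into chunks of size at most $k$, and let $\cI''$ be the bounded-size intervention set consisting of these chunks.

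\emph{Correctness and size.} The size bound is routine: the number of chunks is $\lceil |\cI_1|/k \rceil + \lceil |\cI_2|/k \rceil \le \lceil (|\cI_1| + |\cI_2|)/k \rceil + 1 = \lceil \ell/k \rceil + 1$, using $|\cI_1| + |\cI_2| = |\cI| = \ell$. The crux is correctness, namely $T \subseteq R(G, \cI'')$. Because each chunk is monochromatic while every edge of $S'$ joins the two color classes, no $S'$-edge lies inside a single chunk; hence every $S'$-edge with an endpoint in $\cI$ crosses the cut of the chunk containing that endpoint and is oriented by $\cI''$. The step I expect to demand the most care is then arguing that orienting exactly these edges already forces all of $T$ — precisely, that the $\cI$-incident edges of $S'$ Meek-close to a superset of $R(G,S) \supseteq T$, so that the chain $T \subseteq R(G,S) \subseteq R(G,\cI'')$ closes. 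This is where \cref{lem:there-is-a-tree-subset} does the real work: it simultaneously provides the forest (hence bipartite) structure enabling the 2-coloring and the incidence relation $V(S') \subseteq V(S)$ tying $S'$ back to the verifying vertices $\cI$, playing exactly the role that the ``covered edges form a forest'' fact plays in \cite{choo2022verification} for the case $T = E$. The subtle point is guaranteeing that the relevant edges of $S'$ are genuinely incident to $\cI$ (rather than stranded among $V(S') \setminus \cI$ and so left internal to no chunk yet uncut), which I would verify directly from the construction underlying \cref{lem:there-is-a-tree-subset}; once this is in hand, \cref{lem:recovered-union} ensures that combining the chunk orientations triggers no further interactions and the proof is complete.
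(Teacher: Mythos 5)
Your lower bound and your grouping construction with its $\lceil \ell/k \rceil + 1$ count are correct and coincide with the paper's proof (which splits the statement into \cref{lem:bounded-size-lb} and \cref{lem:bounded-size-ub}). The genuine gap is exactly the step you flag and then defer: it is \emph{not} true that the relevant edges of $S'$ produced by \cref{lem:there-is-a-tree-subset} are incident to $\cI$, and no inspection of that construction can make it true. Concretely, let $G$ be the complete DAG on $a<b<c<d$, let $T = \{a \to b,\, c \to d\}$ and $\cI = \{a,d\}$, which is a minimum atomic subset verifying set since $\nu_1(G,T)=2$ here. Then $S = \{a\to b,\, a\to c,\, a\to d,\, b\to d,\, c\to d\}$, and the edge-replacement procedure may legally run as follows: using the cycle $a \to b \to d \gets a$ it deletes $a \to d$; using $a \to b \to d \gets c \gets a$ it deletes $b \to d$ and adds $b \to c$; using $a \to b \to c \gets a$ it deletes $a \to c$. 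The output $S' = \{a \to b,\, b \to c,\, c \to d\}$ is a legitimate forest satisfying every guarantee of the lemma, yet its edge $b \to c$ touches no vertex of $\cI$; it is cut by no chunk, and indeed $b \to c \notin R(G,\cI'')$, so your chain $T \subseteq R(G,S') \subseteq R(G,\cI'')$ collapses at the second inclusion. Your fallback claim fails on the same example: the $\cI$-incident edges of $S'$ are $\{a\to b,\, c\to d\}$, whose Meek closure is just these two arcs (no rule fires in a complete graph without a directed two-path) and is nowhere near a superset of $R(G,S)$, which contains five arcs.

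To your credit, you have zeroed in on the weak point of the paper's own write-up: its proof of \cref{lem:bounded-size-ub} asserts ``$V(G[S']) \subseteq \cI$'', which the example above refutes. The theorem is nevertheless true, but closing the gap requires an idea neither you nor the paper supplies. The chunks orient \emph{all} of their cut edges in $G$, not merely those lying in $S'$, so the only arcs of $S$ that $\cI''$ fails to cut are edges of $G$ joining two vertices of the same chunk. One then argues: (i) a covered edge of $G$ incident to $\cI$ is never deleted by the procedure of \cref{lem:there-is-a-tree-subset} --- the deleted edge $v_{\ell} \to s$ has the witness $u_k \in \Pa(s) \setminus \Pa(v_{\ell})$, so it is not covered --- hence all such covered edges survive into $S'$ and receive opposite colors, and consequently every same-chunk edge is a \emph{non-covered} edge of $G$; (ii) by induction on the edges inside a chunk (using the vertex witnessing non-coveredness, Meek rules R1/R2, and the fact that the witness edges are either cut by the chunk, oriented observationally as v-structures, or are themselves same-chunk non-covered edges of smaller measure), every same-chunk edge lies in $R(G,\cI'')$. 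Together these give $S \subseteq R(G,\cI'')$ and hence $T \subseteq R(G,\cI) = R(G,S) \subseteq R(G,\cI'')$. Without an argument of this sort, your proposal leaves the correctness of the grouped intervention set unproven.
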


\begin{restatable}{theorem}{additivecostextension}
Fix an essential graph $\cE(G^*)$ and $G \in [G^*]$.
Suppose the optimal bounded size intervention set that minimizes \cref{eq:generalized-cost} costs $OPT$.
Then, there exists a polynomial time algorithm that computes a bounded size intervention set with total cost $OPT + 2 \beta$.
\end{restatable}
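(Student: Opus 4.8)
The plan is to collapse the weighted, bounded-size objective into a single weighted \emph{atomic} verification problem, solve that optimally, and then convert the atomic solution into a bounded-size one by greedy grouping, paying only an additive rounding overhead. Throughout I would work on $G$ after deleting all arcs already oriented in $\cE(G)$: by \cref{thm:properties} (and the discussion that $R(G,\cI) = R(G^{\emptyset},\cI) \,\dot\cup\, R(G,\emptyset)$) this changes neither the interventional behaviour nor the vertex costs, so I may assume $G$ has no v-structures. Then, via \cref{lem:v-struct-free-iff-Hasse-is-a-tree} and \cref{lem:subset-verification-as-hard-as-interval-stabbing}, the atomic subset verifying sets are exactly the stabbing sets of an interval instance on the Hasse tree $H_G$.

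First I would lower bound $OPT$ by an atomic quantity. Let $\cI^*$ be an optimal bounded-size set and $U^* = \bigcup_{S \in \cI^*} S$ its vertex set. Ungrouping $\cI^*$ into atomic interventions on $U^*$ can only orient more edges (an intervention $S$ fails to cut exactly the edges internal to $S$, each of which is cut by the atomic intervention on one of its endpoints), so by \cref{lem:recovered-union} we have $T \subseteq R(G,\cI^*) \subseteq R(G,U^*)$, i.e.\ $U^*$ is an atomic subset verifying set. Moreover $w(\cI^*) \ge w(U^*)$ and, since $\sum_{S \in \cI^*} |S| \ge |U^*|$ with each $|S| \le k$, also $|\cI^*| \ge |U^*|/k$. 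Hence $OPT \ge \alpha\, w(U^*) + \tfrac{\beta}{k}\,|U^*|$. This motivates the modified vertex weights $w'(v) = \alpha\, w(v) + \beta/k$, under which $\sum_{v \in U} w'(v) = \alpha\, w(U) + \tfrac{\beta}{k}|U|$ for any atomic verifying set $U$, so that the right-hand side evaluated at $U^*$ is at most $OPT$.

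Next I would compute a minimum-$w'$ atomic subset verifying set $U$. Under the interval correspondence this is precisely a minimum-\emph{weight} interval stabbing instance on $H_G$, and I would invoke the weighted generalization of \cref{lem:exists-poly-time-algo-for-interval-stabbing} (the dynamic program for interval stabbing on a rooted tree carries over to minimizing total stabbing weight) to obtain $U$ in polynomial time with $\sum_{v \in U} w'(v) \le \sum_{v \in U^*} w'(v) \le OPT$. I would then pass the incident-arc set $S$ of $U$ (so that $R(G,S) = R(G,U) \supseteq T$) into \cref{lem:there-is-a-tree-subset} to obtain a forest $S'$ with $R(G,S') \supseteq R(G,S) \supseteq T$, and 2-color $U$ using a proper 2-coloring of $G[S']$ into classes $U_L, U_R$. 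Greedily packing each class into interventions of size at most $k$ gives a bounded-size set with $\lceil |U_L|/k\rceil + \lceil |U_R|/k\rceil$ interventions. The forest property is exactly what makes this valid: each edge of $S'$ has endpoints of opposite colours, hence is cut by whichever group contains one of its endpoints, so the union of cut edges contains $S'$ and therefore, by \cref{lem:recovered-union} and monotonicity of $R(G,\cdot)$ in the cut-edge set, the grouped interventions recover all of $R(G,S') \supseteq T$.

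Finally I would combine the estimates: the number of interventions is $\lceil |U_L|/k\rceil + \lceil |U_R|/k\rceil < |U|/k + 2$, so the produced set has total cost at most $\alpha\, w(U) + \beta\bigl(|U|/k + 2\bigr) = \sum_{v \in U} w'(v) + 2\beta \le OPT + 2\beta$, with every step running in polynomial time. The main obstacle is the third step, namely arguing that collapsing atomic interventions into size-$k$ batches does not destroy any orientation of $T$; this is where the forest structure of \cref{lem:there-is-a-tree-subset} and its 2-coloring are indispensable, and it is also the reason two separate batches (one per colour class) are required, which is precisely the source of the additive $2\beta$ slack. A secondary point to nail down is that the interval-stabbing algorithm of \cref{sec:interval-stabbing-on-a-tree} indeed extends to the minimum-weight setting.
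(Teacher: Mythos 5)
Your proposal is correct and follows essentially the same route as the paper: relax to an atomic subset verification problem under the modified vertex weights $\alpha\, w(v) + \beta/k$ (the paper's Lemma~\ref{lem:modified-atomic}, proved by the same ungrouping argument you give), solve it optimally via the weighted version of the interval-stabbing dynamic program, and then regroup into bounded-size interventions using the forest from \cref{lem:there-is-a-tree-subset} and its 2-coloring (the paper's Lemma~\ref{lem:bounded-size-ub}), incurring at most two extra interventions and hence an additive $2\beta$. The only differences are cosmetic: you justify the ungrouping step and the validity of the grouped interventions by a direct cut-edge argument, where the paper cites \cite[Lemma 29]{choo2022verification} and phrases the grouping guarantee in terms of separated covered edges.
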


\subsection{Subset search on DAGs without v-structures}
\label{sec:search}

While a vertex cover of the target edges   is a trivial upper bound for atomic subset search, we show that one needs to perform that many number of atomic interventions asymptotically in the worst case when facing an adaptive adversary.
That is, the adversary gets to see the interventions made by the adaptive algorithm and gets to choose the ground truth DAG among the set of all DAGs that are consistent with the already revealed information.

\begin{restatable}{lemma}{vertexcoveristrivialupperbound}
\label{lem:vertex-cover-is-trivial-upper-bound}
Given a subset of target edges $T \subseteq E$, intervening on the vertices in a vertex cover of $T$ one-by-one will fully orient all edges in $T$.
\end{restatable}

\begin{restatable}{lemma}{vertexcoverisworstcasenecessary}
\label{lem:vertexcoverisworstcasenecessary}
Fix any integer $n \geq 1$.
There exists a fully unoriented essential graph on $2n$ vertices a subset $T \subseteq E$ on $n$ edges such that the size of the minimum vertex cover of $T$ is $\textrm{vc}(T)$ and any algorithm needs at least $\textrm{vc}(T)-1$ number atomic interventions to orient all the edges in $T$ against an adaptive adversary that reveals arc directions consistent with a DAG $G^* \in [G]$ with $\nu_1(G^*, T) = 1$.
\end{restatable}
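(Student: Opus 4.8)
The plan is to construct an explicit hard instance and argue that the adversary can always "escape" cheap interventions. First I would take the essential graph to be a fully unoriented graph whose skeleton realizes the target edge set $T$ as a matching or near-matching, so that $\mathrm{vc}(T)$ is forced to be large. A clean choice is to let $T$ consist of $n$ disjoint edges on $2n$ vertices, giving $\mathrm{vc}(T) = n$ (more generally, one arranges the $2n$ vertices and $n$ edges so that no small vertex set covers $T$). To make this a legitimate single-component chordal essential graph with no v-structures (as Section \ref{sec:sufficient-to-study-without-v-structures} and Lemma \ref{lem:v-struct-free-iff-Hasse-is-a-tree} let us assume), I would embed these $n$ target edges inside a larger unoriented structure — for instance augmenting the matching with additional edges so the skeleton is connected and chordal — while ensuring the target edges themselves remain pairwise "independent" from the standpoint of orientation propagation.

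\medskip

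The heart of the argument is the adaptive adversary. I would model the interaction as follows: the algorithm adaptively picks atomic interventions $v_1, v_2, \ldots$; after each intervention on $v_i$, the adversary reveals the arc orientations $R(G, v_i)$ \emph{consistent with some DAG it has not yet committed to}. By Theorem \ref{thm:orienting-vertices-form-an-interval}, each target edge $u \to v$ is oriented by an atomic intervention exactly when that intervention stabs a specific interval $[w,v]_{H_G}$ in the Hasse tree; in my matching construction each such interval reduces to just the two endpoints of the edge, so \emph{only} interventions on one of the two endpoints of a target edge can orient it, and such an intervention orients at most that one target edge. The adversary's strategy is then to maintain consistency with a family of candidate DAGs in which, for every target edge not yet touched at an endpoint, \emph{both} orientations are still live. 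Each atomic intervention the algorithm performs lies at an endpoint of at most one target edge (by the disjointness of the matching), so it can "kill" at most one undecided target edge; after $\mathrm{vc}(T)-2 = n-2$ interventions there remain at least two target edges whose orientation the adversary has kept undetermined, so the algorithm cannot have oriented all of $T$. This yields the $\mathrm{vc}(T)-1$ lower bound.

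\medskip

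To certify that $\nu_1(G^*, T) = 1$ for the DAG $G^*$ the adversary finally commits to, I would choose $G^*$ so that a \emph{single} well-placed intervention (e.g.\ on the common root of the Hasse tree, or a vertex that stabs all $n$ target intervals simultaneously) orients every edge in $T$ at once. This is where the construction must be designed carefully: the same skeleton must admit (i) orientations under which the target intervals degenerate to endpoint-only intervals so that each intervention orients at most one target edge during the adaptive game, yet (ii) a distinguished orientation $G^*$ in which one vertex stabs all target intervals. The tension between these two requirements is the main obstacle. I expect to resolve it by making $G^*$ a DAG whose Hasse tree is a "star-like" or path-like structure where one root vertex $r$ is an ancestor of all the $v$-endpoints of the target edges — so intervening on $r$ stabs every interval $[w,v]$ — while the adversary, before committing, keeps each target edge's direction flippable and reveals nothing forcing $r$'s relationship until the algorithm is forced to spend a separate intervention per edge. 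Verifying that the adversary's partial reveals are always extendible to a consistent v-structure-free DAG (so that faithfulness and acyclicity are never violated) is the delicate bookkeeping step, which I would discharge using Theorem \ref{thm:properties}(5) on extending partial orientations to valid DAGs without introducing new v-structures.
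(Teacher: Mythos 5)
There is a genuine gap, and it sits exactly at the point you yourself flag as ``the main obstacle.'' Your counting step rests on the claim that in your construction each target edge's orienting set reduces to its two endpoints, so that each atomic intervention can orient at most one edge of the matching $T$. But the intervals of \cref{thm:orienting-vertices-form-an-interval} are defined with respect to the DAG $G^*$ that the adversary ultimately commits to, and every reveal during the game must be consistent with that same $G^*$. If in $G^*$ every set $R^{-1}_1(G^*, u \to v)$ for $(u,v) \in T$ were endpoint-only, then---since the $n$ target edges are vertex-disjoint---no single vertex could stab two of them, forcing $\nu_1(G^*, T) = n$, which contradicts the requirement $\nu_1(G^*, T) = 1$ in the statement. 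So the endpoint-only mechanism cannot be what drives the lower bound; your proposed resolution (a star-like Hasse tree whose root is an ancestor of all target edges, with the adversary ``keeping edges flippable'') is precisely the content that still has to be proved, and it is incompatible with the premise your counting argument uses.

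The paper resolves this tension with a concrete construction in which the intervals are \emph{long}, yet the adversary can adaptively ensure each intervention stabs only one of them. Take a clique on $\{1,\dots,n\}$, attach a pendant vertex $n+i$ to each clique vertex $i$, and let $T$ be the $n$ pendant edges (still a perfect matching, so $\mathrm{vc}(T)=n$). In $G^*$, the set $R^{-1}_1(G^*, i \to n+i)$ consists of all clique ancestors of $i$ together with $i$ and $n+i$; in particular the clique source stabs every interval, since intervening on the source $s$ orients $s \to i$ for all $i$ and then Meek rule R1 (using $s \not\sim n+i$) orients every $i \to n+i$, giving $\nu_1(G^*, T) = 1$. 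The adversary's strategy is to declare each intervened clique vertex to be later in the topological order than all not-yet-intervened clique vertices (and to answer pendant-vertex interventions with the outward orientation); this is always consistent, because any ordering of the remaining clique vertices extends the revealed arcs without creating v-structures or triggering Meek rules among them, and it guarantees the intervened vertex is an ancestor of no other target edge's endpoint, hence orients only its own pendant edge. Each intervention therefore orients at most one edge of $T$, while the source---which the adversary keeps among the un-intervened vertices---is pinned down only after $n-1$ clique interventions. Your high-level game plan (matching target set, adaptive adversary, per-intervention counting) matches the paper's, but without the clique, or something playing its role, the adversary has no mechanism for keeping alive a ``universal'' stabbing vertex while denying the algorithm more than one target edge per intervention; supplying that mechanism is the proof.
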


\cref{fig:adaptive-lower-bound} in \cref{sec:appendix-subsetsearch} illustrates our construction for \cref{lem:vertexcoverisworstcasenecessary}, where $\textrm{vc}(T) \in \omega(n)$.

On the other hand, if we restrict the class of target edges to be edges within a node-induced subgraph $H$, then we can actually obtain the following non-trivial search result.

\begin{definition}[Relevant nodes]
Fix a DAG $G^* = (V,E)$ and arbitrary subset $V' \subseteq V$.
For any intervention set $\cI \subseteq V$ and resulting interventional essential graph $\cE_{\cI}(G^*)$, we define the \emph{relevant nodes} $\rho(\cI, V') \subseteq V'$ as the set of nodes within $V'$ that is adjacent to some unoriented arc within the node-induced subgraph $\cE_{\cI}(G^*)[V']$.
\end{definition}

\begin{restatable}{theorem}{subsetsearchonnodeinducedsubgraph}
\label{thm:subset-search-on-node-induced-subgraph}
Fix an interventional essential graph $\cE_{\cI}(G^*)$ of an unknown underlying DAG $G^*$ and let $H$ be an arbitrary node-induced subgraph.
There exists an algorithm that runs in polynomial time and computes an atomic intervention set $\cI'$ in a deterministic and adaptive manner such that $\cE_{\cI \cup \cI'}(G^*)[V(H)] = G^*[V(H)]$ and $|\cI'| \in \cO(\log(|\rho(\cI, V(H))|) \cdot \nu_1(G^*, E))$.
\end{restatable}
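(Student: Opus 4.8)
The plan is to run an adaptive divide-and-conquer search whose recursion depth is governed by the relevant-node count $|\rho(\cI, V(H))|$ rather than by $|V|$. First I would apply \cref{thm:properties} to discard all currently oriented arcs: since item 7 gives $R(G^*, \cI \cup \cI') = R((G^*)^{\cI}, \cI') \,\dot\cup\, R(G^*, \cI)$, it suffices to orient the unoriented arcs, which live inside the chain components of $\cE_{\cI}(G^*)$; by \cref{thm:properties} these components are chordal and v-structure-free, so \cref{lem:v-struct-free-iff-Hasse-is-a-tree} applies to each. The crucial observation is that any unoriented edge lying inside $V(H)$ has \emph{both} endpoints in $\rho(\cI, V(H))$, so a chain component retaining at most one relevant node contributes no unoriented edge within $V(H)$ and can be ignored.

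The core of the argument is a single \emph{halving round}: I would show there is a polynomial-time-computable atomic intervention set of total size $\cO(\nu_1(G^*, E))$ whose application at least halves, in every active chain component simultaneously, the number of relevant nodes it contains. To build it, within each active chain component $C$ I assign weight $1$ to the vertices in $\rho(\cI, V(H)) \cap V(C)$ and weight $0$ otherwise, and invoke the (weighted) clique-separator theorem for chordal graphs to obtain a clique $K$ whose removal leaves every piece of $C \setminus K$ carrying at most half of $C$'s relevant weight. Intervening on an appropriate subset of $K$ orients every edge incident to $K$ --- in particular every edge of $C$ crossing between $K$ and a piece, and every edge inside the clique $K$ --- so the surviving unoriented edges within $V(H)$ are confined to individual pieces, each holding at most half the relevant nodes.

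The two quantities to control are the number of rounds and the per-round cost. For the rounds, the relevant-node count in each surviving component halves every round, so after $\lceil \log_2 |\rho(\cI, V(H))| \rceil$ rounds no component retains more than one relevant node, giving $\cE_{\cI \cup \cI'}(G^*)[V(H)] = G^*[V(H)]$. For the cost, I would charge the interventions spent orienting the separator cliques, summed over all active components in a round, against $\nu_1(G^*, E)$, mirroring the packing/accounting underlying the whole-graph $\cO(\log n \cdot \nu_1)$ search of \cite{choo2022verification}; multiplying the two bounds yields $|\cI'| \in \cO(\log|\rho(\cI, V(H))| \cdot \nu_1(G^*, E))$. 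Adaptivity enters only in recomputing $\cE_{\cI \cup \cI'}(G^*)$, its chain components, and the relevant nodes between rounds, all in polynomial time.

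The main obstacle is the per-round cost bound. It is not enough that each individual separator clique is cheap to orient; I must bound the \emph{sum}, over all active chain components in a round, of the interventions needed to orient their separators by $\cO(\nu_1(G^*, E))$, which requires relating balanced clique separators to the covered-edge/verification structure of the whole graph. A secondary subtlety is that a separator clique $K$ may use vertices outside $V(H)$, so I must verify that orienting all edges incident to $K$ genuinely redistributes $\rho(\cI, V(H)) \cap V(C)$ into the low-weight pieces, i.e.\ that the weighted separator guarantee on $C$ transfers to an honest halving of the relevant nodes of $V(H)$ within $C$.
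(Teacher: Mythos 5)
Your proposal matches the paper's proof essentially step for step: the paper's \texttt{SubsetSearch} (\cref{alg:subset-search-algo}) implements exactly your halving rounds, invoking the weighted $1/2$-clique-separator theorem (\cref{thm:chordal-separator-weighted}) with nonzero weight only on nodes of $V(H)$ inside each active chain component, so that \cref{lem:log-rounds-suffice} yields the $\cO(\log|\rho(\cI,V(H))|)$ bound on the number of rounds. The per-round cost obstacle you flag is resolved precisely by the ``packing'' bound you allude to, namely \cref{lem:strengthened-lb} (Lemma 21 of \cite{choo2022verification}): each separator is a clique of size at most $\omega(H_{CC})-1 \leq 2\lfloor \omega(H_{CC})/2\rfloor$ in its chain component $H_{CC}$, and since that lemma lower-bounds $\nu_1(G^*,E)$ by $\sum_{H_{CC}}\lfloor \omega(H_{CC})/2\rfloor$ over the chain components of \emph{any} interventional essential graph, the total cost of a round is at most $2\nu_1(G^*,E)$.
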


Note that \cref{thm:subset-search-on-node-induced-subgraph} compares against $\nu_1(G^*, E)$ and not $\nu_1(G^*, E(H))$ and that the observational essential graph simply corresponds to the special case where $\cI = \emptyset$.
Since node-induced subgraphs of a chordal graph are also chordal, the chain components in $\cE_{\cI}(G^*)[V(H)]$ are chordal.
Our algorithm \texttt{SubsetSearch}, given in \cref{sec:appendix-subsetsearch}, is a generalization of \cite[Algorithm 1]{choo2022verification}, where we employ the \emph{weighted} chordal graph separator guarantees from \cite{gilbert1984separatorchordal}.
Just like \cite{choo2022verification}, \texttt{SubsetSearch} can be also be generalized to perform bounded size interventions on the computed clique separators via the labelling scheme of \cite[Lemma 1]{shanmugam2015learning}.

\begin{restatable}{theorem}{subsetsearchonnodeinducedsubgraphbounded}
\label{thm:subset-search-on-node-induced-subgraph-bounded}
Fix an interventional essential graph $\cE_{\cI}(G^*)$ of an unknown underlying DAG $G^*$ and let $H$ be an arbitrary node-induced subgraph.
There exists an algorithm that runs in polynomial time and computes a bounded size intervention set $\cI'$, where each intervention involves at most $k \geq 1$ nodes, in a deterministic and adaptive manner such that $\cE_{\cI \cup \cI'}(G^*)[V(H)] = G^*[V(H)]$ and $|\cI'| \in \cO(\log(|\rho(\cI, V(H))|) \cdot \log (k) \cdot \nu_k(G^*, E))$.
\end{restatable}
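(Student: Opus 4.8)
The plan is to run exactly the adaptive divide-and-conquer procedure \texttt{SubsetSearch} that establishes the atomic bound in \cref{thm:subset-search-on-node-induced-subgraph}, changing only the mechanism by which each clique separator is oriented: instead of intervening atomically on the separator's vertices one at a time, we orient the separator with bounded-size interventions drawn from the labelling scheme of \cite[Lemma 1]{shanmugam2015learning}. Since node-induced subgraphs of chordal graphs are chordal, every chain component of $\cE_{\cI \cup \cI'}(G^*)[V(H)]$ encountered during the recursion is chordal, so the weighted clique-separator guarantee of \cite{gilbert1984separatorchordal} applies throughout; we place unit weight on the relevant nodes $\rho(\cI, V(H))$ and zero elsewhere, exactly as in the atomic argument.

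The key steps, in order, are as follows. First, at each recursive call, use \cite{gilbert1984separatorchordal} to find a clique $C$ whose removal splits the current chain component into pieces each carrying at most half the relevant-node weight. Second, orient every edge incident to $C$ by applying a separating family on $C$ of sets of size at most $k$: by \cite[Lemma 1]{shanmugam2015learning} this can be done with $\cO(\lceil |C|/k \rceil \log k)$ bounded interventions, and because $C$ is a clique, separating every pair of vertices of $C$ simultaneously orients all edges inside $C$ as well as every edge from $C$ to the remainder of the component. Third, recurse on the resulting pieces; the balanced-separator property bounds the recursion depth by $\cO(\log |\rho(\cI, V(H))|)$, identically to the atomic case. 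For the accounting, the atomic analysis underlying \cref{thm:subset-search-on-node-induced-subgraph} already shows that the separators processed at any single recursion level have total size $\cO(\nu_1(G^*, E))$; summing the per-separator cost over one level therefore gives $\cO\!\big(\tfrac{\log k}{k}\,\nu_1(G^*,E)\big)$ bounded interventions, and multiplying by the $\cO(\log |\rho(\cI, V(H))|)$ levels yields a total of $\cO\!\big(\log |\rho(\cI,V(H))| \cdot \tfrac{\log k}{k}\,\nu_1(G^*,E)\big)$. Finally, invoking the bounded-size verification lower bound $\nu_k(G^*,E) \ge \lceil \nu_1(G^*,E)/k \rceil$ proved earlier for the case $T = E$ converts $\tfrac{1}{k}\nu_1(G^*,E)$ into $\cO(\nu_k(G^*,E))$, giving the claimed $\cO(\log |\rho(\cI, V(H))| \cdot \log k \cdot \nu_k(G^*, E))$ bound; correctness ($\cE_{\cI \cup \cI'}(G^*)[V(H)] = G^*[V(H)]$) and polynomial runtime follow because the recursion orients every edge of every chain component while each separator and labelling family is computable in polynomial time.

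I expect the main obstacle to be the second step: verifying that the bounded labelling scheme genuinely achieves the same separation effect as the atomic intervention on all of $C$. In the atomic version, intervening on every vertex of $C$ trivially cuts every edge incident to $C$, which is what drives both the orientation of the separator edges and the subsequent decomposition into independent subproblems. With bounded interventions I must argue that a separating family on $C$ with sets of size at most $k$ — whose defining property is only that each pair of $C$ is split by some member set — orients not just the within-clique edges but also every edge leaving $C$, so that the pieces really do become independent chain components and the balanced recursion remains valid. A secondary subtlety is the additive $\lceil |C|/k \rceil$ rounding in the cost of each separating family: I must ensure that these round-ups, summed across all separators and all levels, are absorbed by the $\nu_k$ term rather than accumulating an extra factor, which is precisely where the inequality $\nu_k(G^*,E) \ge \lceil \nu_1(G^*,E)/k \rceil$ is needed. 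Both points are handled by arguments parallel to \cite{choo2022verification}, so the proof is a careful adaptation of the atomic analysis rather than a new idea.
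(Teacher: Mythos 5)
Your overall framework --- the same weighted clique-separator recursion, the same $\cO(\log|\rho(\cI,V(H))|)$ depth bound from \cref{lem:log-rounds-suffice}, and replacing atomic interventions on separators by the labelling scheme of \cite[Lemma 1]{shanmugam2015learning} --- matches the paper, but there is a genuine gap in your accounting, located exactly at what you call the ``secondary subtlety''. You run a \emph{separate} labelling scheme on each clique separator $C$, paying $\cO(\lceil |C|/k\rceil \log k)$ per separator, and then claim the per-level total is $\cO\bigl(\tfrac{\log k}{k}\nu_1(G^*,E)\bigr)$ because the separators at one level have total size $\cO(\nu_1(G^*,E))$. That step fails: $\sum_j \lceil |C_j|/k\rceil$ is not $\cO\bigl(\tfrac{1}{k}\sum_j |C_j|\bigr)$; it is at least the \emph{number} of separators, and the number of chain components processed in one iteration is bounded only by $\nu_1(G^*,E)$ (via \cref{lem:strengthened-lb}), not by $\nu_k(G^*,E)$. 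Concretely, let $G^*$ be a disjoint union of $m$ edges $a_i \to b_i$ (no v-structures, $H$ the whole graph, so $\cE(G^*)$ is fully unoriented with $m$ chain components). Then $\nu_1(G^*,E) = m$ while $\nu_k(G^*,E) = \lceil m/k\rceil$, every component gets its own single-vertex separator, and your per-separator scheme performs $m$ interventions in the very first level --- exceeding the claimed $\cO(\log(2m)\cdot\log k\cdot \lceil m/k\rceil)$ bound whenever $k \gg \log m \cdot \log k$. The inequality $\nu_k(G^*,E) \geq \lceil \nu_1(G^*,E)/k\rceil$ (\cref{lem:bounded-size-lb}) cannot repair this, because the accumulated round-ups equal the number of separators rather than $\nu_1(G^*,E)/k$; the loss is a genuine factor of up to $k$, and it afflicts the algorithm you describe, not merely its analysis.

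The paper avoids this by \emph{pooling}: in \cref{alg:subset-search-algo}, the separator nodes of all chain components in an iteration are collected into a single set $Q$, and \emph{one} labelling scheme of \cite[Lemma 1]{shanmugam2015learning} is run on $Q$ with parameters $k' = \min\{k, |Q|/2\}$ and $a = \lceil |Q|/k'\rceil$. This is sound because an intervention may contain vertices from different chain components (orientations in distinct components do not interfere), every pair of adjacent vertices in $Q$ is still separated by some set $S_{x,y}$, and every vertex of $Q$ lies in some set, so all edges incident to each separator are cut. The rounding overhead is thus incurred once per iteration rather than once per component, and the analysis of \cite[Theorem 16]{choo2022verification} gives $|C_i| \in \cO(\nu_k(G^*,E)\cdot\log k)$ per iteration; multiplying by the $\cO(\log|\rho(\cI,V(H))|)$ iterations yields the theorem. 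Your argument becomes correct once the per-separator families are replaced by this pooled scheme; your ``main obstacle'' (that a separating family orients all edges incident to the separator, not just those inside it) is indeed fine for the reason you suspected, and is not where the difficulty lies.
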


\section{Interval stabbing problem on a rooted tree}
\label{sec:interval-stabbing-on-a-tree}

Here, we formulate a recurrence relation for the interval stabbing problem on a rooted tree and give an efficient dynamic programming implementation in \cref{sec:appendix-dp}.

To formally describe the recurrence relation, we will use the following definitions to partition the given set of intervals.
Given a set of intervals $\cJ$, we define the following sets with respect to an arbitrary vertex $v \in V$:
\begin{align*}
    E_v &= \{[a,b] \in \cJ : b=v\} && \text{(End with $v$)}\\
    M_v &= \{[a,b] \in \cJ : v \in (a,b)\} && \text{(Middle with $v$)}\\
    S_v &= \{[a,b] \in \cJ : a=v\} && \text{(Start with $v$)}\\
    W_v &= \{[a,b] \in \cJ : a,b \in V(T_{v}) \setminus \{v\}\} && \text{(Without $v$)}\\
    I_v &= E_v \cup M_v \cup S_v \cup W_v && \text{(Intersect $T_v$)}\\
    B_v &= S_v \cup W_v && \text{(Back of $I_v$)}\\
    C_v &= E_v \cup M_v \cup S_v && \text{(Covered by $v$)}
\end{align*}
Note that $I_v$ includes all the intervals in $\cJ$ that intersect with the subtree $T_v$ (i.e.\ has some vertex in $V(T_v)$) and $C_v$ includes all the intervals that will be covered whenever $v \in \cI$.
Observe that $I_y \subseteq I_v$ for any $y \in \Des(v)$.
See \cref{fig:line-example} for an example illustrating these definitions.

\begin{figure}[htbp]
\centering
\resizebox{0.7\linewidth}{!}{%
\begin{tikzpicture}
\node[draw, circle, minimum size=15pt, inner sep=2pt] at (0,0) (v1) {\small $v_1$};
\node[draw, circle, minimum size=15pt, inner sep=2pt, right=of v1] (v2) {\small $v_2$};
\node[draw, circle, minimum size=15pt, inner sep=2pt, right=of v2] (v3) {\small $v_3$};
\node[draw, circle, minimum size=15pt, inner sep=2pt, right=of v3] (v4) {\small $v_4$};
\node[draw, circle, minimum size=15pt, inner sep=2pt, right=of v4] (v5) {\small $v_5$};
\node[draw, circle, minimum size=15pt, inner sep=2pt, right=of v5] (v6) {\small $v_6$};
\node[draw, circle, minimum size=15pt, inner sep=2pt, right=of v6] (v7) {\small $v_7$};
\node[draw, circle, minimum size=15pt, inner sep=2pt, right=of v7] (v8) {\small $v_8$};
\draw[ultra thick, -stealth] (v1) -- (v2);
\draw[ultra thick, -stealth] (v2) -- (v3);
\draw[ultra thick, -stealth] (v3) -- (v4);
\draw[ultra thick, -stealth] (v4) -- (v5);
\draw[ultra thick, -stealth] (v5) -- (v6);
\draw[ultra thick, -stealth] (v6) -- (v7);
\draw[ultra thick, -stealth] (v7) -- (v8);

\node[draw, rounded corners, fit=(v4), inner sep=6pt] {};
\draw[thick, dashed, Bracket-Bracket] ($(v1) + (0,-1.75)$) -- node[midway, fill=white, text=black] {$J_1$} ($(v6) + (0,-1.75)$);
\draw[thick, dashed, Bracket-Bracket] ($(v2) + (0,-0.75)$) -- node[midway, fill=white, text=black] {$J_2$} ($(v4) + (0,-0.75)$);
\draw[thick, dashed, Bracket-Bracket] ($(v2) + (0,-1.25)$) -- node[midway, fill=white, text=black] {$J_3$} ($(v5) + (0,-1.25)$);
\draw[thick, dashed, Bracket-Bracket] ($(v4) + (0,-0.75)$) -- node[midway, fill=white, text=black] {$J_4$} ($(v7) + (0,-0.75)$);
\draw[thick, dashed, Bracket-Bracket] ($(v7) + (0,-0.75)$) -- node[midway, fill=white, text=black] {$J_5$} ($(v8) + (0,-0.75)$);
\end{tikzpicture}
}
\caption{
Consider the rooted tree $\widehat{G}$ with $v_1 \to \ldots \to v_8$ and $\cJ = \{J_1, \ldots, J_5\}$, where $J_1 = [v_1,v_6]$, $J_2 = [v_2,v_4]$, $J_3 = [v_2,v_5]$, $J_4 = [v_4,v_7]$, and $J_5 = [v_7,v_8]$.
Then, $E_{v_4} = \{J_2\}$, $M_{v_4} = \{J_1, J_3\}$, $S_{v_4} = \{J_4\}$, $W_{v_4} = \{J_5\}$.
}
\label{fig:line-example}
\end{figure}
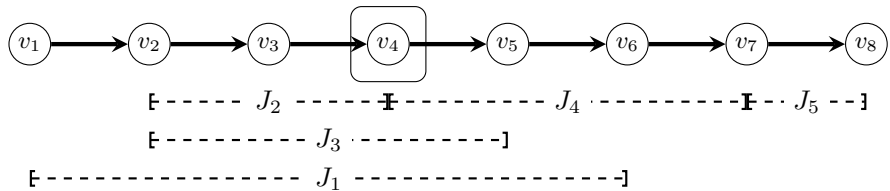

To solve the interval stabbing problem, we perform recursion from the root towards the leaves, solving subproblems defined on subsets of the intervals that are still ``relevant'' at each subtree.
More formally, for any set of intervals $U \subseteq \cJ$, let $\opt(U, v)$ denote the \emph{size} of the optimum solution to stab all the intervals in $U$ using only vertices $V(T_v)$ in the subtree $T_v$ rooted at $v$.
There are three possible cases while recursing from the root towards the leaves:

\begin{description}
    \item[Case 1.] If $U \cap E_v \neq \emptyset$, then $v$ \emph{must} be in any valid solution output and we recurse on the set $(U \setminus C_v) \cap I_y$ for subtree $T_y$ rooted at each child $y \in \Ch(v)$.
    \item[Case 2.] If $U \cap E_v = \emptyset$ and $v$ is in the output, then we \emph{can} recurse on the set $(U \setminus C_v) \cap I_y$ for subtree $T_y$ rooted at each child $y \in \Ch(v)$.
    \item[Case 3.] If $U \cap E_v = \emptyset$ and $v$ is \emph{not} in the output, then we \emph{need to} recurse on the set $U \cap I_y$ subtree $T_y$ rooted at each child $y \in \Ch(v)$.
\end{description}

For any $v \in V$ and $y \in \Ch(v)$, we have $C_v \cap I_y \subseteq E_y \cup M_y$ by definition.
So, $(U \setminus C_v) \cap I_y = U \cap B_y$.
The correctness of the first case is trivial while \cref{lem:recurrence-correctness} formalizes the correctness of the second and third cases.

\begin{restatable}{lemma}{recurrencecorrectness}
\label{lem:recurrence-correctness}
At least one of the following must hold for any optimal solution $\cI$ to the interval stabbing problem with respect to ordering $\pi$ and any vertex $v \in V$ with $E_v = \emptyset$:
\begin{enumerate}
    \item Either $v \in \cI$ or $\cI$ includes some ancestor of $v$.
    \item For $y \in \Ch(v)$ such that $C_v \cap I_y \neq \emptyset$, we must have $w_{v,y} \in \cI$ for some $w_{v,y} \in \Des(v) \cap \Anc[b_{v,y}]$, where $[a_{v,y},b_{v,y}] = \argmin_{[a,b] \in U \cap C_v \cap I_y} \{ \pi(b) \}$.
\end{enumerate}
\end{restatable}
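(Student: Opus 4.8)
The plan is to establish the dichotomy directly: assume the first alternative fails and deduce the second. Concretely, suppose that neither $v$ nor any ancestor of $v$ lies in $\cI$, i.e.\ $\cI \cap \Anc[v] = \emptyset$, and fix a child $y \in \Ch(v)$ with $U \cap C_v \cap I_y \neq \emptyset$. Write $[a_{v,y}, b_{v,y}]$ for the interval of $U \cap C_v \cap I_y$ minimizing $\pi(b)$, as in the statement. The goal is then to exhibit a vertex of $\cI$ lying on the segment $\Des(v) \cap \Anc[b_{v,y}]$.

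First I would pin down the two endpoints using the partition definitions. Since $E_v = \emptyset$, membership in $C_v = E_v \cup M_v \cup S_v$ forces $[a_{v,y}, b_{v,y}] \in M_v \cup S_v$, so $a_{v,y} \in \Anc[v]$ (with $a_{v,y} = v$ in the $S_v$ case and $a_{v,y} \in \Anc(v)$ in the $M_v$ case) and $b_{v,y} \in \Des(v)$. Membership in $I_y$, together with the fact that a downward path in a rooted tree enters at most one child-subtree of $v$, shows that the descending portion of the interval lands inside $T_y$; that is, $b_{v,y} \in \Des[y] \subseteq \Des(v)$. This is the key structural observation, and here I rely on $H_G$ (equivalently $\widehat{G}$) being a rooted tree, via \cref{lem:v-struct-free-iff-Hasse-is-a-tree}, so that $\Des[a_{v,y}] \cap \Anc[b_{v,y}]$ is exactly the unique path $a_{v,y} \to \cdots \to v \to y \to \cdots \to b_{v,y}$.

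Next I would invoke feasibility of $\cI$. Because $[a_{v,y}, b_{v,y}] \in U \subseteq \cJ$, the solution $\cI$ must stab it, so there is some $w \in \cI$ with $w \in \Des[a_{v,y}] \cap \Anc[b_{v,y}]$, i.e.\ $w$ lies on the path just described. That path splits at $v$ into an upper part contained in $\Anc[v]$ (the vertices from $a_{v,y}$ up to and including $v$) and a lower part equal to $\Des(v) \cap \Anc[b_{v,y}] = \Des[y] \cap \Anc[b_{v,y}]$ (the vertices from $y$ down to $b_{v,y}$). Since $w \in \cI$ while $\cI \cap \Anc[v] = \emptyset$, the vertex $w$ cannot lie on the upper part, hence $w \in \Des(v) \cap \Anc[b_{v,y}]$; setting $w_{v,y} = w$ gives the second alternative for this $y$, and repeating over all eligible children completes the proof.

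The endpoint bookkeeping is routine, so the step deserving the most care is the path-decomposition argument: one must verify that, in the rooted tree, any stabber of $[a_{v,y}, b_{v,y}]$ avoiding every ancestor-or-self of $v$ is necessarily a strict descendant of $v$ on the way down to $b_{v,y}$. This hinges on $a_{v,y} \in \Anc[v]$ and $b_{v,y} \in \Des(v)$ forcing the path to pass through $v$ and then through $y$. I expect the ordering $\pi$ to play only a selection role: choosing the earliest-ending interval makes $\Anc[b_{v,y}]$, and hence the asserted segment, as small as possible, yielding the strongest form of the conclusion, while the argument itself goes through verbatim for any interval in $U \cap C_v \cap I_y$.
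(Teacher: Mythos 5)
Your proof is correct and follows essentially the same route as the paper's: assume the first alternative fails, use feasibility of $\cI$ to get a stabber of $[a_{v,y},b_{v,y}]$ on the path $\Des[a_{v,y}] \cap \Anc[b_{v,y}]$ through $v$, and conclude it must lie on the lower segment $\Des(v) \cap \Anc[b_{v,y}]$. The paper's proof additionally records that the child subtrees partition $C_v$ and that, by minimality of $\pi(b_{v,y})$, stabbing $L_{U,v,y}$ stabs all of $C_v \cap I_y$ (a fact used for the DP recurrence rather than for the dichotomy itself), which matches your closing remark that minimality only sharpens the conclusion.
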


Therefore, we have the following recurrence relation:
\begin{equation}
\label{eq:dp-recurrence}
\opt(U, v) =
\begin{cases}
\infty & \text{if $U \not\subseteq I_v$}\\
\alpha_v & \text{if $U \subseteq I_v$, $U \cap E_v \neq \emptyset$}\\
\min\{\alpha_v, \beta_v\} & \text{if $U \subseteq I_v$, $U \cap E_v = \emptyset$}
\end{cases}
\end{equation}
\begin{align*}
\text{where}\qquad
\alpha_v &= 1 + \sum_{y \in \Ch(v)} \opt(U \cap B_y, y)\\
\beta_v &= \sum_{y \in \Ch(v)} \opt(U \cap I_y, y)
\end{align*}

That is, we must pick $v \in \cI$ whenever $E_v \neq \emptyset$, while $\alpha_v$ and $\beta_v$ correspond to the decisions of picking $v$ into the output and ignoring $v$ from the output respectively.
Then, $\opt(\cJ, r)$ is the optimum solution size to the interval stabbing problem, where $r$ as the root of the given rooted tree.

In \cref{sec:appendix-dp}, we explain how to implement \cref{eq:dp-recurrence} efficiently using dynamic programming.
To do so, we first compute the Euler tour data structure on $G$ and use it to define an ordering on $\cJ$ so that our state space ranges over the indices of a sorted array instead of a subset of intervals.

\section{Experiments and implementation}
\label{sec:experiments}

Here, we discuss some experiments conducted on synthetic graphs.
For full details and source code, see \cref{sec:appendix-experiments}.

\subsection{Experiment 1: \emph{Randomly chosen} target edges}

We implemented our subset verification algorithm by invoking our dynamic programming algorithm for the interval stabbing problem, given in \cref{sec:appendix-dp}.
Using experiments on synthetic random graphs, we empirically show that subset verification numbers $\nu_1(G,T)$ decreases from the full verification number $\nu_1(G,E)$ as the $T$ decreases (see \cref{fig:exp-p03}), as expected.
Despite the trend suggested in \cref{fig:exp-p03}, the number of target edges is typically \emph{not} a good indication for the number of interventions needed to be performed and one can always construct examples where $|T'| > |T|$ but $\nu(G,T') \not> \nu(G,T)$.
For example, for a subset $T \subseteq E$, we have $\nu(G^*, T') = \nu(G^*, T)$ if $T' \supset T$ is obtained by adding edges that are already oriented by orienting $T$.
Instead, the number of ``independent target edges''\footnote{Akin to ``linearly independent vectors'' in linear algebra.} is a more appropriate measure.

\begin{figure}[h]
\centering
\includegraphics[width=0.7\linewidth]{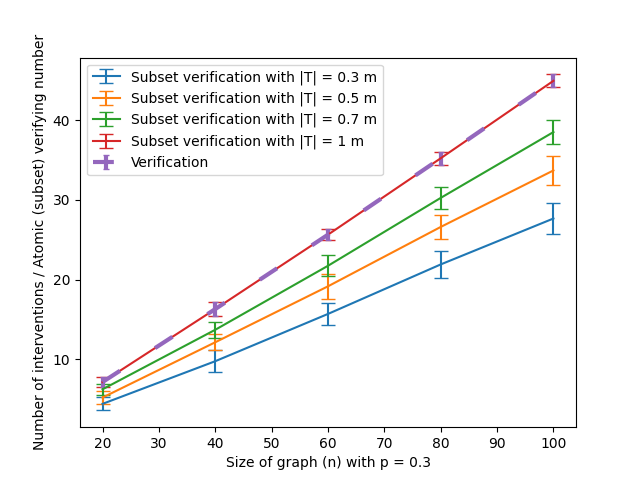}
\caption{
For each graph $G^*$ with $|E| = m$ edges, we sampled a random subset $T \subseteq E$ of various sizes.
The subset verification numbers $\nu_1(G^*, T)$ increases towards the full verification number $\nu_1(G^*, E)$ as the $|T|$ increases, and directly coincides with it in the special case of $|T| = m$.
}
\label{fig:exp-p03}
\end{figure}

\subsection{Experiment 2: Local causal graph discovery}

As motivated in \cref{sec:introduction}, many practical applications are interested in only recovering \emph{localized} causal relations for a fixed target variable of interest.
Unfortunately, existing full graph search algorithms are not tailored to recover directions of a given subset of edges.
In fact, one can create simple instances where the optimum number of interventions needed to perform the recovery task is just one, while a full graph search algorithm performs $\Omega(n)$ interventions\footnote{Suppose we wish to orient a single edge. Clearly single intervention on one of the endpoints suffice. Meanwhile, in the event that the target edge is \emph{not} in any 1/2-clique separator, the algorithm of \cite{choo2022verification} already incurs $\omega(G^*)$ interventions in the very first round, where $\omega(G^*)$ can be made arbitrarily large.}.

In \cref{fig:r-hop1}, we show the number of interventions needed to orient all edges within a $r$-hop neighborhood of some randomly chosen target node $v$.
We see that that node-induced subgraph search \texttt{SubsetSearch} uses less interventions than existing state-of-the-art full graph search algorithms, even when we terminate them as soon as all edges in $T$ are oriented.
We also give results for $r = 3$ in \cref{sec:appendix-experiments}.

\begin{figure}[h]
\centering
\includegraphics[width=0.7\linewidth]{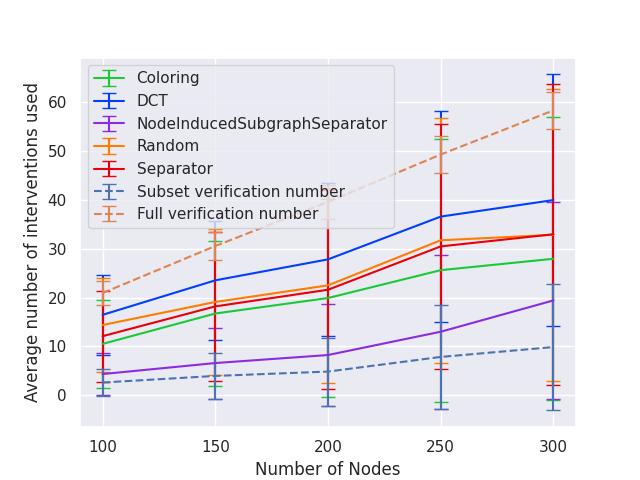}
\caption{\texttt{SubsetSearch} consistently uses less interventions than existing state-of-the-art full graph search algorithms when we only wish to orient edges within a 1-hop neighborhood of a randomly chosen target node $v$.}
\label{fig:r-hop1}
\end{figure}

\section{Conclusion and discussion}
\label{sec:conclusion}

Correctly identify causal relationships is a fundamental task both for understanding a system and for downstream tasks such as designing fair algorithms.
In many practical situations, the causal graph may be large and only a subset of causal relationships are important.
In this work, we give efficient algorithms for solving the subset verification and subset search problems under the standard causal inference assumptions (see \cref{sec:introduction}), generalizing the results of \cite{choo2022verification}.
However, if our assumptions are violated by the data, then wrong causal conclusions may be drawn and possibly lead to unintended downstream consequences.
Hence, it is of great interest to remove/weaken these assumptions while maintaining strong theoretical guarantees.

For search on a subset of target edges $T \subseteq E$, we showed that $\Omega(\textrm{vc}(T) \cdot \nu_1(G^*, T))$ interventions are necessary in general while $\cO(\log n \cdot \nu_1(G^*, E))$ interventions suffice when $T = E$.
This suggests that the verification number $\nu(G^*, \cdot)$ is perhaps too pessimistic of a benchmark to compare against in general, and we should instead compare against the ``best'' algorithm that does \emph{not} know $G^*$.

\subsubsection*{Acknowledgements}
This research/project is supported by the National Research Foundation, Singapore under its AI Singapore Programme (AISG Award No: AISG-PhD/2021-08-013).
KS was supported by a Stanford Data Science Scholarship, a Dantzig-Lieberman Research Fellowship and a Simons-Berkeley Research Fellowship.
Part of this work was done while the authors were visiting the Simons Institute for the Theory of Computing.
We would like to thank the AISTATS reviewers, Arnab Bhattacharyya, Themis Gouleakis, and Marcel Wien\"{o}bst for their valuable feedback, discussion, and writing suggestions.

\bibliographystyle{alpha}
\bibliography{refs}

\newpage
\appendix

\section{Meek rules}
\label{sec:appendix-meek-rules}

Meek rules are a set of 4 edge orientation rules that are sound and complete with respect to any given set of arcs that has a consistent DAG extension \cite{meek1995}.
Given any edge orientation information, one can always repeatedly apply Meek rules till a fixed point to maximize the number of oriented arcs.

\begin{definition}[Consistent extension]
A set of arcs is said to have a \emph{consistent DAG extension} $\pi$ for a graph $G$ if there exists a permutation on the vertices such that (i) every edge $\{u,v\}$ in $G$ is oriented $u \to v$ whenever $\pi(u) < \pi(v)$, (ii) there is no directed cycle, (iii) all the given arcs are present.
\end{definition}

\begin{definition}[The four Meek rules \cite{meek1995}, see \ref{fig:meek-rules} for an illustration]
\hspace{0pt}
\begin{description}
    \item [R1] Edge $\{a,b\} \in E \setminus A$ is oriented as $a \to b$ if $\exists$ $c \in V$ such that $c \to a$ and $c \not\sim b$.
    \item [R2] Edge $\{a,b\} \in E \setminus A$ is oriented as $a \to b$ if $\exists$ $c \in V$ such that $a \to c \to b$.
    \item [R3] Edge $\{a,b\} \in E \setminus A$ is oriented as $a \to b$ if $\exists$ $c,d \in V$ such that $d \sim a \sim c$, $d \to b \gets c$, and $c \not\sim d$.
    \item [R4] Edge $\{a,b\} \in E \setminus A$ is oriented as $a \to b$ if $\exists$ $c,d \in V$ such that $d \sim a \sim c$, $d \to c \to b$, and $b \not\sim d$.
\end{description}
\end{definition}

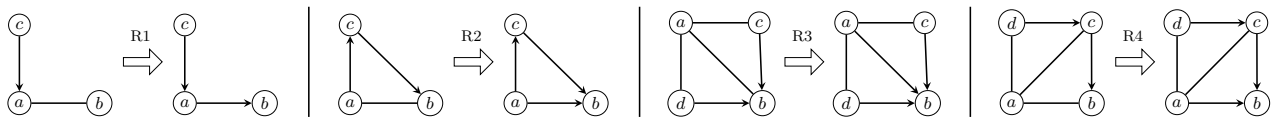
\begin{figure}[htbp]
\centering
\resizebox{\linewidth}{!}{%
\begin{tikzpicture}
%
%
\node[draw, circle, inner sep=2pt] at (0,0) (R1a-before) {\small $a$};
\node[draw, circle, inner sep=2pt, right=of R1a-before] (R1b-before) {\small $b$};
\node[draw, circle, inner sep=2pt, above=of R1a-before](R1c-before) {\small $c$};
\draw[thick, -stealth] (R1c-before) -- (R1a-before);
\draw[thick] (R1a-before) -- (R1b-before);

\node[draw, circle, inner sep=2pt] at (3,0) (R1a-after) {\small $a$};
\node[draw, circle, inner sep=2pt, right=of R1a-after] (R1b-after) {\small $b$};
\node[draw, circle, inner sep=2pt, above=of R1a-after](R1c-after) {\small $c$};
\draw[thick, -stealth] (R1c-after) -- (R1a-after);
\draw[thick, -stealth] (R1a-after) -- (R1b-after);

\node[single arrow, draw, minimum height=2em, single arrow head extend=1ex, inner sep=2pt] at (2.2,0.75) (R1arrow) {};
\node[above=5pt of R1arrow] {\footnotesize R1};

%
%
\node[draw, circle, inner sep=2pt] at (6,0) (R2a-before) {\small $a$};
\node[draw, circle, inner sep=2pt, right=of R2a-before] (R2b-before) {\small $b$};
\node[draw, circle, inner sep=2pt, above=of R2a-before](R2c-before) {\small $c$};
\draw[thick, -stealth] (R2a-before) -- (R2c-before);
\draw[thick, -stealth] (R2c-before) -- (R2b-before);
\draw[thick] (R2a-before) -- (R2b-before);

\node[draw, circle, inner sep=2pt] at (9,0) (R2a-after) {\small $a$};
\node[draw, circle, inner sep=2pt, right=of R2a-after] (R2b-after) {\small $b$};
\node[draw, circle, inner sep=2pt, above=of R2a-after](R2c-after) {\small $c$};
\draw[thick, -stealth] (R2a-after) -- (R2c-after);
\draw[thick, -stealth] (R2c-after) -- (R2b-after);
\draw[thick, -stealth] (R2a-after) -- (R2b-after);

\node[single arrow, draw, minimum height=2em, single arrow head extend=1ex, inner sep=2pt] at (8.2,0.75) (R2arrow) {};
\node[above=5pt of R2arrow] {\footnotesize R2};

%
%
\node[draw, circle, inner sep=2pt] at (12,0) (R3d-before) {\small $d$};
\node[draw, circle, inner sep=2pt, above=of R3d-before](R3a-before) {\small $a$};
\node[draw, circle, inner sep=2pt, right=of R3a-before] (R3c-before) {\small $c$};
\node[draw, circle, inner sep=2pt, right=of R3d-before](R3b-before) {\small $b$};
\draw[thick, -stealth] (R3c-before) -- (R3b-before);
\draw[thick, -stealth] (R3d-before) -- (R3b-before);
\draw[thick] (R3c-before) -- (R3a-before) -- (R3d-before);
\draw[thick] (R3a-before) -- (R3b-before);

\node[draw, circle, inner sep=2pt] at (15,0) (R3d-after) {\small $d$};
\node[draw, circle, inner sep=2pt, above=of R3d-after](R3a-after) {\small $a$};
\node[draw, circle, inner sep=2pt, right=of R3a-after] (R3c-after) {\small $c$};
\node[draw, circle, inner sep=2pt, right=of R3d-after](R3b-after) {\small $b$};
\draw[thick, -stealth] (R3c-after) -- (R3b-after);
\draw[thick, -stealth] (R3d-after) -- (R3b-after);
\draw[thick] (R3c-after) -- (R3a-after) -- (R3d-after);
\draw[thick, -stealth] (R3a-after) -- (R3b-after);

\node[single arrow, draw, minimum height=2em, single arrow head extend=1ex, inner sep=2pt] at (14.2,0.75) (R3arrow) {};
\node[above=5pt of R3arrow] {\footnotesize R3};

%
%
\node[draw, circle, inner sep=2pt] at (18,0) (R4a-before) {\small $a$};
\node[draw, circle, inner sep=2pt, above=of R4a-before](R4d-before) {\small $d$};
\node[draw, circle, inner sep=2pt, right=of R4d-before] (R4c-before) {\small $c$};
\node[draw, circle, inner sep=2pt, right=of R4a-before](R4b-before) {\small $b$};
\draw[thick, -stealth] (R4d-before) -- (R4c-before);
\draw[thick, -stealth] (R4c-before) -- (R4b-before);
\draw[thick] (R4d-before) -- (R4a-before) -- (R4c-before);
\draw[thick] (R4a-before) -- (R4b-before);

\node[draw, circle, inner sep=2pt] at (21,0) (R4a-after) {\small $a$};
\node[draw, circle, inner sep=2pt, above=of R4a-after](R4d-after) {\small $d$};
\node[draw, circle, inner sep=2pt, right=of R4d-after] (R4c-after) {\small $c$};
\node[draw, circle, inner sep=2pt, right=of R4a-after](R4b-after) {\small $b$};
\draw[thick, -stealth] (R4d-after) -- (R4c-after);
\draw[thick, -stealth] (R4c-after) -- (R4b-after);
\draw[thick] (R4d-after) -- (R4a-after) -- (R4c-after);
\draw[thick, -stealth] (R4a-after) -- (R4b-after);

\node[single arrow, draw, minimum height=2em, single arrow head extend=1ex, inner sep=2pt] at (20.2,0.75) (R4arrow) {};
\node[above=5pt of R4arrow] {\footnotesize R4};

\draw[thick] (5.25,1.75) -- (5.25,-0.25);
\draw[thick] (11.25,1.75) -- (11.25,-0.25);
\draw[thick] (17.25,1.75) -- (17.25,-0.25);
\end{tikzpicture}
}
\caption{An illustration of the four Meek rules}
\label{fig:meek-rules}
\end{figure}

There exists an algorithm \cite[Algorithm 2]{pmlr-v161-wienobst21a} that runs in $\cO(d \cdot |E|)$ time and computes the closure under Meek rules, where $d$ is the degeneracy of the graph skeleton\footnote{A $d$-degenerate graph is an undirected graph in which every subgraph has a vertex of degree at most $d$. Note that the degeneracy of a graph is typically smaller than the maximum degree of the graph.}.

The following results tell us that Meek rules can only ``propagate downstream''.

\begin{lemma}
\label{lem:if-v-orients-then-v-to-b-path-exists}
Let $G = (V,E)$ be a DAG.
If $v \in R^{-1}(G, a \to b)$, then there exists a directed path from $v$ to $b$ in $G$.
That is, $v \in \Anc[b]$.
\end{lemma}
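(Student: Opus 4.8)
The claim is that if intervening on the single vertex $v$ orients the arc $a \to b$, then $v$ must be an ancestor of $b$ in the ground-truth DAG $G$, i.e.\ there is a directed path from $v$ to $b$. The plan is to argue by induction on the order in which arcs get oriented during the orientation process that computes $R(G,v)$. Recall that $R(G,v)$ is obtained by taking the arcs directly cut by the intervention (all edges incident to $v$), together with the v-structure arcs of $G$, and then applying the four Meek rules to a fixed point. I would set up the induction on the ``time step'' at which each arc enters $R(G,v)$, and prove the stronger statement that \emph{every} arc $a \to b \in R(G,v)$ satisfies $v \in \Anc[b]$.

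\emph{Base cases.} First I would handle the arcs oriented immediately, before any Meek rule fires. These are of two kinds. The arcs cut by the intervention are exactly the edges incident to $v$; if such an arc is oriented $a \to b$ then either $b = v$ (impossible, since incoming arcs to $v$ are removed, so the oriented arc incident to $v$ points \emph{out} of $v$) or $a = v$, giving the trivial directed path $v \to b$ and hence $v \in \Anc[b]$. The other source of initial orientations is the v-structures of $G$; but these are oriented in the \emph{observational} essential graph already and are not caused by the intervention on $v$ — more carefully, I would note that a v-structure arc $a \to b$ has $a \to b$ in $G$ itself, and I must verify the ancestry claim holds. The cleanest route is to observe that if an arc is already oriented in $\cE(G) = \cE_\emptyset(G)$ then it lies in $R(G,v)$ regardless of $v$, so the statement should really be read modulo these: I would instead prove it for arcs in $R(G,v) \setminus R(G,\emptyset)$, i.e.\ the arcs genuinely \emph{newly} oriented by the intervention, which is the content relevant to $R^{-1}$. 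This reframing is the first thing to get right.

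\emph{Inductive step.} Suppose the claim holds for all arcs oriented strictly before the current one, and the current arc $a \to b$ is forced by some Meek rule. I would go through the four rules \textbf{R1}--\textbf{R4} one at a time, in each case using the directed edges that trigger the rule as the inductive hypothesis. For instance, in \textbf{R2} the trigger is $a \to c \to b$; by induction $v \in \Anc[c]$ via the arc $a \to c$ (or I use whichever triggering arc was newly oriented), and then $c \to b$ extends the directed path to reach $b$, giving $v \in \Anc[b]$. The key observation that makes every case work is that each Meek rule orients the target arc \emph{toward} a vertex $b$ that is a directed-graph descendant (in $G$) of some vertex already shown to be a descendant of $v$; so the directed path from $v$ can always be extended to $b$. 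I would tabulate, for each rule, which triggering arc supplies the path to $v$ and which edge of $G$ extends it to $b$.

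\emph{Main obstacle.} I expect the delicate part to be the bookkeeping in the inductive step for \textbf{R3} and \textbf{R4}, where the triggering configuration involves undirected edges $d \sim a \sim c$ in addition to the directed arcs, and one must be careful that the arc supplying the path to $v$ is indeed already oriented (and newly, if we work modulo $R(G,\emptyset)$) rather than assuming an undirected edge is a directed path in $G$. A second subtlety is cleanly separating v-structure arcs (present in $\cE(G^*)$ independent of $v$) from interventionally-triggered arcs, so that the inductive hypothesis is applied only to arcs whose ancestry from $v$ has actually been established; using the decomposition $R(G,v) = R(G^\emptyset, v) \,\dot\cup\, R(G,\emptyset)$ from \cref{thm:properties} would let me restrict attention to the v-structure-free graph $G^\emptyset$ and sidestep this entirely.
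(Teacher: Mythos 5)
Your proposal is correct and follows essentially the same route as the paper's proof: an induction over the Meek-rule derivation with a case analysis of R1--R4, in which the directed path from $v$ is extended to $b$ using the sound orientations in the triggering configuration; your explicit restriction to $R(G,v) \setminus R(G,\emptyset)$ (equivalently, working in $G^{\emptyset}$ via \cref{thm:properties}) just makes precise the ``new arc'' reading that the paper's proof leaves implicit. One minor correction: the base case $b = v$ is not impossible --- intervening on $v$ does recover the orientations of incoming arcs $a \to v$ (it is only the interventional graph $G_{\{v\}}$ that removes such arcs, not the recovered set $R(G,v)$) --- but this case is harmless for your argument since $v \in \Anc[v]$ by definition.
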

\begin{proof}
Since $v \in R^{-1}(G, a \to b)$, there must be at least one new arc in the Meek rule (see \ref{fig:meek-rules}) that fired to orient $a \to b$ due to $v$.
Let us perform induction on the number of hops from $v$.

Base case ($v$ appears in all R1 to R4):
\begin{description}
    \item[R1] $v$ can only be $c$ and we have $c \to a \to b$
    \item[R2] $v$ can only be $c$ and we have $c \to b$
    \item[R3] $v$ can either be $c$ or $d$. In either case, we have $c \to b$, $d \to b$
    \item[R4] $v$ can either be $c$ or $d$. In either case, we have $d \to c \to b$.
\end{description}

Inductive case:
\begin{description}
    \item[R1] We must have $v \in R^{-1}(G, c \to a)$. By induction, there is a path from $v$ to $a$, so there is a path from $v$ to $b$.
    \item[R2] We must have $v \in R^{-1}(G, a \to c)$ or $v \in R^{-1}(G, c \to b)$. By induction, there is a path from $v$ to $c$ or to $b$. In either case, there is a path from $v$ to $b$.
    \item[R3] We must have $v \in R^{-1}(G, c \to b)$ or $v \in R^{-1}(G, d \to b)$. In either case, there is a path from $v$ to $b$ by induction.
    \item[R4] We must have $v \in R^{-1}(G, d \to c)$ or $v \in R^{-1}(G, c \to b)$. By induction, there is a path from $v$ to $c$ or to $b$. In either case, there is a path from $v$ to $b$.
\end{description}
\end{proof}

We can also show a arc version of \cref{lem:if-v-orients-then-v-to-b-path-exists}.

\begin{lemma}
\label{lem:downstream-arc}
Let $G = (V,E)$ be a DAG.
If an arc $u \to v$ is used to orient $a \to b$, then $b \in \Des[v]$.
\end{lemma}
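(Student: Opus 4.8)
The plan is to mirror the structural case analysis used for \cref{lem:if-v-orients-then-v-to-b-path-exists}, except that I now track the \emph{head} $v$ of the arc that triggers a Meek rule rather than the source intervention vertex. First I would reduce the statement to a single firing: it suffices to show that whenever a directed arc $u \to v$ appears among the preconditions of the Meek rule (see \cref{fig:meek-rules}) whose conclusion is the freshly oriented arc $a \to b$, the head satisfies $b \in \Des[v]$. So the core of the argument is a finite check over R1--R4, listing in each rule exactly which preconditions are directed arcs (the undirected edges $d \sim a \sim c$ appearing in R3 and R4 are irrelevant, since the lemma only concerns arcs) and locating the head of each such arc relative to $b$.

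Concretely, I would verify the four cases as follows. In R1 the only arc is $c \to a$, whose head is $a$; since the conclusion is $a \to b$, we get $b \in \Des[a] = \Des[v]$. In R2 the arcs are $a \to c$ and $c \to b$: the first has head $c$ and the precondition $c \to b$ gives $b \in \Des[c]$, while the second has head $b$ so $b \in \Des[b]$ trivially. In R3 both directed arcs $d \to b$ and $c \to b$ have head $b$, whence $b \in \Des[b]$ immediately. In R4 the arcs are $d \to c$ (head $c$, and $c \to b$ yields $b \in \Des[c]$) and $c \to b$ (head $b$). In every case the head $v$ of the consumed arc is an ancestor of, or equal to, $b$, which is exactly the assertion $b \in \Des[v]$.

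If one instead reads ``used to orient $a \to b$'' as usage anywhere along the chain of Meek applications that eventually orients $a \to b$, rather than in the last firing alone, I would close the gap by composing the single-step claim: each firing produces a new arc whose head lies in $\Des$ of the head of every arc it consumed, and the descendant relation is transitive (concatenate the two directed paths, or collapse the case where one endpoint coincides), so iterating along the derivation chain still gives $b \in \Des[v]$. I do not expect a genuine obstacle here; the only points requiring care are the bookkeeping of which preconditions count as directed arcs in R3 and R4, and confirming that the degenerate cases with $v = b$ are correctly subsumed since $b \in \Des[b]$ always holds.
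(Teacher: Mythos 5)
Your proposal is correct and takes essentially the same approach as the paper: the paper's proof is simply the observation that in every Meek rule configuration (R1--R4), the head $v$ of any directed arc in the precondition satisfies $v \leq_{\Anc} b$, which is exactly your case check. Your additional paragraph handling the ``chain of firings'' reading via transitivity of $\Des[\cdot]$ is a sound (and slightly more careful) elaboration of what the paper leaves implicit.
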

\begin{proof}
Suppose $u \to v$ appears in the Meek rule that orients $a \to b$.
Observe that $v \leq_{\Anc} b$ in all cases.
\end{proof}
\section{Hasse diagrams and transitive reductions}
\label{sec:appendix-hasse}

\begin{definition}[Partial order]
The tuple $(\cX, \leq)$ is a partially ordered set (a.k.a.\ poset) whenever the partial order $\leq$ on a set $\cX$ satisfies three properties:
(1) Reflexivity: For all $x \in \cX$, $x \leq x$;
(2) Anti-symmetric: For all $x,y \in \cX$, if $x \leq y$ and $y \leq x$, then $x = y$;
(3) Transitivity: For all $x,y,z \in \cX$, if $x \leq y$ and $y \leq z$, then $x \leq z$.
Note that there may be pairs of elements in $X$ that are incomparable.
For any two elements $x,y \in \cX$, we say that $y$ \emph{covers} $x$ if $x \leq y$ and there is no $z \in \cX \setminus \{x,y\}$ such that $x \leq z \leq y$.
\end{definition}

\begin{definition}[Directed Hasse diagram]
Any poset $(\cX, \leq)$ can be \emph{uniquely} represented by a \emph{directed Hasse diagram} $H_{(X,\leq)}$, a directed graph where each element in $\cX$ is a vertex and
there is an arc $y \to x$ whenever $y$ covers $x$ for any two elements $x,y \in \cX$.
We call these arcs as \emph{Hasse arcs}.
\end{definition}

\begin{definition}[Transitive reduction]
A \emph{transitive reduction} of a directed graph $G = (V, E)$ is another directed graph $G^t = (V, E')$ with minimum sized $|E'|$ such that there is a directed path from $u$ to $v$ in $G$ if and only if there is a directed path from $u$ to $v$ in $G^t$ for any $u,v \in V$.
\end{definition}

Any DAG $G = (V,E)$ induces a poset on the vertices $V$ with respect to the ancestral relationships in the graph: $x \leq_{\Anc} y$ whenever $x \in \Anc[y]$.
Furthermore, it is known (e.g.\ see \cite{aho1972transitive}) that the transitive reduction $G^t$ of a DAG $G$ is \emph{unique}, is defined on a subset of edges (i.e.\ $E' \subseteq E$), is polynomial time computable, and is exactly the Hasse diagram $H_{(V, \leq_{\Anc})}$ defined with respect to $(V, \leq_{\Anc})$.
Since ``covers'' correspond to ``direct children'' for DAGs, we will say ``$y$ is a direct child of $x$'' instead of ``$x$ covers $y$'' to avoid confusion with the notion of covered edges.
In the rest of the paper, we will use $H_G = H_{(V, \leq_{\Anc})}$ to denote the Hasse diagram corresponding to a DAG $G = (V,E)$.
A vertex without incoming arcs in a Hasse diagram is called a \emph{root}.
In general, there may be multiple roots.
\section{Proof of \cref{lem:recovered-union}}
\label{sec:appendix-stronger-GSKB}

While \cite{ghassami2018budgeted} studies atomic interventions, their proof extends to non-atomic intervention sets, and even the observational case where the intervention set could be $\emptyset$.
For completeness, we give a short proof of \cref{lem:recovered-union} that generalizes the argument of \cite{ghassami2018budgeted} to non-atomic intervention sets.
In particular, our proof is much shorter because we use \cref{lem:triangle-lemma} in the case analysis of Meek R2.\footnote{For this case analysis, the proof of \cite[Appendix C]{ghassami2018budgeted} was more than 1.5 pages. Their structure $S_0$ (their figure 4) is precisely our \cref{lem:triangle-lemma} but it had some buggy arguments. For instance, in the case analysis of R2 with ground truth orientations $a \to c \to b \gets a$, they wish to argue that the arc $c \sim b$ would be oriented in $S_0$. However, their arguments concluded that $b \to c$ is oriented. Instead, they should use other arguments to conclude that $c \to b$ is oriented. For example, for the case analysis of $S_2$ (their figure 5), conditioned on $v_1 \to a$, Meek rules would enforce $v_1 \to b$ in the ground truth and thus $c \to b \gets v_1$ is a v-structure. They should have then used this to argue that $c \to b$ is oriented, instead of saying that Meek R4 orients $b \to c$. Fortunately, all such buggy arguments were fixable in their proofs and their conclusion is sound.}

\recoveredunion*
\begin{proof}
We show containment in both directions.

\textbf{Direction 1: $R(G,\cI_1) \cup R(G,\cI_2) \subseteq R(G, \cI_1 \cup \cI_2)$}

For any two interventions such that $A \subseteq B$, we can only recover more arc directions from the additional interventions in $B \setminus A$.
So, $R(G,\cI_1) \subseteq R(G, \cI_1 \cup \cI_2)$ and $R(G,\cI_2) \subseteq R(G, \cI_1 \cup \cI_2)$.

\textbf{Direction 2: $R(G, \cI_1 \cup \cI_2) \subseteq R(G,\cI_1) \cup R(G,\cI_2)$}

Consider an arbitrary edge $a \to b \in R(G, \cI_1 \cup \cI_2)$.
If $a \to b$ was oriented because there exists some intervention $S \in \cI_1 \cup \cI_2$ such that $|S \cap \{a,b\}| = 1$, then $a \to b \in R(G, \cI_1) \cup R(G,\cI_2)$ as well.
Suppose $a \to b$ was oriented in $R(G, \cI_1 \cup \cI_2)$ due to some Meek rule configuration (see \cref{fig:meek-rules}).
\begin{description}
    \item[R1] Suppose that $\exists c \in V$ such that $c \to a$ and $c \not\sim b$.
    That is, $c \to a$ is an oriented arc in either $R(G,\cI_1)$ and/or $R(G,\cI_2)$.
    Without loss of generality, $c \to a \in R(G,\cI_1)$.
    Then, R1 would have triggered and oriented $a \to b$ when we intervened on $\cI_1$ as well, i.e.\ $a \to b \in R(G,\cI_1)$.
    \item[R2] Suppose that $\exists$ $c \in V$ such that $a \to c \to b$.
    That is, $a \to c$ and $c \to b$ are oriented arcs in either $R(G,\cI_1)$ and/or $R(G,\cI_2)$.
    By \cref{lem:triangle-lemma}, it \emph{cannot} be the case $R(G,\cI_1)$ or $R(G,\cI_2)$ contains only exactly one of these arcs.
    Without loss of generality, suppose that $a \to c, c \to b \in R(G,\cI_1)$.
    Then, R2 would have triggered and oriented $a \to b$ when we intervened on $\cI_1$ as well, i.e.\ $a \to b \in R(G,\cI_2)$.
    \item[R3] Suppose that $\exists$ $c,d \in V$ such that $d \sim a \sim c$, $d \to b \gets c$, and $c \not\sim d$.
    Since $d \to b \gets c$ is a v-structure, it will appear in the observational essential graph and R3 will trigger to orient $a \to b$ in both $R(G,\cI_1)$ and $R(G,\cI_2)$.
    \item[R4] Suppose that $\exists$ $c,d \in V$ such that $d \sim a \sim c$, $d \to c \to b$, and $b \not\sim d$.
    That is, $d \to c$ and $c \to b$ are oriented arcs in either $R(G,\cI_1)$ and/or $R(G,\cI_2)$.
    Without loss of generality, $c \to a \in R(G,\cI_1)$.
    Then, when we intervened on $\cI_1$, R1 would have triggered to orient $c \to b$, and then R4 would have triggered to orient $a \to b$, i.e.\ $c \to b, a \to b \in R(G,\cI_1)$.
\end{description}
In all cases, we see that $a \to b \in R(G,\cI_1) \cup R(G,\cI_2)$.
\end{proof}
\section{Deferred proofs}
\label{sec:proofs}

\subsection{Properties of interventional essential graphs}

Our proof of \cref{thm:properties} is greatly simplified by \cref{lem:triangle-lemma}, an observation\footnote{A similar argument was made in \cite[Appendix B, Figure 4, Structure $S_0$]{ghassami2018budgeted} for their case analysis proof of \cref{lem:recovered-union}.} that triangles in interventional essential graphs \emph{cannot} have exactly one oriented arc.
The proof of \cref{lem:triangle-lemma} relies on the following known fact:

\begin{lemma}[Proposition 15 of \cite{hauser2012characterization}]
\label{lem:essential-graph-is-chain-graph}
Consider the $\cI$-essential graph $\cE_{\cI}(G^*)$ of some DAG $G^*$ and let $H \in CC(\cE_{\cI}(G^*))$ be one of its chain components.
Then, $\cE_{\cI}(G^*)$ is a chain graph and $\cE_{\cI}(G^*)[V(H)]$ is chordal.
\end{lemma}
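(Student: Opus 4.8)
The plan is to establish both conclusions from the definition of $\cE_{\cI}(G^*)$ as the partially directed graph whose arcs are exactly those shared by \emph{every} DAG in the $\cI$-Markov equivalence class ($\cI$-MEC) of $G^*$, together with the fact that $\cE_{\cI}(G^*)$ is closed under Meek's rules (\cref{sec:appendix-meek-rules}) once seeded with the v-structures of $G^*$ and the arcs cut by the interventions in $\cI$. Since this is a known statement \cite{hauser2012characterization,andersson1997characterization}, I would only need to reproduce/sketch the argument. Throughout, write $H$ for a chain component, i.e.\ a connected component of the undirected part of $\cE_{\cI}(G^*)$.

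\textbf{Chain graph property.} I would argue by contradiction: suppose $\cE_{\cI}(G^*)$ contains a partially directed cycle and pick one, $v_1 \to v_2 \sim v_3 \sim \cdots \sim v_\ell \sim v_1$, of minimum length, where $v_1 \to v_2$ is directed and (by definition of a partially directed cycle) every directed arc points forward along the cycle. The key step is to show that the seed orientation $v_1 \to v_2$ must propagate: because $\cE_{\cI}(G^*)$ is closed under \textbf{R1} and \textbf{R2}, no edge incident to an already-oriented arc can survive undirected in the forbidden configurations. A case analysis on whether consecutive cycle vertices are adjacent (invoking \textbf{R1} when a directed arc meets a non-adjacent endpoint, and \textbf{R2} when a directed $2$-path closes into a triangle) forces the successor edge $v_2 \sim v_3$ to become $v_2 \to v_3$, and inductively the whole cycle is oriented forward, contradicting the acyclicity of any member DAG of the $\cI$-MEC; in the alternative branch an induced chord produces a strictly shorter partially directed cycle, contradicting minimality. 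This case analysis is the main obstacle, since it is exactly where the interplay between the Meek rules and the cycle structure must be handled carefully.

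\textbf{Chordal chain components.} Once the chain graph property is in hand, I would first observe that no two vertices $a,c \in V(H)$ can be joined by a directed arc of $\cE_{\cI}(G^*)$: such an arc, together with any undirected $a$--$c$ path inside the connected component $H$, would itself form a partially directed cycle. Hence a non-adjacency $a \not\sim c$ within $H$ is also a non-adjacency in $G$ for any DAG $G$ in the $\cI$-MEC. Now fix such a $G$ and consider the orientation it induces on the edges of $H$. It is acyclic (a sub-orientation of the acyclic $G$) and immorality-free: an immorality $a \to b \gets c$ with $a \not\sim c$ and $a,b,c \in V(H)$ would be a genuine v-structure of $G$, hence oriented in $\cE(G^*)$ and therefore in $\cE_{\cI}(G^*)$, contradicting that $a \sim b$ and $c \sim b$ are undirected in $H$. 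Finally I would invoke the classical characterization (via perfect elimination orderings; see \cite{blair1993introduction}) that a connected undirected graph is chordal if and only if it admits an acyclic orientation with no immoralities, which yields chordality of $H$. The only subtlety here is guaranteeing that the non-adjacency used in the immorality argument is genuine, which is precisely what the chain graph property from the previous step supplies.
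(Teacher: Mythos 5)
First, a point of reference: the paper does not prove this lemma at all --- it is imported verbatim as Proposition 15 of \cite{hauser2012characterization}, and the paper's own Triangle Lemma (\cref{lem:triangle-lemma}) is then \emph{derived from} it. So your attempt has to be judged as a from-scratch reconstruction of the Hauser--B\"{u}hlmann result. Your second half is fine: given the chain graph property, you correctly observe that no directed arc can join two vertices of the same chain component (else a partially directed cycle), that any member DAG of the $\cI$-MEC induces an acyclic orientation of $H$, that an immorality inside $H$ would be a v-structure common to all members and hence directed in $\cE_{\cI}(G^*)$, and the classical equivalence ``chordal $\iff$ admits an acyclic immorality-free orientation'' closes the argument.

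The genuine gap is in the chain-graph half, and it sits exactly where the real difficulty of the theorem lies: the length-3 case. Consider a partially directed triangle $v_1 \to v_2$, $v_2 \sim v_3$, $v_3 \sim v_1$ with exactly one directed arc. This configuration is a \emph{fixed point} of the Meek rules: R1 needs a non-adjacency, but all three vertices are pairwise adjacent; R2 needs a directed $2$-path, and there is none; R3 and R4 also require non-adjacencies a triangle cannot supply. There is also no chord, so the minimality branch of your case analysis is empty as well. Hence neither branch of your argument (``propagate with R1/R2'' or ``shorter cycle via a chord'') touches this case, and no closure-based reasoning can, since closure under the rules is consistent with the configuration. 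Excluding it is precisely the content of \cref{lem:triangle-lemma}, which the paper proves \emph{using} the chain-graph property you are trying to establish, so appealing to anything of that form here would be circular. Ruling out the one-arc triangle genuinely requires the deeper structure of (interventional) Markov equivalence --- e.g.\ Chickering's covered-edge-reversal characterization \cite{chickering2013transformational} or the strong-protection analysis of \cite{andersson1997characterization} --- to show that if both $v_2 \sim v_3$ and $v_1 \sim v_3$ are reversible across the $\cI$-MEC then so is $v_1 \to v_2$. Your longer-cycle cases do work (for chordless cycles of length at least $4$, R1 propagation forces a directed cycle; any chord, directed or not, yields a strictly shorter partially directed sub-cycle), but without the triangle case the induction has no base, so the chain-graph half, and with it the whole proof, is incomplete.

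A minor additional caveat: your starting point, that $\cE_{\cI}(G^*)$ is exactly the Meek closure of the v-structures and intervention-cut arcs, is the \emph{completeness} theorem of \cite{hauser2012characterization}, which in that paper is developed alongside Proposition 15 itself. Your argument only ever uses the soundness (closure) direction, which is elementary, so this is presentational rather than fatal --- but it is worth flagging to avoid the appearance of assuming the stronger result.
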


Recall that chain graphs are partially directed graphs that do not contain directed cycles (i.e.\ a sequence of edges forming an undirected cycle with at least one oriented arc, and all oriented arcs are in the same direction along this cycle).

\begin{restatable}[Triangle lemma]{lemma}{trianglelemma}
\label{lem:triangle-lemma}
Consider a DAG $G = (V,E)$ and an intervention set $\cI \subseteq 2^V$.
For any triangle on vertices $u,v,w \in V$ and three edges $u \sim v, v \sim w, u \sim w \in E$, we have $|R(G,\cI) \cap \{u \sim v, v \sim w, u \sim w\}| \neq 1$.
\end{restatable}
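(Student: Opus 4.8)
The plan is to argue by contradiction, with all the real work deferred to the chain graph property recorded in \cref{lem:essential-graph-is-chain-graph}. Suppose toward a contradiction that exactly one of the three triangle edges is oriented in $\cE_{\cI}(G)$, i.e.\ that $|R(G,\cI) \cap \{u \sim v, v \sim w, u \sim w\}| = 1$. Since the statement is symmetric in the three vertices, I may relabel so that the oriented edge is $u \sim v$, and by swapping the roles of $u$ and $v$ if necessary I may assume its orientation is $u \to v$. Thus $u \to v \in R(G,\cI)$, while the remaining two edges $v \sim w$ and $u \sim w$ are both present in the skeleton but left undirected in $\cE_{\cI}(G)$.

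Next I would exhibit a partially directed cycle from these three edges. Traverse the triangle as $u \to v$, then along the undirected edge $v \sim w$, then along the undirected edge $w \sim u$, returning to $u$. This is a simple cycle on the three distinct vertices $u, v, w$ (so its length is $k = 3 \geq 3$), it contains at least one directed arc (namely $u \to v$), and because $u \to v$ is the \emph{only} directed arc along the cycle, the requirement that all directed arcs agree in direction along the cycle is satisfied vacuously. Hence, by the definition given in \cref{sec:prelim}, this cycle is partially directed.

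Finally I invoke \cref{lem:essential-graph-is-chain-graph}, which guarantees that $\cE_{\cI}(G)$ is a chain graph and therefore contains no partially directed cycle. This directly contradicts the cycle constructed above, so the assumption that exactly one edge is oriented is untenable; that is, $|R(G,\cI) \cap \{u \sim v, v \sim w, u \sim w\}| \neq 1$. I do not expect a genuine obstacle in this argument, since the substantive content is packaged inside \cref{lem:essential-graph-is-chain-graph}. The only points requiring care are the $\emph{without loss of generality}$ reduction, which must use the full relabeling symmetry of the triangle to normalize both which edge is oriented and in which direction, and the observation that the ``same direction'' clause in the definition of a partially directed cycle is automatically met when precisely one arc is oriented.
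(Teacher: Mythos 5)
Your proof is correct and follows essentially the same route as the paper's: both assume exactly one oriented edge, normalize it to $u \to v$ by symmetry, exhibit the partially directed cycle $u \to v \sim w \sim u$, and derive a contradiction with the chain-graph property from \cref{lem:essential-graph-is-chain-graph}. Your write-up is in fact slightly more careful than the paper's (which loosely calls the cycle ``directed''), since you explicitly verify the vacuous ``same direction'' clause in the definition of a partially directed cycle.
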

\begin{proof}
Recall that $R(G, \cI) = A(\cE_{\cI}(G))$ and $\cE_{\cI}(G)$ is a chain graph (\cref{lem:essential-graph-is-chain-graph}).
Suppose there is a triangle on $u,v,w$ and $|R(G,\cI) \cap \{u \sim v, v \sim w, u \sim w\}| = 1$.
Without loss of generality, suppose $u \to v \in R(G,\cI)$.
Then, $u \to v \sim w \sim u$ is a directed cycle in $\cE_{\cI}(G)$, contradicting the fact that $\cE_{\cI}(G)$ is a chain graph.
\end{proof}

Note that there exists partially oriented chain graphs that are \emph{not} interventional essential graphs where every triangle does not have exactly one oriented arc, and the edge-induced subgraph on the unoriented arcs do not form v-structures for any acyclic completion.
See \cref{fig:converse-counterexample}.

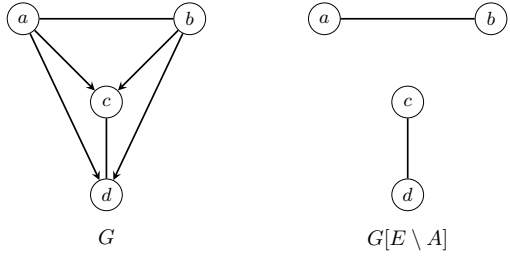
\begin{figure}[htbp]
\centering
\resizebox{0.4\linewidth}{!}{%
\begin{tikzpicture}
\node[draw, circle, minimum size=15pt, inner sep=2pt] at (0,0) (Gc) {\small $c$};
\node[draw, circle, minimum size=15pt, inner sep=2pt, above left=of Gc] (Ga) {\small $a$};
\node[draw, circle, minimum size=15pt, inner sep=2pt, above right=of Gc] (Gb) {\small $b$};
\node[draw, circle, minimum size=15pt, inner sep=2pt, below=of Gc] (Gd) {\small $d$};
\draw[thick, -stealth] (Ga) -- (Gc);
\draw[thick, -stealth] (Gb) -- (Gc);
\draw[thick, -stealth] (Ga) -- (Gd);
\draw[thick, -stealth] (Gb) -- (Gd);
\draw[thick] (Ga) -- (Gb);
\draw[thick] (Gc) -- (Gd);
\node[text centered, below=5pt of Gd] {$G$};

\node[draw, circle, minimum size=15pt, inner sep=2pt] at (5,0) (Hc) {\small $c$};
\node[draw, circle, minimum size=15pt, inner sep=2pt, above left=of Hc] (Ha) {\small $a$};
\node[draw, circle, minimum size=15pt, inner sep=2pt, above right=of Hc] (Hb) {\small $b$};
\node[draw, circle, minimum size=15pt, inner sep=2pt, below=of Hc] (Hd) {\small $d$};
\draw[thick] (Ha) -- (Hb);
\draw[thick] (Hc) -- (Hd);
\node[text centered, below=5pt of Hd] {$G[E \setminus A]$};
\end{tikzpicture}
}
\caption{
In the partially oriented chain graph $G$ with oriented arcs $A$, all triangles have exactly two oriented arcs.
Since $a \sim b$ and $c \sim d$ could be independently oriented in either directions, there are four possible acyclic completions of $G$.
The edge-induced subgraph $G[E \setminus A]$ of the unoriented arcs does not have any v-structures for any of these possible acyclic completions.
However, $G$ \emph{cannot} be an interventional essential graph: there are no v-structures and every vertex is incident to some unoriented edge.
}
\label{fig:converse-counterexample}
\end{figure}

Recall the definition of oriented subgraphs and recovered parents:
For any interventional set $\cI \subseteq 2^V$ and $u \in V$, $G^{\cI} = G[E \setminus R(G,\cI)]$ is the \emph{oriented} subgraph induced by the \emph{unoriented arcs} in $G$ and $\Pa_{G,\cI}(u) = \{x \in V: x \to u \in R(G,\cI)\}$ is the recovered parents of $u$ by $\cI$.

\interventionalessentialgraphproperties*
\begin{proof}\hspace{0pt}
\begin{enumerate}
    \item By definition of $G^{\cA}$ and chain components.
    
    \item For the statement to be false, there must exist a triangle in $G$ on 3 vertices $u,v,w$ such that $u \to v \in R(G,\cA)$ and $u \to w, v \to w \not\in R(G,\cA)$.
    This is impossible by \Cref{lem:triangle-lemma}.
    
    \item Without loss of generality, it suffices to consider two adjacent vertices $u$ and $v$ in the same chain component of $\cE_{\cA}(G)$ with $|\Pa_{G,\cA}(u)| \geq |\Pa_{G,\cA}(v)|$.
    This is because for the claim to hold between any two vertices $x$ and $y$ in the same chain component, we can apply the result for consecutive pairs of adjacent vertices between any connected path between $x$ and $y$.
    
    If $|\Pa_{G,\cA}(u)| = 0$, then the claim trivially holds since $|\Pa_{G,\cA}(v)| \leq |\Pa_{G,\cA}(u)| = 0$.
    
    Now, for $|\Pa_{G,\cA}(u)| > 0$, consider an arbitrary vertex $x \in \Pa_{G,\cA}(u)$.
    If $x \not\sim v$ in $G$, then $u \to v \in R(G,\cA)$ via R1 configuration $x \to u \sim v$, thus $u$ and $v$ are \emph{not} adjacent in the chain components of $\cE_{\cA}(G)$.
    Meanwhile, \cref{lem:triangle-lemma} tells us we \emph{cannot} have $x \sim v$ in $G$ with $x \sim v \not\in R(G,\cA)$, as this would further imply $|R(G,\cA) \cap \{x \sim u, x \sim v, u \sim v\}| = 1$, which is a contradiction.
    If $v \to x$ in $G$ and $v \to x \in R(G,\cA)$, then $u \to v \in R(G,\cA)$ via R2 configuration $v \to x \to u \sim v$, thus $u$ and $v$ are \emph{not} adjacent in the chain components of $\cE_{\cA}(G)$.
    So, we must have $x \to v$ in $G$ and $x \to v \in R(G,\cA)$.
    That is, $x \in \Pa_{G,\cA}(v)$.
    
    \item Suppose, for a contradiction, that $u$ and $v$ lie in the same chain component of $\cE_{\cA}(G)$.
    Since $u \to v \in R(G,\cA)$, we see that $u \notin \Pa_{G,\cA}(u)$ and $u \in \Pa_{G,\cA}(v)$, i.e.\ $\Pa_{G,\cA}(u) \neq \Pa_{G,\cA}(v)$.
    This is a contradiction to condition 3.
    
    \item
    Fix an acyclic completion $G'$ of $\cE(G^{\cA})$.
    Suppose, for a contradiction, that there is a cycle in $E(G') \cup R(G,\cA)$.
    Let $C = v_0 \to v_1 \to \ldots \to v_k \to v_0$ be the \emph{smallest} such cycle.
    Since $G'$ is an acyclic completion, we know that at least one arc of $C$ was from $R(G,\cA)$.
    Without loss of generality, suppose that $v_0 \to v_1 \in R(G,\cA)$.
    Since $G^*$ is acyclic, we also know that at least one arc of $C$ is \emph{not} from $R(G,\cA)$.
    
    If $k = 2$, then $C = v_0 \to v_1 \to v_2 \to v_0$.
    By \cref{lem:triangle-lemma}, we \emph{cannot} have $v_1 \to v_2, v_2 \to v_0 \not\in R(G,\cA)$.
    \begin{itemize}
        \item If $v_0 \to v_1, v_1 \to v_2 \in R(G,\cA)$ and $v_2 \to v_0 \not\in R(G,\cA)$, then Meek rule R2 will orient $v_0 \to v_2$ via $v_0 \to v_1 \to v_2 \sim v_0$.
        \item If $v_2 \to v_0, v_0 \to v_1 \in R(G,\cA)$ and $v_1 \to v_2 \not\in R(G,\cA)$, then Meek rule R2 will orient $v_2 \to v_1$ via $v_2 \to v_0 \to v_1 \sim v_2$.
    \end{itemize}
    In any case, we arrive at a contradiction.
    
    Now, consider the case where $k > 2$.
    Let $v_i \to v_j \not\in R(G,\cA)$ be the arc of $C$ with the smallest source index $i \geq 1$, where we write $j = (i+1) \mod k$ for notational convenience.
    Since $v_i \to v_j \not\in R(G,\cA)$, it must be the case that the arc $v_{i-1} \sim v_j$ exists in $G$, otherwise Meek rule R1 will orient $v_i \to v_j$ via $v_{i-1} \to v_i \sim v_j$.
    By \cref{lem:triangle-lemma} and the assumption that $v_i \to v_j \not\in R(G,\cA)$, it must be the case that $v_{i-1} \sim v_j$ is oriented in $R(G,\cA)$.
    \begin{itemize}
        \item If $v_{i-1} \to v_j \in R(G,\cA)$, then $v_0 \to \ldots \to v_{i-1} \to v_j \to \ldots \to v_k \to v_0$ is a smaller cycle than $C$ in $E(G') \cup R(G,\cA)$.
        \item If $v_j \to v_{i-1} \in R(G,\cA)$, then Meek rule R2 orients $v_j \to v_i$ via $v_j \to v_{i-1} \to v_i \sim v_j$.
    \end{itemize}
    In either case, we arrive at a contradiction.
    
    \item We show containment in both directions.
    
    \textbf{Direction 1: $R(G^{\cA},\cB) \subseteq R(G,\cB) \setminus R(G,\cA)$}
    
    Suppose, for a contradiction, that there exists an arc $a \to b \in R(G^{\cA},\cB)$ but $a \to b \not\in R(G,\cB) \setminus R(G,\cA)$.
    Note that $a \to b \not\in R(G,\cA)$ otherwise $a \to b$ will \emph{not} appear in $G^{\cA}$ and thus cannot be in $R(G^{\cA},\cB)$.
    So, to show a contradiction, it suffices to argue that $a \to b \in R(G,\cB)$.
    
    There are two possible situation explaining $a \to b \in R(G^{\cA},\cB)$: either (i) there is some intervention $I \in \cB$ such that $|I \cap \{a,b\}| = 1$, or (ii) Meek rules oriented $\{a,b\}$.
    
    (i) In the first situation where there is some intervention $I \in \cB$ such that $|I \cap \{a,b\}| = 1$, we see that $a \to b \in R(G,\cB)$ as well.
    Contradiction.
    
    (ii) In the second situation, let us consider the sequence of Meek rule configurations that oriented $a \to b$ in $R(G^{\cA},\cB)$.
    By definition of $R(G^{\cA},\cB)$, all the edges (oriented or not) involved in these configurations do \emph{not} belong to $R(G,\cA)$.
    If these configurations also appear in $R(G,\cB)$, then $a \to b \in R(G,\cB)$ as well.
    The only reason why any of these configurations may not appear in $R(G,\cB)$ is because there was some other edge in the node-induced subgraph that was removed due to being in $R(G,\cA)$:
    \begin{itemize}
        \item Suppose the R1 configuration involving three vertices $u \to v \sim w$ and $u \not\sim w$ was one of the configurations used by $R(G^{\cA},\cB)$ to orient $a \to b$, but this configuration \emph{did not} appear for $R(G,\cB)$.
        Then, it was because $u \sim w$ appears in $G$ and was removed from $G^{\cA}$ due to it being oriented in $R(G,\cA)$.
        However, this means that $|R(G,\cA) \cap \{u,v,w\}| = 1$, contradicting \cref{lem:triangle-lemma}.
        \item All possible edges are present in the node-induced subgraph of the R2 configuration.
        \item There is only one possible edge removed by $R(G,\cA)$ in configurations R3 and R4.
        By the same argument to the R1 configuration above, one can check that this implies that there is some triangle on three vertices $u,v,w$ within the configuration such that $|R(G,\cA) \cap \{u,v,w\}| = 1$, contradicting \cref{lem:triangle-lemma}.
    \end{itemize}
    In other words, whenever $a \to b \in R(G^{\cA},\cB)$ due to Meek rules, we see that $a \to b \in R(G,\cB)$.
    
    \textbf{Direction 2: $R(G,\cB) \setminus R(G,\cA) \subseteq R(G^{\cA},\cB)$}
    
    For any arc $a \to b \in R(G,\cB) \setminus R(G,\cA)$, we have that $a \to b \not\in R(G,\cA)$ and so the edge $a \sim b$ appears in $G^{\cA}$.
    That is, we may ignore v-structure arcs in $R(G,\cB)$.
    There are two possible situation explaining why an arc $a \to b$ belongs in $R(G,\cB)$: either (i) there is some intervention $I \in \cB$ such that $|I \cap \{a,b\}| = 1$, or (ii) Meek rules oriented $\{a,b\}$.
    
    (i) In the first situation, we have $a \to b \in R(G^{\cA},\cB)$ as well.
    
    (ii) We prove the second situation by contradiction.
    Suppose, for a contradiction, that $(R(G,\cB) \setminus R(G,\cA)) \setminus R(G^{\cA},\cB)$ is non-empty.
    Let $a \to b \in (R(G,\cB) \setminus R(G,\cA)) \setminus R(G^{\cA},\cB)$ be oriented in $R(G,\cB)$ via a sequence of Meek rule configurations such that only the last configuration does not appear in $R(G^{\cA},\cB)$.
    By calling such a Meek rule configuration a \emph{bad} configuration, we can see why such an arc $a \to b$ exists: for any arc in $(R(G,\cB) \setminus R(G,\cA)) \setminus R(G^{\cA},\cB)$ that uses more than one bad configuration, one of the oriented arcs in the bad configuration is an arc in $(R(G,\cB) \setminus R(G,\cA)) \setminus R(G^{\cA},\cB)$ that is oriented with strictly fewer bad orientations.
    
    Now, consider the last Meek rule configuration used to orient $a \to b$ in $R(G,\cB)$.
    We make two observations:
    \begin{description}
        \item[O1] If \emph{none} of the oriented arcs of this Meek rule configuration belongs to $R(G,\cA)$, then these arcs appear in $G^{\cA}$ and will be oriented due to $B$, thus $a \to b \in R(G^{\cA},\cB)$.
        \item[O2] If \emph{all} of the oriented arcs of this Meek rule configuration belong to $R(G,\cA)$, then $a \to b \in R(G,\cA)$, which is a contradiction.
    \end{description}
    There is only one arc in the R1 configuration, so either O1 or O2 applies.
    Meanwhile, the arcs in the R3 configuration form a v-structure and so \emph{both} of them belong to $R(G,\{\emptyset\}) \subseteq R(G,\cA)$, so O2 applies.
    In R2 or R4 configurations, there are two arcs.
    If none or both arcs are in $R(G,\cA)$, then we can apply O1 or O2.
    If exactly one of the arcs are in $R(G,\cA)$, then there will be a triangle on three vertices $u,v,w$ within the configuration such that $|R(G,\cA) \cap \{u,v,w\}| = 1$.
    This is impossible according to \cref{lem:triangle-lemma}.
    Since all cases except O1 lead to contradiction, we must have $a \to b \in R(G^{\cA},\cB)$.
    This contradicts our assumption that $a \to b \in (R(G,\cB) \setminus R(G,\cA)) \setminus R(G^{\cA},\cB)$.
    
    \item By \cref{lem:recovered-union}, we have that $R(G, \cA \cup \cB) = R(G,\cA) \cup R(G,\cB)$.
    The claim follows using statement 6.
    
    \item The disjointness follows from definitions of $G^{\cA}$ and $G^{\cB}$.
    We now argue containment in both directions.
    
    \textbf{Direction 1: $R(G,\cA \cup \cB) \subseteq R(G^{\cA},\cB) \;\dot\cup\ R(G^{\cB},\cA) \;\dot\cup\; (R(G,\cA) \cap R(G,\cB))$}
    
    By \cref{lem:recovered-union}, we know that $R(G,\cA \cup \cB) = R(G,\cA) \cup R(G,\cB)$.
    Consider an arbitrary arc $e \in E$ such that $e \in R(G,\cA) \cup R(G,\cB)$.
    Suppose $e \not\in R(G,\cA) \cap R(G,\cB)$.
    If $e \in R(G,\cA) \setminus R(G,\cB)$, then $e$ appears in $G^{\cB}$ and so $e \in R(G^{\cB},\cA)$.
    If $e \in R(G,\cB) \setminus R(G,\cA)$, then $e$ appears in $G^{\cA}$ and so $e \in R(G^{\cA},\cB)$.
    In either case, we see that $e \in R(G^{\cB},\cA) \cup R(G^{\cA},\cB) \subseteq R(G^{\cA},\cB)$.
    Therefore, $e \in R(G^{\cA},\cB) \;\dot\cup\ R(G^{\cB},\cA) \;\dot\cup\; (R(G,\cA) \cap R(G,\cB))$.
    
    \textbf{Direction 2: $R(G^{\cA},\cB) \;\dot\cup\ R(G^{\cB},\cA) \;\dot\cup\; (R(G,\cA) \cap R(G,\cB)) \subseteq R(G,\cA \cup \cB)$}
    
    We argue that each of $R(G^{\cA},\cB)$, $R(G^{\cB},\cA)$, and $R(G,\cA) \cap R(G,\cB)$ is a subset of $R(G,\cA \cup \cB)$.
    By statement 7, $R(G^{\cA},\cB) \subseteq R(G,\cA \cup \cB)$ and $R(G^{\cB},\cA) \subseteq R(G,\cA \cup \cB)$.
    By \cref{lem:recovered-union}, we know that $R(G,\cA \cup \cB) = R(G,\cA) \cup R(G,\cB)$ and so $R(G,\cA) \cap R(G,\cB) \subseteq R(G,\cA) \cup R(G,\cB) \subseteq R(G,\cA \cup \cB)$.
    
    \item By definition, covered edges are not v-structure edges.
    By \cite[Lemma 27]{choo2022verification}, covered edges will not be oriented by Meek rules and we need to intervene on either of the endpoints to orient it.
    Therefore, $R(G, \emptyset)$ does not contain any covered edges.
    \qedhere
\end{enumerate}
\end{proof}

\subsection{Hasse diagrams of DAGs without v-structures}

\vstructfreeiffHasseisatree*
\begin{proof}
We prove each direction separately.

\textbf{Direction 1: If DAG $G$ is a single connected component without v-structures, then the Hasse diagram $H_G$ is a directed tree with a unique root vertex}

Suppose, for a contradiction, that there are two distinct paths $P_1 = (u, \ldots, u', x)$ and $P_2 = (v, \ldots, v', x)$ in $H_G$ that end at some vertex $x \in V$, where $u' \neq v'$.
If $u' \not\sim v'$ in $G$, then $u' \to x \gets v'$ is a v-structure.
Without loss of generality, $u' \to v'$.
But this means that $x \not\in \Ch(u')$ and so we should not have an arc $u' \to x$ in the Hasse diagram $H_G$.
Contradiction.

\textbf{Direction 2: If the Hasse diagram $H_G$ is a directed tree with a unique root vertex, then DAG $G$ is a single connected component without v-structures.}

Suppose, for a contradiction, that the DAG $G$ has a v-structure $u \to x \gets v$.
Since $u,v \in \Anc(x)$ and reachability is preserved in Hasse diagrams, there will be paths $P_1 = (u, \ldots, u', x)$ and $P_2(v, \ldots, v', x)$ in $H_G$.
Since $H_G$ has a unique root, $u'$ and $v'$ must have a common ancestor $y$ in $H_G$ ($y$ could be the root itself).
But this means that the Hasse diagram is \emph{not} a directed tree since there are two paths from $y$ to $x$ in $H_G$.
Contradiction.
\end{proof}

The proof of \cref{thm:orienting-vertices-form-an-interval} relies on \cref{lem:intermediate-direct-arcs-exist}, \cref{lem:middle}, \cref{cor:child-orients-all-outgoing arcs}, \cref{lem:if-w-orients-then-y-orients}, and \cref{lem:if-w-does-not-orient-then-x-does-not-orient}, which we prove first.

\begin{restatable}{lemma}{intermediatedirectarcsexist}
\label{lem:intermediate-direct-arcs-exist}
Suppose $H_G$ is a rooted tree induced by a DAG $G = (V,E)$ without v-structures.
If $u \to v$ in $G$, then $u \to w$ in $G$ for any two vertices $u,v \in V$ and for all $w \in \Des(u) \cap \Anc(v)$.
\end{restatable}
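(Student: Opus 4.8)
The plan is to prove the statement directly from the no-v-structure assumption and acyclicity of $G$, without needing the full Hasse-tree structure (only the fact that $G$ is a DAG without v-structures). Since $w \in \Anc(v)$, there is a directed path $w = z_0 \to z_1 \to \cdots \to z_m = v$ in $G$ with $m \geq 1$. I would show by downward induction on $i \in \{m, m-1, \ldots, 0\}$ that $u \to z_i$ is an arc of $G$; the case $i = 0$ then gives exactly the desired conclusion $u \to w$. The base case $i = m$ is the hypothesis $u \to v = u \to z_m$.

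For the inductive step, suppose $u \to z_{i+1} \in E$. Together with the path arc $z_i \to z_{i+1} \in E$, both $u$ and $z_i$ are parents of $z_{i+1}$. First I would verify that $u, z_i, z_{i+1}$ are three distinct vertices: $u \neq z_{i+1}$ and $z_i \neq z_{i+1}$ since $u \to z_{i+1}$ and $z_i \to z_{i+1}$ are arcs, and $u \neq z_i$ because $z_i \in \Des[w] \subseteq \Des(u)$ (as $w \in \Des(u)$), so $z_i$ is a strict descendant of $u$ and cannot equal $u$. Since $G$ has no v-structures, the two distinct parents $u$ and $z_i$ of $z_{i+1}$ must be adjacent, i.e.\ $u \sim z_i$; otherwise $u \to z_{i+1} \gets z_i$ with $u \not\sim z_i$ is a v-structure.

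It then remains to orient this edge as $u \to z_i$ rather than $z_i \to u$. This is the one place requiring a short argument, and it is the main subtlety: the no-v-structure property only yields the \emph{adjacency} $u \sim z_i$, so I must invoke acyclicity to fix its direction. Observe that $u \in \Anc(z_i)$, because $u \to \cdots \to w \to \cdots \to z_i$ is a directed path (using $w \in \Des(u)$ and $z_i \in \Des[w]$). Hence if the edge were oriented $z_i \to u$, combining it with this directed path $u$ to $z_i$ would produce a directed cycle, contradicting that $G$ is acyclic. Therefore $u \to z_i \in E$, which closes the induction.

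The bulk of the proof is routine bookkeeping (establishing distinctness so that the three vertices genuinely form a candidate v-structure when non-adjacent); the only genuine obstacle is the orientation step, which is handled cleanly by noting that every $z_i$ on the path is a strict descendant of $u$ and then using acyclicity to exclude $z_i \to u$.
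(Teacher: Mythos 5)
Your proof is correct, and its engine is the same as the paper's: a backward induction from $v$ along a directed path, using the no-v-structure assumption to force adjacency between $u$ and each path vertex, and acyclicity to force the orientation away from $u$. The route differs in one genuine respect: the paper inducts along the unique $u$--$v$ path $(u, w_1, \ldots, w_k, v)$ in the Hasse tree $H_G$, identifying $\Des(u) \cap \Anc(v)$ with the interior vertices of that path, and therefore leans on \cref{lem:v-struct-free-iff-Hasse-is-a-tree}; you instead fix a single $w \in \Des(u) \cap \Anc(v)$ and induct along an \emph{arbitrary} directed path $w = z_0 \to \cdots \to z_m = v$ in $G$ itself. Your version is thus more elementary and marginally more general: it needs only that $G$ is a DAG without v-structures, with the Hasse-tree hypothesis playing no role. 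You also make explicit a step the paper leaves implicit: the no-v-structure property yields only the adjacency $u \sim z_i$, and the orientation $u \to z_i$ must come from acyclicity (your observation that each $z_i$ is a strict descendant of $u$, so $z_i \to u$ would close a directed cycle); the paper's phrase ``the arc $u \to w_k$ exists (otherwise $u \to v \gets w_k$ is a v-structure)'' silently relies on the same fact. What the paper's choice buys is alignment with the rest of \cref{sec:hasse}, where the Hasse path is the object of interest and the arcs $u \to w_1, \ldots, u \to w_k$ are obtained in one sweep; what yours buys is independence from the Hasse machinery and a self-contained argument.
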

\begin{proof}
If $u \to v$ in $G$, then there exists a path $P_{u \to v}$ in $H_G$.
If $P_{u \to v} = (u,v)$ is a direct arc, then the claim is vacuously true.
Suppose $P_{u \to v} = (u, w_1, \ldots, w_k, v)$ where $\Des(u) \cap \Anc(v) = \{w_1, \ldots, w_k\}$.
This implies that the arcs $u \to w_1 \to \ldots \to w_k \to v$ are all present in $G$.
Since $G$ has no v-structures, it must be the case that the arc $u \to w_k$ exists (otherwise $u \to v \gets w_k$ is a v-structure).
Thus, by recursive argument from $w_{k-1}$ up to $w_1$, there must be arcs $u \to w$ in $G$ for \emph{any} $w \in \{w_1, \ldots, w_k\}$.
\end{proof}

\begin{restatable}{lemma}{middle}
\label{lem:middle}
Let $G = (V,E)$ be a DAG without v-structures and $u \to v$ be an unoriented arc in $\cE(G)$.
Then, we have $w \in R^{-1}_1(G, u \to v)$ for any $w \in \Des(u) \cap \Anc(v)$ in the Hasse diagram $H_G$.
\end{restatable}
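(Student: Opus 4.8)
The plan is to exhibit an explicit chain of Meek-rule applications, triggered by the single intervention on $w$, that culminates in orienting $u \to v$; that is, I want to show directly that $u \to v \in R(G,w)$, equivalently $w \in R^{-1}_1(G, u \to v)$. The starting observation is that $w \in \Des(u) \cap \Anc(v)$ together with \cref{lem:intermediate-direct-arcs-exist}, applied to the arc $u \to v$, forces $u \to x$ to be a genuine arc of $G$ for every $x \in \Des(u) \cap \Anc(v)$. In particular $u$ is adjacent to $w$ and to every vertex lying on a directed $w$-to-$v$ path. The delicate point is that I cannot simply claim $w \to v$ is an edge and finish with a single application of R2: as an instance such as $u \to w \to x_1 \to v$ with the extra arcs $u \to x_1$ and $u \to v$ shows, $w$ and $v$ need not be adjacent, so the orientation must be propagated step by step.

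First I would fix a \emph{shortest} directed path $w = x_0 \to x_1 \to \cdots \to x_k = v$ in $G$ (one exists since $w \in \Anc(v)$). The crucial structural fact is that this path is \emph{induced}: for $i < j$ any edge $x_i \sim x_j$ must be the arc $x_i \to x_j$, since $x_i \in \Anc(x_j)$ along the path, and if $j > i+1$ this arc would be a shortcut contradicting minimality. Hence $x_i \sim x_j$ if and only if $|i-j| = 1$. Combined with the consequence of \cref{lem:intermediate-direct-arcs-exist} above, this pins down the full local picture: $u$ is adjacent to every $x_i$, while the $x_i$ themselves are pairwise non-adjacent except for consecutive pairs.

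Intervening on $w$ orients the edges incident to $w$, in particular $u \to w$ and $w \to x_1$. I then propagate along the path by induction on $i$: assuming $x_{i-1} \to x_i$ is oriented, Meek rule R1 fires with witness $c = x_{i-1}$ — since $x_{i-1} \to x_i$ is oriented and $x_{i-1} \not\sim x_{i+1}$ by the induced property — to orient $x_i \to x_{i+1}$. This yields all path arcs $x_0 \to x_1, \ldots, x_{k-1} \to x_k$. A second induction then orients $u \to x_i$: with $u \to x_{i-1}$ already oriented (base case $u \to x_0 = u \to w$), with $x_{i-1} \to x_i$ oriented from the first induction, and with the edge $u \sim x_i$ present, Meek rule R2 orients $u \to x_i$. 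Taking $i = k$ gives $u \to v \in R(G,w)$, as desired; the degenerate case $k=1$, where $w \to v$ is a genuine edge, is subsumed since the first induction is then vacuous and the single R2 step closes the argument.

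The main obstacle, and the reason the naive one-shot R2 argument fails, is precisely that $u$ is adjacent to \emph{every} path vertex: this rules out ever using $u$ itself as the R1 witness needed to orient the path arcs $x_i \to x_{i+1}$, which are Hasse arcs and hence cannot be oriented by R2 either. The resolution is the induced-path observation, which supplies at distance two along the path exactly the non-adjacent vertices that R1 requires. Once this is in place, the remainder is a routine double induction, with soundness of Meek rules guaranteeing that every orientation produced agrees with $G$.
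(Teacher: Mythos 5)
Your proof is correct. It shares the paper's high-level plan --- orient a directed path from $w$ down to $v$ via Meek rule R1, while using \cref{lem:intermediate-direct-arcs-exist} to make $u$ adjacent to every intermediate vertex so that R2 can pull $u \to x_i$ along and finally orient $u \to v$ --- but the mechanism by which you obtain the non-adjacencies that R1 needs is genuinely different. The paper works along the Hasse path and a greedy chain: starting from the intervened vertex it repeatedly jumps to $\last(\pi,\cdot)$, the direct neighbour that is furthest in an arbitrary topological order among $\Des(\cdot) \cap \Anc(v)$, and it is the \emph{maximality} of that choice which supplies the R1 witness (the current vertex is non-adjacent to everything strictly beyond its $\last$). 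You instead fix a \emph{shortest} directed $w$-to-$v$ path in $G$ and use the standard fact that shortest paths in a DAG are chordless, so $x_{i-1} \not\sim x_{i+1}$ gives the R1 witness immediately; the rest is a clean double induction. Your route is more elementary and easier to verify (no topological order, no extremal argument), while the paper's route stays within its Hasse-diagram machinery and yields along the way the slightly stronger intermediate fact that all arcs $u \to w_j$ up to the current chain vertex become oriented. One cosmetic slip in your closing discussion: you assert that the shortest-path arcs $x_i \to x_{i+1}$ are Hasse arcs, which need not be true (a direct arc $w \to v$ of $G$ is itself a shortest path of length one, yet it is not a Hasse arc whenever $\Des(w) \cap \Anc(v) \neq \emptyset$); this remark is purely motivational, and nothing in your actual argument depends on it.
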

\begin{proof}
If $v \in \Ch(u)$, then the result is vacuously true since $\Des(u) \cap \Anc(v) = \emptyset$.
Suppose $P_{u \to v} = (u, w_1, \ldots, w_k, v)$ is the unique path from $u$ to $v$ in $H_G$, where $\Des(u) \cap \Anc(v) = \{w_1, \ldots, w_k\}$.
By \cref{lem:intermediate-direct-arcs-exist}, we know that the arc $u \to w$ exists in $G$ for any $w \in \Anc(v) \cap \Des(u)$.
Suppose we intervened on an arbitrary $w_i \in \{w_1, \ldots w_k\}$, where $\Des(w_i) \cap \Anc(v) = \{w_{i+1}, \ldots, w_k, v\}$.
For any fixed arbitrary valid permutation $\pi$, define
\[
\last(\pi, w_i) = \argmax_{\substack{z \in \Des(w_i) \cap \Anc(v)\\(w_i \to z) \in E}} \{\pi(z)\}
\]
as the ``last'' vertex in $\Des(w_i) \cap \Anc(v)$ that $w_i$ has a direct arc within $G $.

If $\last(\pi, w_i) = v$, then intervening on $w_i$ yields $u \to w_i \to w_j = v \sim u$.
So, Meek rule R2 will trigger to orient $u \to v$.
Otherwise, if $\last(\pi, w_i) \neq v$, then intervening on $w_i$ will cause two sets of Meek rules to fire:
(1) Meek rule R2 will orient the arcs $u \to w_j$, for all $j \in \Des(w_i) \cap \Anc[\last(\pi, w_i)]$ since $u \to w_i \to w_j \sim u$;
(2) Meek rule R1 will orient all outgoing arcs of $\last(\pi, w_i)$, since $w_i \not\sim w_z$ for all $z \in \Des(\last(\pi, w_i))$, by maximality of $\last(\pi, w_i)$.
Repeating the above argument by replacing the role of $w_i$ by $\last(\pi, w_i)$, we see that the arc $w_{final} \to v$ will eventually be oriented by some $w_{final} \in \Des(w_i)$, and so Meek rule R2 will orient $u \to v$.
Intuitively, the direction $u \to v$ is forced in order to avoid a directed cycle since we will have $u \to w_i \to \last(\pi, w_i) \to \last(\last(\pi, w_i)) \ldots \to \last(\last(\ldots(\last(\pi, w_i)))) = w_{final} \to v \sim u$.
\end{proof}

\begin{restatable}{corollary}{childorientsalloutgoingarcs}
\label{cor:child-orients-all-outgoing arcs}
Let $G = (V,E)$ be a DAG without v-structures.
For a vertex $w$ and a direct child $y \in \Ch(w)$, we have $\{w \to z : z \in V(T_y)\} \subseteq R_1(G,y)$ where $T_y$ is the subtree rooted at $y$ in the Hasse diagram $H_G$.
That is, intervening on $y$ orients all outgoing arcs of $w$ with an endpoint in $T_y$.
\end{restatable}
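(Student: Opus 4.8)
The plan is to obtain this almost immediately from \cref{lem:middle}, by verifying that the intervened vertex $y$ sits strictly between $w$ and $z$ along the relevant branch of $H_G$ for every outgoing arc $w \to z$ with $z \in V(T_y)$. First I would fix an arbitrary $z \in V(T_y)$ for which $w \to z$ is an arc of $G$; the goal is to show $w \to z \in R(G,y)$, and taking the union over all such $z$ then yields the claimed containment.

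Before invoking \cref{lem:middle} I need to supply its hypothesis that $w \to z$ is an \emph{unoriented} arc in $\cE(G)$. Since $G$ has no v-structures, the connected component containing $w$ has a completely unoriented essential graph: there is no v-structure to seed any Meek rule, and each of R1--R4 requires an already-oriented arc in its precondition, so no edge is forced in $\cE(G)$. Hence every arc of $G$, in particular $w \to z$, corresponds to an unoriented edge of $\cE(G)$. Moreover, by \cref{lem:v-struct-free-iff-Hasse-is-a-tree} the Hasse diagram on this component is a rooted tree, so $T_y = \Des[y]$ and the ancestral relations used below are well defined.

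Next I would split into two cases according to whether $z = y$. If $z = y$, then $w \to y$ is an edge incident to the intervened vertex, hence oriented (consistently with $G$) simply because an atomic intervention on $y$ cuts every edge between $y$ and $V \setminus \{y\}$; thus $w \to y \in R(G,y)$. If $z \neq y$, then $z \in \Des[y] \setminus \{y\} = \Des(y)$, so $y \in \Anc(z)$; combined with $y \in \Ch(w) \subseteq \Des(w)$ this gives $y \in \Des(w) \cap \Anc(z)$. Applying \cref{lem:middle} with $u = w$ and $v = z$ (whose hypotheses are met by the previous paragraph) yields $y \in R^{-1}_1(G, w \to z)$, i.e.\ $w \to z \in R(G,y)$. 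Taking the union over all admissible $z$ gives $\{w \to z : z \in V(T_y)\} \subseteq R(G,y) = R_1(G,y)$.

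The only real subtlety, and the step I would be careful about, is the boundary case $z = y$: \cref{lem:middle} orients $u \to v$ from any \emph{strictly} intermediate vertex of the Hasse path, so it gives nothing when $y$ would have to play the role of the endpoint $v$ itself. This is precisely why the direct child $y$ must be handled separately through the defining effect of the intervention rather than through Meek-rule propagation. Everything else is a matter of translating ``$z$ lies in the subtree $T_y$'' into the ancestral membership $y \in \Des(w) \cap \Anc(z)$ required to feed \cref{lem:middle}.
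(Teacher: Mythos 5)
Your proof is correct and takes essentially the same route as the paper: the paper's one-line proof likewise observes that $y \in \Des(w) \cap \Anc(z)$ for $z$ in the subtree and invokes the intermediate-vertex orientation lemma (\cref{lem:middle}) to conclude $w \to z \in R(G,y)$. You are in fact slightly more careful than the paper, which glosses over the boundary case $z = y$ (where $y \notin \Anc(y)$, so \cref{lem:middle} does not literally apply and the arc $w \to y$ is instead oriented directly by the intervention on $y$) and does not spell out why $w \to z$ is unoriented in $\cE(G)$.
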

\begin{proof}
Since $y \in \Des(w) \cap \Anc(z_i)$ for any $z_i \in V(T_y)$, \cref{lem:intermediate-direct-arcs-exist} gives $w \to z_i \in R_1(G,y)$.
\end{proof}

\begin{restatable}{lemma}{ifworientsthenyorients}
\label{lem:if-w-orients-then-y-orients}
Let $G = (V,E)$ be a DAG without v-structures and $u \to v$ be an unoriented arc in $\cE(G)$.
If a vertex $w \in R^{-1}_1(G, u \to v)$, then $y \in R^{-1}_1(G, u \to v)$ for all $y \in \Des(w) \cap \Anc(u)$.
\end{restatable}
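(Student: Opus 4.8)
The plan is to reduce to a single step along the Hasse tree and then show that intervening on the lower vertex recovers, inside the relevant subtree, everything the higher intervention recovers.

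By \cref{lem:v-struct-free-iff-Hasse-is-a-tree} the component of $H_G$ containing $u \to v$ is a rooted tree, and $\Des(w) \cap \Anc(u)$ is exactly the set of internal vertices on the unique Hasse path $w = c_0 \to c_1 \to \dots \to c_m \to u$. Hence it suffices to prove the single-step claim: if $c_i \in R^{-1}_1(G, u \to v)$ and $c_{i+1} \in \Ch(c_i) \cap \Anc(u)$, then $c_{i+1} \in R^{-1}_1(G, u \to v)$; chaining this from $c_0 = w$ down to $y$ gives the lemma. Writing $y = c_{i+1}$, note $y \in \Anc(u) \subseteq \Anc(v)$, so $u$ and $v$ are strict descendants of $y$, i.e.\ $u,v \in V(T_y)$, while the orienting vertex $c_i = \Pa(y)$ satisfies $c_i \notin V(T_y)$.

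The heart of the single step is the containment $R(G, c_i) \cap \{a \to b : a,b \in V(T_y)\} \subseteq R(G, y)$; since $u \to v$ is such an arc and lies in $R(G, c_i)$, this yields $u \to v \in R(G, y)$. The key preliminary fact is that intervening on $y$ already orients every arc entering $T_y$ from outside: cutting the incoming arcs at $y$ orients every $x \to y$, and for every $x \in \Anc(y)$ and $c \in V(T_y) \setminus \{y\}$ with $x \to c \in E$ we have $y \in \Des(x) \cap \Anc(c)$, so \cref{lem:middle} gives $x \to c \in R(G, y)$. I would then prove the containment by induction on the order in which $R(G, c_i)$ orients its arcs. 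An arc with both endpoints in $V(T_y)$ is never oriented directly by the intervention at $c_i \notin V(T_y)$, so it is oriented by a Meek rule; moreover, since $G$ has no v-structures, R3 never fires (its configuration would itself be a v-structure of $G$), leaving only R1, R2, R4.

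It remains to check that each such orientation transfers to $R(G, y)$. For R1 and R2 the prerequisite arcs either lie inside $T_y$ (available by the induction hypothesis) or enter $T_y$ from $\Anc(y)$ (available by the boundary fact above); for R2 the intermediate vertex is forced to lie inside $T_y$, since an external neighbour of an inside vertex is an ancestor of it, which would reverse the required arc direction. The one delicate case, which I expect to be the main obstacle, is R4 orienting an inside arc $a \to b$ via $d \to c \to b$ with $d \sim a \sim c$ and $b \not\sim d$: the only subcase whose prerequisites are not immediately available under $y$ is when both $c$ and $d$ lie in $\Anc(y)$, so that $d \to c$ is an external arc. The resolution is that in this subcase $d \to a$ is itself a boundary arc (as $d \in \Anc(y) \subseteq \Anc(a)$) and hence oriented by $y$, so together with $b \not\sim d$ Meek rule R1 orients $a \to b$ under $y$ directly, bypassing $d \to c$; \cref{lem:triangle-lemma} and the absence of v-structures rule out the remaining combinations of endpoints. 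This closes the induction, establishes the single step, and thus the lemma.
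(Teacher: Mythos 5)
Your proposal is correct and takes essentially the same route as the paper's own proof: the same reduction to a single parent--child step on the Hasse tree, the same induction over the Meek derivation order showing that every arc with both endpoints in $T_y$ oriented by the higher intervention is also oriented by $y$ (with arcs entering $T_y$ from $\Anc(y)$ supplied by \cref{lem:middle}), and the identical resolution of the delicate R4 case via the boundary arc $d \to a$ together with rule R1 and $b \not\sim d$. The only differences are cosmetic: you dismiss R3 outright using the absence of v-structures (the paper treats it as a case that transfers anyway), and you spell out why the R2 intermediate vertex must lie inside $T_y$, a point the paper's write-up glosses over.
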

\begin{proof}
We begin by making two observations which grants us stronger properties about $w$ and $y$:
\begin{enumerate}
    \item If $w \in \{u,v\}$, then $\Des(w) \cap \Anc(u) = \emptyset$ and the result is trivially true.
    \item Suppose the chain of direct children from $w$ to $v$ is $w \to y_1 \to y_2 \to \ldots \to y_k \to u \to v$.
    To prove the result, it suffices to argue that $y_1 \in R^{-1}_1(G, u \to v)$ and then apply induction to conclude that $y_2 \in R^{-1}_1(G, u \to v)$, and so on.
\end{enumerate}
Thus, in the rest of the proof, we can assume that $w \not\in \{u,v\}$ and $y \in \Ch(w)$ is a direct child of $w$.

Since $w \not\in \{u,v\}$ and $y \in \Ch(w)$, we see that the arc $u \to v$ belongs in the set $R_1(G, w) \cap B(T_y)$, where $T_y$ is the subtree rooted at $y$ in the Hasse diagram $H_G$.
So, it suffices to show that $R_1(G, w) \cap B(T_y) \subseteq R_1(G, y)$.
By \cref{lem:if-v-orients-then-v-to-b-path-exists}, intervening on $w$ will \emph{not} orient new arc directions of the form $a \to b$ where $b \in \Anc(w)$.
So, we can partition the \emph{newly recovered arcs} in $R_1(G, w)$ into three disjoint sets $R^1(w)$, $R^2(w)$, and $R^3(w)$ as follows:
\begin{align*}
R^1(w) &= \{ a \to w : a \in \Anc(w) \}\\
R^2(w) &= \{ a \to b : a \in \Anc[w], b \in \Des(w) \}\\
R^3(w) &= \{ a \to b : a, b \in \Des(w) \}
\end{align*}

Clearly, $R^1(w) \cap B(T_y) = \emptyset$ since neither endpoint lies in $T_y$.
By \cref{lem:intermediate-direct-arcs-exist} and \cref{cor:child-orients-all-outgoing arcs}, we know that $R^2(w) \cap B(T_y) \subseteq R_1(G, y)$.
Thus, it suffices to argue that $R^3(w) \cap B(T_y) \subseteq R_1(G, y)$.
To do so, consider an arbitrary (non-unique) ordering $\sigma$ on the arcs in $R^3(w) \cap B(T_y)$ by which Meek rule orients them: $\sigma(a \to b) < \sigma(c \to d)$ means that the arc $a \to b$ was oriented before $c \to d$.

Suppose, for a contradiction, that there exists some arc in $(R^3(w) \cap B(T_y)) \setminus R(y)$.
Let $a \to b \in (R^3(w) \cap B(T_y)) \setminus R(y)$ be the arc with the \emph{minimal} $\sigma$ ordering, where $a,b \in \Des[y]$.
We check the four Meek rule configurations that could have oriented $a \to b$ while using some arc orientation that is \emph{not} in $R_1(G, y)$.
If the oriented arc in the configuration belongs to $R^3(w)$, then it must be arc with lower $\sigma$ ordering than $a \to b$ and is oriented in $R_1(G, y)$ by assumption.
Meanwhile, observe that any arc in $R^1(w) \cup R^2(w)$ has $\Anc[w]$ as the start of the arc.
In the configurations of Meek rule R1 and R2, if any of the oriented arcs belong to $R^1(w) \cup R^2(w)$, then $a \to b \not\in B(T_y)$ since $a \in \Anc[w]$.
So, it suffices to check only Meek rules R3 and R4:
\begin{description}
    \item[R3] There exists $c,d$ such that $d \sim a \sim c$, $d \to b \gets c$, $c \not\sim d$.\\
    If $(c \to b) \in R^1(w)$ or $d \to b \in R^1(w)$, then $b = w$ and $a \to b \not\in B(T_y)$.
    Meanwhile, if $c \to b \in R^2(w)$ or $d \to b \in R^2(w)$, then $c \to b, d \to b \in R_1(G, y)$ by \cref{lem:middle}.
    Thus, Meek rule R3 will trigger and $a \to b \in R_1(G, y)$.
    \item[R4] There exists $c,d$ such that $d \sim a \sim c$, $d \to c \to b$, $b \not\sim d$.\\
    If $d \in \Des(w)$, then all the arcs belong to $R^3(w)$ and are thus trivially oriented.
    Suppose now that $d \in \Anc[w]$.
    If $c \to b \in R^1(w)$, then $b = w$ and $a \to b \not\in B(T_y)$.
    If $a = y$, then $a \to b \in R_1(G, y)$ trivially.
    Otherwise, $a \in \Des(y)$ and $d \to a$ will be oriented by \cref{lem:middle} since $d \in \Anc[w]$.
    Then, we have $d \to a \sim b$ and Meek R1 will trigger and orient $a \to b$.
    In other words, $a \to b \in R_1(G, y)$.
\end{description}
Since we always conclude that $a \to b \in R_1(G, y)$, this is a contradiction.
\end{proof}

\begin{restatable}{lemma}{ifwdoesnotorientthenxdoesnotorient}
\label{lem:if-w-does-not-orient-then-x-does-not-orient}
Let $G = (V,E)$ be a DAG without v-structures and $u \to v$ be an unoriented arc in $\cE(G)$.
If a vertex $w \not\in R^{-1}_1(G, u \to v)$, then $x \not\in R^{-1}_1(G, u \to v)$ for all $x \in \Anc(w)$.
\end{restatable}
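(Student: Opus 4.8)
The plan is to prove the statement by contradiction through its companion \cref{lem:if-w-orients-then-y-orients}. Fix $w \notin R^{-1}_1(G, u \to v)$ and an arbitrary ancestor $x \in \Anc(w)$, and suppose toward a contradiction that $x \in R^{-1}_1(G, u \to v)$. The first move is to locate $x$ precisely: by \cref{lem:if-v-orients-then-v-to-b-path-exists}, every vertex that orients $u \to v$ is an ancestor of $v$, so $x \in \Anc[v]$. Because $G$ has no v-structures, \cref{lem:v-struct-free-iff-Hasse-is-a-tree} guarantees that $H_G$ is a rooted tree, so the ancestor set of any vertex is a single chain; hence $x$, being a common ancestor of both $w$ (as $x \in \Anc(w)$) and $v$ (as $x \in \Anc[v]$), lies on the unique root-to-$v$ path, with $w \in \Des(x)$.

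The second move is to show that the orientation forced by $x$ propagates down this branch all the way to $w$. Applying \cref{lem:if-w-orients-then-y-orients} to the orienting vertex $x$ shows that every $y \in \Des(x) \cap \Anc(u)$ orients $u \to v$; \cref{lem:middle} shows that every $y \in \Des(u) \cap \Anc(v)$ orients it; and the endpoints $u$ and $v$ orient it trivially, since intervening on either cuts the edge $u \sim v$. Together these three ranges cover the entire segment of the root-to-$v$ path lying weakly below $x$, i.e.\ $\Des[x] \cap \Anc[v] \subseteq R^{-1}_1(G, u \to v)$. Consequently, whenever $w$ falls on this segment — equivalently $w \in \Anc[v]$, which is the regime in which the lemma feeds into \cref{thm:orienting-vertices-form-an-interval} — we obtain $w \in R^{-1}_1(G, u \to v)$, contradicting the hypothesis $w \notin R^{-1}_1(G, u \to v)$ and forcing $x \notin R^{-1}_1(G, u \to v)$.

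The step I expect to be the main obstacle is exactly the positional bookkeeping that places $w$ on the segment $\Des[x] \cap \Anc[v]$. The three downstream ingredients (\cref{lem:if-w-orients-then-y-orients} for the part below $u$, \cref{lem:middle} for the part strictly between $u$ and $v$, and the trivial endpoint case) must be glued into the single inclusion $\Des[x] \cap \Anc[v] \subseteq R^{-1}_1(G, u \to v)$, and then the tree structure of $H_G$ is what guarantees that $x$, $w$, and $v$ are pairwise comparable so that $w$ genuinely lies between $x$ and $v$ on the branch. Since \cref{lem:if-v-orients-then-v-to-b-path-exists} already confines any orienting ancestor $x$ of $w$ to $\Anc[v]$, the whole content of the lemma is concentrated on this root-to-$v$ branch; once the segment-coverage inclusion is established, no further Meek-rule case analysis is required and the contrapositive closes immediately.
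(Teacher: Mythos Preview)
Your approach is the paper's: argue by contradiction and invoke \cref{lem:if-w-orients-then-y-orients} on the orienting ancestor $x$ to force $w \in R^{-1}_1(G, u \to v)$. The paper's proof is just three lines, though: from $x \in \Anc(w)$ it asserts $x \in \Anc(u)$ and $w \in \Des(x) \cap \Anc(u)$, then applies \cref{lem:if-w-orients-then-y-orients} directly---tacitly assuming $w \in \Anc(u)$, which is exactly how the lemma is invoked in the proof of \cref{thm:orienting-vertices-form-an-interval}. Under that assumption your detour through \cref{lem:middle} and the endpoint cases to cover all of $\Des[x] \cap \Anc[v]$ is unnecessary: \cref{lem:if-w-orients-then-y-orients} alone already places $w$ in $R^{-1}_1$. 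Your explicit restriction to $w \in \Anc[v]$ is in fact more careful than the paper, which glosses over the positional assumption; the lemma as literally stated fails when $w$ lies off the root-to-$v$ branch of $H_G$ (take a leaf $w$ hanging off some $x \in R^{-1}_1(G,u\to v)$ in a direction away from $v$), so both arguments really only cover the regime that feeds into \cref{thm:orienting-vertices-form-an-interval}.
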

\begin{proof}
Suppose, for a contradiction, that $x \in R^{-1}_1(G, u \to v)$.
Since $x \in \Anc(w)$, we have that $x \in \Anc(u)$ and $w \in \Des(x) \cap \Anc(u)$.
By \cref{lem:if-w-orients-then-y-orients}, $w \in R^{-1}_1(G, u \to v)$.
Contradiction.
\end{proof}

We are now ready to prove \cref{thm:orienting-vertices-form-an-interval}.

\orientingverticesformaninterval*
\begin{proof}
We have $u,v \in R^{-1}_1(G, u \to v)$ trivially.
By \cref{lem:if-v-orients-then-v-to-b-path-exists}, $\Des(v) \cap R^{-1}_1(G, u \to v) = \emptyset$.
By \cref{lem:if-v-orients-then-v-to-b-path-exists}, $R^{-1}_1(G, u \to v) \subseteq \Anc[v]$.
For an arbitrary consistent topological ordering $\pi$, let
\[
w = \argmin_{\substack{z \in \Anc(u)\\z \in R^{-1}_1(G, u \to v)}} \{\pi(z)\}
\]
be the ``furthest'' ancestor vertex of $u$ that orients $u \to v$.
By \cref{lem:if-w-orients-then-y-orients}, $\Des(w) \cap \Anc(u) \subseteq R^{-1}_1(G, u \to v)$.
By minimality of $w$ and \cref{lem:if-w-does-not-orient-then-x-does-not-orient}, $\Anc(w) \cap R^{-1}_1(G, u \to v) = \emptyset$.
Putting everything together, we see that $R^{-1}_1(G, u \to v) = \Des[w] \cap \Anc[v]$.
\end{proof}

\coverededgesareHasseedges*
\begin{proof}
To prove this, we argue that any edge $a \to b \not\in E(H_G)$ \emph{cannot} be a covered edge.
Since $E(H_G)$ only contains arcs involving direct children, we see that $b \not\in \Ch(a)$.
So, there exists some $z \in \Des(a) \cap \Anc(b)$ such that $z \to b$ but $z \not\to a$.
Thus, $a \to b$ \emph{cannot} be a covered edge.
\end{proof}

\subsection{Subset verification with atomic interventions}

\subsetverificationashardasintervalstabbing*
\begin{proof}
By \cref{thm:orienting-vertices-form-an-interval}, we know that each target edge $e \in T$ has a corresponding interval $[a_e, b_e]_{H}$ will be oriented if and only if some vertex in $[a_e, b_e]_{H}$ is selected into the intervention set.
Define $\cJ = \{[a_e, b_e] : e \in T\}$ as the collection of intervals corresponding to each edge $e \in T$ of the target edges.
Then, any solution to the interval stabbing problem on $(H,\cJ)$ ensures that every interval is stabbed, which translates to every edge in $T$ being oriented via \cref{thm:orienting-vertices-form-an-interval}.
Meanwhile, the minimality of the interval stabbing solution corresponds to the minimality of the atomic verification set size.
\end{proof}

\existspolytimealgoforintervalstabbing*
\begin{proof}
See \cref{thm:poly-time-for-interval-stabbing}.
\end{proof}

\existspolytimealgoforsubsetverification*
\begin{proof}
Since closure under Meek rules can be computed in polynomial time (e.g.\ via \cite[Algorithm 2]{pmlr-v161-wienobst21a}), we can compute all $R(G,v)$ for each $v \in V$, and thus $R^{-1}(u \to v)$ in polynomial time.
Then, the reduction given in \cref{lem:subset-verification-as-hard-as-interval-stabbing} runs in polynomial time and we can apply the polynomial time algorithm of \cref{lem:exists-poly-time-algo-for-interval-stabbing} to solve the resulting interval stabbing instance.
\end{proof}

\intervalstabbingashardassubsetverification*
\begin{proof}
Consider the following construction:
\begin{enumerate}
    \item Relabel endpoints $(u,v) \in \cJ$ such that $\pi(u) < \pi(v)$, if necessary.
    \item Define $E' = E \cup \cJ \cup A$, where $A$ is the set of additional arcs defined as follows: For each $(u,v) \in \cJ$, add $z \to w$ for all $z \in \Anc(v)$ and $w \in \Des[u] \cap \Anc[v]$.
    \item Let $G = (V, E')$ be the resulting DAG and let $T = \{u \to v: (u,v) \in \cJ\}$.
        Note that $G$ is a DAG without v-structures and one can check that the Hasse diagram is exactly equal to $H$.
\end{enumerate}

To argue that the solution to the subset verification problem instance $(G,T)$ is a solution to the interval stabbing on a tree instance $(H,\cJ)$, it suffices to show that $R^{-1}_1(G, u \to v) = \Des[u] \cap \Anc[v]$ for each arc $u \to v \in T$.

Consider an arbitrary $u \to v \in T$.
By \cref{lem:recovered-union}, it suffices to consider an arbitrary vertex $w$ in the atomic intervention set.
By \cref{lem:if-v-orients-then-v-to-b-path-exists}, we know that $u \to v \not\in R(G,w)$ if $w \cap Anc[v] $.
We also know from \cref{lem:middle} that $u \to v \in R(G,w)$ if $w \in \Des[u] \cap \Anc[v]$.
It remains to argue that $u \to v \not\in R(G,w)$ for $w \in \Anc(u)$.
We do this by arguing that Meek rules \emph{cannot} orient $u \to v$ through an intervention on any $w \in \Anc(u)$.
\begin{description}
    \item[R1] Meek rule R1 cannot trigger to orient $u \to v$ since the arc $w \to v \in E'$ whenever $w \to u$ exists, by construction.
    \item[R3] Since $G$ is a DAG without v-structures, Meek rule R3 will never be invoked.
    \item[R4] Meek rule R4 cannot trigger to orient $u \to v$ since it implies that there is a $c \to d \to v$ but $c \to v$ is \emph{not} in the graph. This cannot happen by construction.
    \item[R2] Suppose, for a contradiction, that Meek rule R2 triggers.
    This implies that the arcs $u \to z'$ and $z' \to v$ are oriented for some $z' \in \Des(u) \cap \Anc(v)$.
    If $\Des(u) \cap \Anc(v) = \emptyset$, then this case cannot happen.
    Otherwise, let $z$ be the \emph{earliest} such vertex (i.e.\ $z \in \Anc[z']$ for any such $z'$) and consider the edge $u \to z$.
    By choice of $z$, Meek rule R2 did not orient $u \to z$ when we intervene on $w \in \Anc(u)$.
    From the other case analyses, we see that $u \to z$ is also not oriented by the other Meek rules.
    Therefore, contradicting the implication that $u \to v$ was oriented via Meek rule R2 due to $u \to z$ being oriented.
    \qedhere
\end{description}
\end{proof}

\subsection{Subset verification with bounded size interventions and additive vertex costs}

In this section, we follow the proof strategy of \cite{choo2022verification}, generalizing their results for $T=E$ to arbitrary subset of target edges $T \subseteq E$.
One crucial difference in our approaches is that they rely on the bipartiteness of the covered edges of $G^*$ while we rely on \cref{lem:there-is-a-tree-subset} to argue that there is a way to 2-color the atomic minimum subset verifying set.

\thereisatreesubset*
\begin{proof}
If $G[S]$ is a forest, the claim trivially holds.
Otherwise, we apply the following recursive argument to tranform $S$: as long as $G$ still contains an undirected cycle, we can update $S$ to $S'$ such that $G[S']$ has fewer cycles than $G[S]$ while still ensuring that $R(G,S) \subseteq R(G,S')$.

Let $\pi$ be an arbitrary valid ordering for $G$.
Suppose $G[S]$ contains an undirected cycle $C = r \to u_1 \to \ldots \to u_k \to s \gets v_{\ell} \gets \ldots \gets v_1 \gets r$ of length $|C| = k + \ell + 2 \geq 3$, where $r = u_0 = v_0 = \argmin_{z \in V(C)} \{\pi(z)\}$ and $s = \argmax_{z \in V(C)} \{\pi(z)\}$.
We write $C = r \to s \gets v_{\ell} \gets \ldots \gets v_1 \gets r$ and $C = r \to u_1 \to \ldots \to u_k \to s \gets r$ if $k = 0$ or $\ell = 0$ respectively.

Since $G$ has no v-structures, we must have $v_l \sim u_k$ in $G$.
Without loss of generality, suppose $v_{\ell} \to u_k$.
Then, we update $S$ to $S' = S \cup \{v_{\ell} \to u_k\} \setminus \{v_{\ell} \to s\}$.
Note that $v_{\ell}, s \in S$, so the vertices of the endpoints in $S'$ are a subset of $S$.
Observe that $R(G,S) \subseteq R(G,S')$ because Meek rule R2 will orient $v_{\ell} \to s$ via $v_{\ell} \to u_k \to s \sim v_{\ell}$.
Furthermore, the cycle $C$ is either destroyed (if $|C| = 3$) or is shortened by one (if $|C| > 3$).
We can repeat this edge replacement argument until $G[S']$ has strictly one less undirected cycle than $G[S]$, and eventually until $G[S']$ has no undirected cycles, i.e.\ $G[S']$ is a forest.

It remains to argue that the recursive procedure described above runs in polynomial time.
We first note that cycle finding can be done in polynomial time using depth-first search (DFS).
Now, consider the potential function $\phi(S) = \sum_{e = (u,v) \in S} \pi(u) + \pi(v)$.
In each round, $\phi(S)$ decreases since we replace $v_{\ell} \to u_k$ by $v_{\ell} \to s$ and $\pi(u_k) < \pi(s)$.
Since the initial potential function value is polynomial in $n$, and we decrease it by at least 1 in each step, the entire procedure runs in polynomial time.
\end{proof}

\begin{definition}[Separation of covered edges \cite{choo2022verification}]
We say that an intervention $S \subseteq V$ \emph{separates} a covered edge $u \sim v$ if $|\{u,v\} \cap S| = 1$.
That is, \emph{exactly} one of the endpoints is intervened by $S$.
We say that an intervention set $\cI$ separates a covered edge $u \sim v$ if there exists $S \in \cI$ that separates $u \sim v$.
\end{definition}

\begin{lemma}[Lemma 29 of \cite{choo2022verification}]
\label{lem:atomic-only-helps}
Fix an essential graph $\cE(G^*)$ and $G \in [G^*]$.
Suppose $\cI$ is an arbitrary bounded size intervention set.
Intervening on vertices in $\cup_{S \in \cI} S$ one at a time, in an atomic fashion, can only increase the number of separated covered edges of $G$.
\end{lemma}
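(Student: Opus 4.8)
The plan is to observe that the statement is purely combinatorial: whether a covered edge is \emph{separated} depends only on which of its endpoints lie in each intervention, and not on any orientation subsequently produced by Meek rules. The set of covered edges of $G$ is moreover fixed (it depends on $G \in [G^*]$ alone, not on the interventions performed). So I would reduce the claim to a simple set-inclusion argument comparing which covered edges are separated by $\cI$ against which are separated by its atomic refinement, and then take cardinalities.

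First I would fix notation. Let $W = \bigcup_{S \in \cI} S$ be the set of all vertices appearing in some intervention of $\cI$, and let $\cI' = \{\{w\} : w \in W\}$ be the atomic intervention set obtained by intervening on each such vertex one at a time. For a covered edge $u \sim v$ of $G$, recall that $\cI$ separates it if some $S \in \cI$ has $|\{u,v\} \cap S| = 1$, and $\cI'$ separates it if some singleton $\{w\} \in \cI'$ has $|\{u,v\} \cap \{w\}| = 1$, i.e.\ $w \in \{u,v\}$. I would first record that this latter condition is equivalent to $\{u,v\} \cap W \neq \emptyset$: a singleton $\{w\}$ intersects the two-element set $\{u,v\}$ in at most one vertex, so $\cI'$ separates $u \sim v$ precisely when at least one endpoint lies in $W$.

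The key step is then to show that every covered edge separated by $\cI$ is also separated by $\cI'$. Suppose $\cI$ separates $u \sim v$; then there is some $S \in \cI$ with exactly one endpoint in $S$, say $u \in S$ and $v \notin S$. Since $S \subseteq W$, we have $u \in W$, so $\{u\} \in \cI'$ and $|\{u,v\} \cap \{u\}| = 1$ (using $u \neq v$). Hence $\cI'$ separates $u \sim v$. This establishes that the set of covered edges separated by $\cI$ is contained in the set separated by $\cI'$, and comparing cardinalities over the fixed family of covered edges yields the claimed monotonicity.

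I do not expect any genuine obstacle: the bounded size of the interventions in $\cI$ is irrelevant beyond guaranteeing that ``$\cI$ separates $u \sim v$'' forces exactly one endpoint into some $S$, and no appeal to \cref{lem:recovered-union}, the Meek rules, or the interventional essential graph is required. The only point worth stating carefully is the equivalence between ``some atomic intervention in $\cI'$ separates $u \sim v$'' and ``at least one endpoint of $u \sim v$ lies in $W$'', which holds because a singleton can never contain both endpoints of an edge at once.
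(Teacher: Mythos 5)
Your proof is correct: since the covered edges of $G$ are fixed by $G$ alone and separation is the purely combinatorial condition $|\{u,v\} \cap S| = 1$, the inclusion argument (any $S \in \cI$ separating $u \sim v$ puts exactly one endpoint into $W = \cup_{S \in \cI} S$, whose singleton then separates that edge) is all that is needed, and no appeal to Meek rules or interventional essential graphs is required. Note that the paper does not reprove \cref{lem:atomic-only-helps} but imports it as Lemma 29 of \cite{choo2022verification}; your argument is essentially the same elementary one given in that cited work.
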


\begin{lemma}
\label{lem:bounded-size-lb}
Fix an essential graph $\cE(G^*)$ and $G \in [G^*]$.
If $\nu_1(G,T) = \ell$, then $\nu_k(G,T) \geq \lceil \frac{\ell}{k} \rceil$.
\end{lemma}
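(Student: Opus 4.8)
The plan is to show that any bounded-size subset verifying set can be ``atomized'' into an atomic subset verifying set whose size is at most $k$ times larger; this immediately yields the bound. Concretely, let $\cI = \{S_1, \dots, S_m\}$ be any bounded-size verifying set for $T$ in $G$, so that each $|S_i| \le k$ and $T \subseteq R(G, \cI)$, and define its atomic decomposition $\cI' = \{\{v\} : v \in \bigcup_{i=1}^m S_i\}$. First I would prove that $\cI'$ is itself a valid atomic subset verifying set for $T$. Granting this, $\nu_1(G,T) \le |\cI'| = |\bigcup_i S_i| \le \sum_i |S_i| \le k \cdot m$, so that $m \ge \ell/k$; since $m$ is an integer this gives $m \ge \lceil \ell/k \rceil$, and taking $\cI$ to be optimal yields $\nu_k(G,T) \ge \lceil \ell/k \rceil$.

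The crux is therefore the containment $R(G, \cI) \subseteq R(G, \cI')$. I would reduce this to a single intervention: by \cref{lem:recovered-union} applied inductively, $R(G,\cI) = \bigcup_{i} R(G, S_i)$, so it suffices to show $R(G, S_i) \subseteq R(G, \cI')$ for each $i$. A single intervention $S_i$ directly orients exactly the edges cut by $(S_i, V \setminus S_i)$, i.e.\ the edges having exactly one endpoint in $S_i$; every such edge $e$ is incident to its endpoint $v \in S_i \subseteq \bigcup_j S_j$, and the atomic intervention $\{v\} \in \cI'$ already orients every edge incident to $v$, so $e \in R(G,\{v\}) \subseteq R(G,\cI')$. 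Together with the v-structure arcs (which lie in $R(G,\emptyset) \subseteq R(G,\cI')$), this shows that all the ``seed'' orientations generating $R(G, S_i)$ are contained in $R(G, \cI')$.

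The key remaining point, which I expect to be the main obstacle, is to promote this containment of seed orientations to a containment of the full Meek closures. The subtlety is that $R(G, S_i)$ may orient an edge $x \to y$ via a Meek rule (e.g.\ R2 through $x \to z \to y$) whose two antecedent arcs are cut by \emph{different} vertices of $S_i$, so that no individual atomic set $R(G,\{v\})$ contains both. The resolution is that $R(G, \cI') = A(\cE_{\cI'}(G))$ is, by definition, closed under Meek rules: if two antecedent arcs both already lie in $R(G, \cI')$, the rule must already have fired there, so $x \to y \in R(G,\cI')$ as well. Combined with the monotonicity of the Meek-closure operator (a larger consistent set of oriented arcs yields a larger closure, as recorded in \cref{sec:prelim}) and the fact that the seeds of $R(G, S_i)$ lie in the Meek-closed set $R(G, \cI')$, I conclude $R(G, S_i) \subseteq R(G, \cI')$. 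Summing over $i$ then gives $T \subseteq R(G,\cI) \subseteq R(G,\cI')$, establishing that $\cI'$ verifies $T$ and completing the argument.
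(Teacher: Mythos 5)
Your proof is correct, and its skeleton is the same as the paper's: atomize the bounded-size intervention set, argue that atomization can only increase the set of oriented arcs, and then count vertices to get $\ell \leq k\,m$, hence $m \geq \lceil \ell/k \rceil$. The difference is where the key fact ``atomic simulation only helps'' comes from. The paper phrases the argument contrapositively (a bounded-size set of size $< \lceil \ell/k \rceil$ touches $< \ell$ vertices, so its atomization cannot verify $T$) and imports the monotonicity step as a black box from prior work (\cref{lem:atomic-only-helps}, i.e.\ Lemma~29 of \cite{choo2022verification}), which is stated there in terms of \emph{separated covered edges} rather than oriented arcs. You instead prove the needed containment $R(G,\cI) \subseteq R(G,\cI')$ from first principles: decompose via \cref{lem:recovered-union}, observe that the seed orientations of each $R(G,S_i)$ (cut edges plus v-structure arcs) lie in $R(G,\cI')$, and then use that $R(G,\cI') = A(\cE_{\cI'}(G))$ is Meek-closed together with monotonicity of the Meek closure --- a fact the paper records in its preliminaries. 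What your route buys is self-containedness and a cleaner justification in the subset setting, since it never needs to pass through covered-edge separation; what the paper's route buys is brevity, at the cost of leaning on a cited lemma whose literal statement requires a small translation step to yield the ``more oriented arcs'' claim actually used.
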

\begin{proof}
A bounded size intervention set of size strictly less than $\lceil \frac{\ell}{k} \rceil$ involves strictly less than $\ell$ vertices.
By \cref{lem:atomic-only-helps}, intervening on the vertices of the bounded size intervention set one at a time (i.e.\ simulate it as an atomic intervention set) can only increase the number of oriented edges.
However, such an atomic intervention set cannot be a subset verifying set since it involves strictly less than $\ell$ vertices because $\nu_1(G,T) = \ell$.
\end{proof}

\begin{lemma}
\label{lem:bounded-size-ub}
Fix an essential graph $\cE(G^*)$ and $G \in [G^*]$.
If $\nu_1(G,T) = \ell$, then there exists a polynomial time algorithm that computes a bounded size subset verifying set $\cI$ of size $|\cI| \leq \lceil \frac{\ell}{k} \rceil + 1$.
\end{lemma}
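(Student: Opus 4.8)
The plan is to start from an optimal \emph{atomic} subset verifying set and merge its $\ell$ vertices into groups of size at most $k$ while preserving every orientation that $T$ depends on. By \cref{thm:properties} (\cref{sec:sufficient-to-study-without-v-structures}), all arcs forced in the observational essential graph can be stripped off before performing any intervention, via $R(G,\cI') = R(G^{\emptyset},\cI') \dot\cup R(G,\emptyset)$, so I may assume $G$ has no v-structures and that the target edges still needing work are unoriented in $\cE(G)$; this is exactly the regime in which \cref{lem:there-is-a-tree-subset} applies. Using \cref{thm:exists-poly-time-algo-for-subset-verification}, I first compute in polynomial time a minimum atomic subset verifying set $\cI \subseteq V$ with $|\cI| = \nu_1(G,T) = \ell$, and I set $S = \{(u,v) \in E : u \in \cI \text{ or } v \in \cI\}$ to be its incident arcs. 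Since $\cI$ is an atomic verifying set, $T \subseteq R(G,\cI) = R(G,S)$.

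Next I would invoke \cref{lem:there-is-a-tree-subset} on $S$ to obtain, in polynomial time, an arc set $S'$ with $G[S']$ a forest, $R(G,S) \subseteq R(G,S')$, and $\bigcup_{(u,v)\in S'}\{u,v\} \subseteq \bigcup_{(u,v)\in S}\{u,v\}$. As a forest is bipartite, I properly $2$-color the vertices of $G[S']$; restricting this coloring to $\cI$ (and coloring the few vertices of $\cI$ absent from $S'$ arbitrarily) partitions $\cI = \cI_1 \dot\cup \cI_2$ into two color classes so that no arc of $S'$ joins two vertices of the same class. I then greedily cut each $\cI_i$ into $\lceil |\cI_i|/k \rceil$ blocks of size at most $k$ and let $\cI'$ be the union of all these blocks. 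The size bound is immediate from the ceiling identity $\lceil |\cI_1|/k \rceil + \lceil |\cI_2|/k \rceil \le \lceil (|\cI_1| + |\cI_2|)/k \rceil + 1 = \lceil \ell/k \rceil + 1$.

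For correctness I would show $T \subseteq R(G,\cI')$. Writing $D$ for the set of arcs separated by some block of $\cI'$ (exactly one endpoint in the block), \cref{lem:recovered-union} together with monotonicity of $R(G,\cdot)$ gives $R(G,\cI') = R(G,D)$. Because the two endpoints of any forest arc receive distinct colors, no block can contain both, so every arc of $S'$ that has an endpoint in $\cI$ is separated and hence lies in $D$. Granting that this suffices to force $R(G,S) \subseteq R(G,D)$, the chain $T \subseteq R(G,S) \subseteq R(G,D) = R(G,\cI')$ closes the argument, and all steps (computing $\cI$ and $S'$, the $2$-coloring, and the blocking) run in polynomial time.

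The hard part will be the separation step, since $\cI'$ only intervenes on the $\ell$ vertices of $\cI$ whereas the forest $S'$ produced by \cref{lem:there-is-a-tree-subset} may contain an arc whose \emph{both} endpoints lie in $V(S) \setminus \cI$ --- such an arc can never be separated and so need not lie in $D$. I expect to close this gap by tracking a stronger invariant through the cycle-shortening replacements of \cref{lem:there-is-a-tree-subset}: when the replaced apex $s \notin \cI$ the new arc $v_\ell \to u_k$ automatically inherits a $\cI$-endpoint (indeed both $u_k, v_\ell \in \cI$, since $u_k \to s, v_\ell \to s \in S$), and when $s \in \cI$ one must instead argue that any such non-incident arc is recovered by Meek-rule propagation from the separated arcs incident to $\cI$, so that $R(G,S) \subseteq R(G,D)$ still holds. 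Making this invariant precise, and verifying it is preserved under every replacement, is the crux; the counting and the remaining inclusions are then routine.
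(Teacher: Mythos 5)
Your route is the paper's own: compute a minimum atomic subset verifying set $\cI$, take its incident arcs $S$, pass them through \cref{lem:there-is-a-tree-subset}, $2$-color the resulting forest, form blocks greedily within color classes, and finish with the identical ceiling arithmetic. The gap you flag is genuine, and it is precisely the step the paper's own proof elides: after invoking \cref{lem:there-is-a-tree-subset}, the paper simply asserts $V(G[S'])\subseteq\cI$, but the lemma only guarantees that the endpoints of $S'$ lie among the endpoints of $S$, i.e.\ in $\cI$ together with its neighbors. Your ``Case B'' really does occur, even for minimum $\cI$: take $G$ to be the complete DAG on $\{1,\dots,6\}$ and $T=E$, so that $\cI=\{2,4,6\}$ is a minimum verifying set (a minimum vertex cover of the covered edges $1\to 2,\dots,5\to 6$); the cycle $2\to3\to6\gets5\gets2$ lies in $G[S]$, its sink $6$ is in $\cI$ while $3,5\notin\cI$, and the replacement step inserts the arc $3\to5$, which no grouping of $\cI$ can ever separate. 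Moreover, the repair you sketch for Case B is unlikely to go through as stated: around the apex one only has the shielded collider $v_\ell\to s\gets u_k$ with $v_\ell\sim u_k$, a configuration on which no Meek rule fires, so the inserted arc is not locally recovered. (In the example, $3\to5$ does end up oriented, but only via the unrelated chain $3\to4\to5$; nothing in your invariant supplies such a detour in general.)

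There is a simpler fix than the invariant you propose: apply \cref{lem:there-is-a-tree-subset} not to $S$ but to $Y:=\{(u,v)\in E : u,v\in\cI\}$, the arcs of $G[\cI]$. Every arc the replacement procedure inserts then has both endpoints in $\cI$ automatically (this is exactly your Case A observation, now valid at every step), so the output forest $Y'$ satisfies $V(G[Y'])\subseteq\cI$ unconditionally and Case B never arises. Two-color $G[Y']$, extend the coloring arbitrarily to the rest of $\cI$, and form blocks inside color classes as you did. Writing $X$ for the arcs of $G$ with exactly one endpoint in $\cI$ and $D$ for the arcs cut by the blocks, one has $X\cup Y'\subseteq D$: arcs of $X$ are cut because their non-$\cI$ endpoint lies in no block, and arcs of $Y'$ because the coloring forces their endpoints into different blocks. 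Correctness then follows from the chain $T\subseteq R(G,\cI)=R(G,X\cup Y)\subseteq R(G,X\cup Y')\subseteq R(G,D)\subseteq R(G,\cI')$, where the second inclusion holds because $Y\subseteq R(G,Y)\subseteq R(G,Y')$ together with monotonicity and idempotence of the Meek closure, and the last holds because every arc of $D$ is cut by some block and hence lies in $R(G,\cI')$ by \cref{lem:recovered-union}. Your size bound and polynomial-time claims stand unchanged. In short: same approach as the paper, a correctly identified gap (one that the paper's own write-up shares), but the proposal as written does not close it, and the Meek-propagation argument you suggest for Case B would not suffice on its own.
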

\begin{proof}
Consider any atomic subset verifying set $\cI$ of $G$ of size $\ell$.
Let $S$ be the set of edges incident to vertices in $\cI$.
By \cref{lem:there-is-a-tree-subset}, there is a subset $S' \subseteq E$ such that $G[S']$ is a forest, $R(G,\cI) = R(G,S) \subseteq R(G,S')$ and $\bigcup_{(u,v) \in S'} \{u,v\} \subseteq \bigcup_{(u,v) \in S} \{u,v\}$.
Since $G[S']$ is a forest and $V(G[S']) \subseteq \cI$, there is a 2-coloring of the vertices in $\cI$.

Split the vertices in $\cI$ into partitions according to the 2-coloring.
By construction, vertices belonging in the same partite will \emph{not} be adjacent and thus choosing them together to be in an intervention $S$ will \emph{not} reduce the number of separated covered edges.
Now, form interventions of size $k$ by greedily picking vertices in $\cI$ within the same partite.
For the remaining unpicked vertices (strictly less than $k$ of them), we form a new intervention with them.
Repeat the same process for the other partite.

This greedy process forms groups of size $k$ and at most 2 groups of sizes, one from each partite.
Suppose that we formed $z$ groups of size $k$ in total and two ``leftover groups'' of sizes $x$ and $y$, where $0 \leq x,y < k$.
Then, $\ell = z \cdot k + x + y$, $\frac{\ell}{k} = z + \frac{x+y}{k}$, and we formed at most $z + 2$ groups.
If $0 \leq x+y < k$, then $\lceil \frac{\ell}{k} \rceil = z+1$.
Otherwise, if $k \leq x+y < 2k$, then $\lceil \frac{\ell}{k} \rceil = z+2$.
In either case, we use at most $\lceil \frac{\ell}{k} \rceil + 1$ interventions, each of size $\leq k$.

One can compute a bounded size intervention set efficiently because the following procedures can all be run in polynomial time:
(i) \cref{lem:there-is-a-tree-subset} runs in polynomial time;
(ii) 2-coloring a tree;
(iii) greedily grouping vertices into sizes $\leq k$.
\end{proof}

\boundedsizeextension*
\begin{proof}
Follows by combining \cref{lem:bounded-size-lb} and \cref{lem:bounded-size-ub}.
\end{proof}

To solve the subset verification problem with respect to \cref{eq:generalized-cost}, we need to compute a \emph{weighted} minimum interval stabbing set on a rooted tree.

\begin{lemma}
Fix an essential graph $\cE(G^*)$ and $G \in [G^*]$.
An atomic subset verifying set for $G$ that minimizes \cref{eq:generalized-cost} can be computed in polynomial time.
\end{lemma}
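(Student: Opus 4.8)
The plan is to reduce the weighted atomic subset verification problem to a \emph{weighted} version of the interval stabbing problem on a rooted tree, and then adapt the dynamic program of \cref{sec:interval-stabbing-on-a-tree} to account for vertex costs.

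First I would note that for atomic intervention sets the objective \cref{eq:generalized-cost} decomposes additively over the chosen vertices: since every intervention is a singleton $\{v\}$, we have $\alpha \cdot w(\cI) + \beta \cdot |\cI| = \sum_{v \in \cI} (\alpha\, w(v) + \beta)$, so assigning each vertex the per-vertex cost $c(v) = \alpha\, w(v) + \beta$ makes the objective equal to $\sum_{v \in \cI} c(v)$. By \cref{thm:properties} it suffices to operate on the v-structure-free DAG obtained by deleting the arcs already oriented in $\cE(G^*)$. On such a DAG, \cref{thm:orienting-vertices-form-an-interval} and the reduction in \cref{lem:subset-verification-as-hard-as-interval-stabbing} associate to each target edge $e \in T$ an interval in the Hasse tree $H_G$, so that a vertex set $\cI$ orients all of $T$ if and only if $\cI$ stabs every interval in the resulting collection $\cJ$. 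Under this correspondence the verification cost is precisely the total cost $\sum_{v \in \cI} c(v)$ of the stabbing set, reducing the problem to computing a minimum-cost stabbing set of $\cJ$ on $H_G$.

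Next I would adapt the recurrence \cref{eq:dp-recurrence}, now letting $\opt(U,v)$ denote the minimum \emph{cost} (rather than size) of a set within $T_v$ stabbing $U$. The only change needed is in the ``pick $v$'' branch, where I replace the unit cost by $c(v)$, i.e.\ $\alpha_v = c(v) + \sum_{y \in \Ch(v)} \opt(U \cap B_y, y)$, leaving $\beta_v$ and the three-case split unchanged. The correctness of this recurrence hinges on \cref{lem:recurrence-correctness}, and the crucial observation is that its dichotomy is a purely structural feasibility statement about which vertices any valid stabbing set must contain --- it never counts the number of selected vertices --- so it holds verbatim for the weighted objective. The boundary cases are likewise cost-agnostic: an interval ending at $v$ (i.e.\ in $E_v$) can only be stabbed from within $T_v$ by $v$ itself, forcing $\alpha_v$, and $U \not\subseteq I_v$ remains infeasible. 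Taking the minimum over inclusion and exclusion of $v$ correctly handles each vertex, so $\opt(\cJ, r)$ equals the minimum total cost over all feasible stabbing sets.

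The single point that genuinely requires care is this last invariance --- confirming that \cref{lem:recurrence-correctness} survives the change of objective --- which I expect to be routine, since its argument reasons only about coverage of intervals and never about counting vertices. The remaining steps are bookkeeping: the weighted recurrence has the same state space as the unweighted one, so the dynamic programming implementation of \cref{sec:appendix-dp} (based on the Euler-tour ordering of $\cJ$) runs in polynomial time after the trivial substitution of $c(v)$ for $1$, and when the v-structure-free graph decomposes into several components we simply sum the optima over the roots of the resulting Hasse forest. This yields a polynomial-time algorithm computing a minimum-cost atomic subset verifying set.
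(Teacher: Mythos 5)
Your proposal is correct and takes essentially the same route as the paper: the paper's entire proof is the one-line substitution of the vertex cost for the unit cost in the $\alpha_v$ branch of \cref{alg:dp}, which is exactly your modification of \cref{eq:dp-recurrence}, and your observation that \cref{lem:recurrence-correctness} is a purely structural feasibility statement is the (implicit) reason this substitution suffices. If anything, you are more precise than the paper, which writes the new cost as $w(v)$ where the objective \cref{eq:generalized-cost} actually calls for the per-vertex cost $\alpha\, w(v) + \beta$ as you state.
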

\begin{proof}
Replace $\alpha_v = 1 + \sum_{y \in \Ch(v)} \texttt{DP}(y, \max\{a_{y}, i\})$ by $\alpha_v = w(v) + \sum_{y \in \Ch(v)} \texttt{DP}(y, \max\{a_{y}, i\})$ in \cref{alg:dp}.
\end{proof}

\begin{lemma}
\label{lem:modified-atomic}
Fix an essential graph $\cE(G^*)$ and $G \in [G^*]$.
Let $\cI_A$ be an atomic subset verifying set for $G$ that minimizes $\alpha \cdot w(\cI_A) + \frac{\beta}{k} \cdot |\cI_A|$ and $\cI_B$ be a bounded size verifying set for $G$ that minimizes \cref{eq:generalized-cost}.
Then, $\alpha \cdot w(\cI_A) + \frac{\beta}{k} \cdot |\cI_A| \leq \alpha \cdot w(\cI_B) + \beta \cdot |\cI_B|$.
\end{lemma}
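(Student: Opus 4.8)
The plan is to take the optimal bounded size verifying set $\cI_B$ and ``atomize'' it into an atomic subset verifying set whose modified objective value is no larger than the original objective value of $\cI_B$; comparing this candidate against the optimal atomic set $\cI_A$ then yields the claim. Write $\cI_B = \{S_1, \ldots, S_m\}$ with $m = |\cI_B|$ and $|S_i| \le k$ for each $i$, and define the atomic intervention set
\[
\cI_B' = \{\{v\} : v \in \bigcup_{i=1}^m S_i\}.
\]

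First I would argue that $\cI_B'$ is again a subset verifying set for $G$. This is exactly the reasoning already used in \cref{lem:bounded-size-lb}: by \cref{lem:atomic-only-helps}, intervening atomically on every vertex appearing in $\bigcup_i S_i$ separates at least as many covered edges --- and hence orients at least as many edges --- as $\cI_B$ itself. Concretely, every edge directly cut by some $S_i$ has an endpoint $v \in S_i$, so the atomic intervention $\{v\}$ also cuts it; thus the set of edges cut by $\cI_B$ is contained in the set cut by $\cI_B'$, and monotonicity of the Meek closure gives $R(G, \cI_B) \subseteq R(G, \cI_B')$. Since $T \subseteq R(G, \cI_B)$, we obtain $T \subseteq R(G, \cI_B')$, so $\cI_B'$ is an atomic subset verifying set.

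Next I would bound the modified cost of $\cI_B'$ by the true cost of $\cI_B$. For the cardinality term, $|\cI_B'| = |\bigcup_i S_i| \le \sum_{i=1}^m |S_i| \le km$, so $\frac{\beta}{k}|\cI_B'| \le \beta m = \beta |\cI_B|$ using $\beta \ge 0$ and $k \ge 1$. For the weight term, using that vertex costs are non-negative, each vertex of $\bigcup_i S_i$ is counted exactly once on the left and at least once on the right of
\[
w(\cI_B') = \sum_{v \in \bigcup_i S_i} w(v) \le \sum_{i=1}^m \sum_{v \in S_i} w(v) = w(\cI_B),
\]
so $\alpha \cdot w(\cI_B') \le \alpha \cdot w(\cI_B)$ using $\alpha \ge 0$.

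Finally, since $\cI_A$ minimizes $\alpha \cdot w(\cdot) + \frac{\beta}{k}|\cdot|$ over all atomic subset verifying sets and $\cI_B'$ is one such set, chaining the two bounds gives
\[
\alpha \cdot w(\cI_A) + \frac{\beta}{k}|\cI_A| \;\le\; \alpha \cdot w(\cI_B') + \frac{\beta}{k}|\cI_B'| \;\le\; \alpha \cdot w(\cI_B) + \beta |\cI_B|,
\]
which is the desired inequality. I expect the only real subtlety to be the first step --- verifying that atomization preserves the subset verifying property --- which is why I would lean on \cref{lem:atomic-only-helps} rather than re-deriving the Meek-closure monotonicity from scratch; the remaining steps are routine inequalities that rely only on $\alpha, \beta \ge 0$ and on the vertex costs being non-negative.
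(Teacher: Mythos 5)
Your proof is correct and takes essentially the same route as the paper's: the paper likewise atomizes $\cI_B$ into the set of singletons over $\bigcup_{S \in \cI_B} S$, notes $w(\cI) \leq w(\cI_B)$ and $|\cI| \leq k \cdot |\cI_B|$, and concludes by the optimality of $\cI_A$ among atomic subset verifying sets. The only difference is that the paper asserts without elaboration that the atomized set remains a subset verifying set, a step you justify explicitly via \cref{lem:atomic-only-helps} and monotonicity of the Meek closure.
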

\begin{proof}
Let $\cI = \sum_{S \in \cI_B} S$ be the atomic subset verifying set derived from $\cI_B$ by treating each vertex as an atomic intervention.
Clearly, $w(\cI_B) \geq w(\cI)$ and $k \cdot |\cI_B| \geq |\cI|$.
So,
\[
\alpha \cdot w(\cI_B) + \beta \cdot |\cI_B|
\geq \alpha \cdot w(\cI) + \frac{\beta}{k} \cdot |\cI|
\geq \alpha \cdot w(\cI_A) + \frac{\beta}{k} \cdot |\cI_A|
\]
since $\cI_A = \argmin_{\text{atomic subset verifying set } \cI'} \left\{ \alpha \cdot w(\cI') + \frac{\beta}{k} \cdot |\cI'| \right\}$.
\end{proof}

\additivecostextension*
\begin{proof}
Let $\cI_A$ be an atomic subset verifying set for $G$ that minimizes $\alpha \cdot w(\cI_A) + \frac{\beta}{k} \cdot |\cI_A|$ and $\cI_B$ be a bounded size verifying set for $G$ that minimizes \cref{eq:generalized-cost}.
Using the polynomial time greedy algorithm in \cref{lem:bounded-size-ub}, we construct bounded size intervention set $\cI$ by greedily grouping together atomic interventions from $\cI_A$.
Clearly, $w(\cI) = w(\cI_A)$ and $|\cI| \leq \lceil \frac{|\cI_A|}{k} \rceil + 1$.
So,
\[
\alpha \cdot w(\cI) + \beta \cdot |\cI|
\leq \alpha \cdot w(\cI_A) + \beta \cdot \left( \left\lceil \frac{|\cI_A|}{k} \right\rceil + 1 \right)
\leq \alpha \cdot w(\cI_B) + \beta \cdot |\cI_B| + 2 \beta
= OPT + 2 \beta
\]
where the second inequality is due to \cref{lem:modified-atomic}.
\end{proof}

\subsection{Subset search}
\label{sec:appendix-subsetsearch}

\vertexcoveristrivialupperbound*
\begin{proof}
Each intervention will orient all the incident edges.
\end{proof}

\vertexcoverisworstcasenecessary*
\begin{proof}
Let $[2n]$ be the vertex set. We construct our lower bound graph as follows (see \cref{fig:adaptive-lower-bound} for an illustration):
\begin{itemize}
    \item A clique on first $1$ to $n$ vertices, where their relative orderings can be chosen by the adaptive adversary.
    \item Add an edge between vertex $i \in [n]$ to $n+i$ and restrict that the vertices outside the clique come after the clique nodes in any order.
        Then, the essential graph has no v-structures.
    \item Let $T=\{(i,n+i) \mid i \in [n] \}$ be the set of target edges.
        Note that its minimum vertex cover has size $\omega(n)$.
\end{itemize}

To orient all the edges in set $T$, we just need to orient on the source vertex of the clique and then apply Meek rules.
Therefore, $\nu_{1}(G^*,T)=1$ for any graph $G^*$ in this equivalence class.

Meanwhile, figuring out the source vertex $s$ and $n+s$ for any search algorithm will require at least $n-1$ for any adaptive search algorithm when we have an adapative adversary.
Since intervening on vertices outside the clique only learns the incident arc itself while intervening on the other endpoint in the clique recovers more arc orientations, we may assume without loss of generality that search algorithms will only intervene on vertices within the clique.
Now, to orient all the edges in the set $T$, we need to figure out the source vertex and it is well known that figuring out the source vertex requires at least $n-1$ queries.
\end{proof}

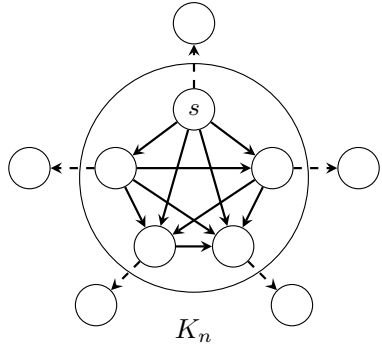
\begin{figure}[htbp]
\centering
\resizebox{0.3\linewidth}{!}{%
\begin{tikzpicture}
\node[draw, circle, minimum size=15pt, inner sep=2pt] at (0,0) (v1) {\small $s$};
\node[draw, circle, minimum size=15pt, inner sep=2pt] at ($(v1)+(-1,-0.75)$) (v2) {};
\node[draw, circle, minimum size=15pt, inner sep=2pt] at ($(v1)+(1,-0.75)$) (v3) {};
\node[draw, circle, minimum size=15pt, inner sep=2pt] at ($(v2)+(0.5,-1)$) (v4) {};
\node[draw, circle, minimum size=15pt, inner sep=2pt] at ($(v3)+(-0.5,-1)$) (v5) {};

\node[draw, circle, minimum size=15pt, inner sep=2pt] at ($(v1)+(0,1.1)$) (v6) {};
\node[draw, circle, minimum size=15pt, inner sep=2pt] at ($(v2)+(-1.1,0)$) (v7) {};
\node[draw, circle, minimum size=15pt, inner sep=2pt] at ($(v3)+(1.1,0)$) (v8) {};
\node[draw, circle, minimum size=15pt, inner sep=2pt] at ($(v4)+(-0.75,-0.75)$) (v9) {};
\node[draw, circle, minimum size=15pt, inner sep=2pt] at ($(v5)+(0.75,-0.75)$) (v10) {};

\draw[thick, -stealth] (v1) -- (v2);
\draw[thick, -stealth] (v1) -- (v3);
\draw[thick, -stealth] (v1) -- (v4);
\draw[thick, -stealth] (v1) -- (v5);
\draw[thick, -stealth] (v2) -- (v3);
\draw[thick, -stealth] (v2) -- (v4);
\draw[thick, -stealth] (v2) -- (v5);
\draw[thick, -stealth] (v3) -- (v4);
\draw[thick, -stealth] (v3) -- (v5);
\draw[thick, -stealth] (v4) -- (v5);
\draw[dashed, thick, -stealth] (v1) -- (v6);
\draw[dashed, thick, -stealth] (v2) -- (v7);
\draw[dashed, thick, -stealth] (v3) -- (v8);
\draw[dashed, thick, -stealth] (v4) -- (v9);
\draw[dashed, thick, -stealth] (v5) -- (v10);

\node[draw, fit=(v1)(v2)(v3)(v4)(v5), circle, inner sep=-5pt] {};
\node[text centered, below=65pt of v1] {$K_n$};
\end{tikzpicture}
}
\caption{
Adaptive lower bound construction of $G$ with $n = 5$: Given an integer $n \geq 1$, construct a directed clique $K_{n}$ and have each clique node point to a fresh node outside of the clique.
The dashed $n$ dashed arcs are chosen to be the target edges $T \subseteq E$.
The essential graph $\cE(G)$ is completely undirected and any permutation ordering on the clique nodes are valid.
Intervening on the source $s$ of the clique is sufficient to fully orient $T$ with the aid of Meek rules.
However, an adaptive adversary can always decide that vertices outside the clique have come after the the clique nodes in the ordering, and always decide that the $i^{th}$ vertex $v$ in the clique that we intervene on within the clique has ordering $\pi(v) = n-i+1$, and thus we only learn the orientations of arcs incident to $v$.
}
\label{fig:adaptive-lower-bound}
\end{figure}

To prove \cref{thm:subset-search-on-node-induced-subgraph}, we modify \cite[Algorithm 1]{choo2022verification} by only assigning non-zero weights on vertices from $V(H)$.

\begin{algorithm}[htbp]
\caption{\texttt{SubsetSearch}: Node-induced subset search algorithm via weighted graph separators.}
\label{alg:subset-search-algo}
\begin{algorithmic}[1]
    \State \textbf{Input}: Interventional essential graph $\cE_{\cI}(G^*)$, node-induced subgraph $H$, intervention upper bound size $k \geq 1$.
    \State \textbf{Output}: A partially oriented interventional essential graph $G$ such that $G[V(H)] = G^*[V(H)]$.
    \State Initialize $i=0$ and $\cI_0 = \emptyset$.
	\While{$\cE_{\cI_{i}}(G^*)[V(H)]$ still has undirected edges}
    	\State For each $H_{CC} \in CC(\cE_{\cI_{i}}(G^*))$ with $|\rho(\cI_{i} \cup \cI, V(H_{CC}))| \geq 2$, find a 1/2-clique separator $K_H$ using \Statex\hspace{\algorithmicindent}\cref{thm:chordal-separator-weighted},
    	with respect to weight function
    	\[
    	c(v) =
    	\begin{cases}
    	\frac{n}{\rho(\cI_i \cup \cI, V(H_{CC}))} & \text{for $v \in V(H)$}\\
    	0 & \text{for $v \in V \setminus V(H)$}
    	\end{cases}
    	\]
    	\State Define $Q \subseteq V$ as the union of clique separator nodes across all $H_{CC} \in CC(\cE_{\cI_{i}}(G^*))$.
    	\If{$k=1$ or $|Q|=1$}
    	    \State Define $C_{i+1} = Q$ as an atomic intervention set.
    	\Else
    	    \State Define $k' = \min\{k, |Q|/2\}$, $a = \lceil |Q|/k' \rceil \geq 2$, and $\ell = \lceil \log_a n \rceil$. Compute labelling scheme of
    	\Statex\hspace{\algorithmicindent}\hspace{\algorithmicindent}\cite[Lemma 1]{shanmugam2015learning} on $Q$ with $(|Q|, k', a)$, and define $C_{i+1} = \{S_{x,y}\}_{x \in [\ell], y \in [a]}$, where $S_{x,y} \subseteq Q$ is \Statex\hspace{\algorithmicindent}\hspace{\algorithmicindent}the subset of vertices whose $x^{th}$ letter in the label is $y$.
    	\EndIf
    	\State Update $i \gets i+1$, intervene on $C_{i}$ to obtain $\cE_{\cI_{i}}(G^*)$, and update $\cI_i \gets \cI_{i-1} \cup C_{i}$.
	\EndWhile
	\State \Return $\cI_i$ \Comment{Note: The algorithm does \emph{not} know $\cI$. We write $\cI_i \cup \cI$ for notational purposes only.}
\end{algorithmic}
\end{algorithm}

For analysis, we rely on the following known results \cref{lem:strengthened-lb} and \cref{thm:chordal-separator-weighted}.

\begin{lemma}[Lemma 21 of \cite{choo2022verification}]
\label{lem:strengthened-lb}
Fix an essential graph $\cE(G^*)$ and $G \in [G^*]$.
Then,
\[
\nu_1(G, E) \geq \max_{\cI \subseteq V} \sum_{H \in CC(\cE_{\cI}(G^*))} \left\lfloor \frac{\omega(H)}{2} \right\rfloor
\]
\end{lemma}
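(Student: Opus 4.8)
The plan is to fix an arbitrary intervention set $\cI \subseteq V$, let $\mathcal{W}$ be any minimum atomic verifying set for $G$ (so $R(G,\mathcal{W}) = E$ and $|\mathcal{W}| = \nu_1(G,E)$), and show $|\mathcal{W}| \geq \sum_{H \in CC(\cE_\cI(G^*))} \lfloor \omega(H)/2 \rfloor$; since $\cI$ is arbitrary, taking the maximum over $\cI$ then yields the claim. The argument has two independent ingredients: a modularity step that lets me analyze each chain component separately, and a per-component clique lower bound.

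First I would establish modularity. Since the chain components of $\cE_\cI(G^*)$ are oriented independently (as recalled in \cref{sec:prelim}), an atomic intervention on a vertex lying outside $V(H)$ cannot orient any edge internal to a chain component $H$. Concretely, by \cref{thm:properties} the residual graph $G^\cI$ satisfies that $skel(G^\cI)$ is the disjoint union of the chain components, with no new v-structures, so the edges internal to $H$ are exactly those of the v-structure-free chordal DAG $G^\cI[V(H)]$. Because $R(G,\cI \cup \mathcal{W}) = R(G,\cI) \cup R(G,\mathcal{W}) = E$, the interventions of $\mathcal{W}$ that fall inside $V(H)$ must by themselves orient every internal edge of $H$. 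As the vertex sets $V(H)$ are pairwise disjoint, this gives $|\mathcal{W}| \geq \sum_{H} |\mathcal{W} \cap V(H)|$, so it suffices to prove $|\mathcal{W} \cap V(H)| \geq \lfloor \omega(H)/2 \rfloor$ for each $H$.

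The heart of the proof is the per-component clique bound: any atomic intervention set contained in $V(H)$ that orients every edge of the connected v-structure-free chordal DAG $H$ has size at least $\lfloor \omega(H)/2 \rfloor$. Here I would fix a maximum clique $K = \{v_1 \to \cdots \to v_\omega\}$ of $H$, ordered topologically, and track its $\omega-1$ consecutive edges $e_a = v_a \to v_{a+1}$. By \cref{lem:v-struct-free-iff-Hasse-is-a-tree} the Hasse diagram of $H$ is a rooted tree, and by \cref{thm:orienting-vertices-form-an-interval} the set $R^{-1}_1(H, e_a)$ of vertices whose intervention orients $e_a$ is an interval $\Des[w_a] \cap \Anc[v_{a+1}]$ ending at $v_{a+1}$, for some $w_a \in \Anc[v_a]$. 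The key claim I must prove is that a single vertex cannot simultaneously orient two non-adjacent consecutive edges $e_a$ and $e_c$ with $c \geq a+2$: tracing the interval endpoints and using that $v_{a+1} \to v_{c+1}$ is a clique arc, this reduces to showing that intervening on a clique vertex $v_{a+1}$ never orients a clique edge $v_c \to v_{c+1}$ lying entirely below it, which I would verify using the downstream-propagation behaviour of Meek rules (\cref{lem:if-v-orients-then-v-to-b-path-exists}) together with the interval characterization. Granting this, each intervention orients at most two of the $e_a$ and, when it orients two, they are adjacent, so the chosen vertices form a vertex cover of the path $P_\omega$ on $v_1, \ldots, v_\omega$; the minimum such cover has size $\lfloor \omega/2 \rfloor$, giving the per-component bound.

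Summing the per-component bounds over the disjoint chain components and then maximizing over the arbitrary choice of $\cI$ completes the argument. I expect the main obstacle to be the key claim in the third paragraph --- ruling out that one intervention orients two vertex-disjoint consecutive clique edges --- since this is exactly where external-to-$K$ but internal-to-$H$ vertices, together with Meek-rule propagation, could a priori let a single intervention resolve more of the clique than in the isolated-clique case (where the consecutive edges are precisely the covered edges, cf.\ \cref{lem:covered-edges-are-Hasse-edges}). The interval structure of \cref{thm:orienting-vertices-form-an-interval} is precisely the tool that rules this out, collapsing the general chordal component back to the clean path/vertex-cover accounting of an isolated clique.
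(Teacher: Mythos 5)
You are proving a statement that this paper never actually proves: \cref{lem:strengthened-lb} is imported verbatim as Lemma~21 of \cite{choo2022verification}, so your proposal must be judged on its own merits and against that reference's covered-edge-based argument. Your modularity step (second paragraph) is sound: using \cref{lem:recovered-union} and \cref{thm:properties}, a minimum verifying set $\mathcal{W}$ for $G$ satisfies $R(G^{\cI},\mathcal{W}) = E \setminus R(G,\cI)$, the connected components of $G^{\cI}$ are exactly the chain components $H$, and since Meek rules and interventions never cross connected components, $\mathcal{W} \cap V(H)$ must verify the connected v-structure-free DAG $G[V(H)]$ by itself; disjointness of the $V(H)$ then gives the sum. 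The genuine gap is in your per-component clique bound --- exactly the step you flag as the heart of the proof. The key claim, that no single vertex orients two non-adjacent consecutive edges of a maximum clique, is \emph{false}, and with it the reduction to a vertex cover of $P_\omega$ collapses.

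Here is a counterexample. Let $G$ have vertices $\{d,v_1,v_2,v_3,v_4\}$, a clique on $\{v_1,v_2,v_3,v_4\}$ oriented by increasing index, and arcs $d \to v_1$, $d \to v_2$, $d \to v_3$, with $d \not\sim v_4$. Every parent set induces a clique, so $G$ has no v-structures; its skeleton ($K_5$ minus one edge) is chordal and connected; and $K = \{v_1,v_2,v_3,v_4\}$ is a maximum clique, so $\omega = 4$ and $\lfloor \omega/2 \rfloor = 2$. Intervening on $v_1$ cuts $d \to v_1, v_1 \to v_2, v_1 \to v_3, v_1 \to v_4$; Meek R2 then orients $d \to v_2$ and $d \to v_3$ (via $d \to v_1 \to v_i$ with $d \sim v_i$); Meek R1 then orients $v_2 \to v_4$ and $v_3 \to v_4$ (via $d \to v_i$ with $d \not\sim v_4$). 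So the single vertex $v_1$ orients both $e_1 = v_1 \to v_2$ and $e_3 = v_3 \to v_4$, which are non-adjacent; worse, intervening on $v_2$ orients all three of $e_1,e_2,e_3$ (indeed every edge of $G$ except $d \sim v_1$). Your interval-stabbing instance on $\{e_1,e_2,e_3\}$ therefore has a stabbing set of size one, not two. \cref{thm:orienting-vertices-form-an-interval} cannot rule this out: it only asserts $R^{-1}_1(G, v_3 \to v_4) = \Des[w] \cap \Anc[v_4]$ for \emph{some} $w \in \Anc[v_3]$, and here $w = d$, far above the clique, precisely because $d$ is adjacent to $v_3$ but not to $v_4$. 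Note that the lemma itself survives in this example: $\nu_1(G,E) = 2$, but the second intervention is forced by the covered edge $d \sim v_1$ (and by $v_2 \sim v_3$) --- edges your accounting never examines, since covered edges can only be oriented by intervening on an endpoint (cf.\ statement 9 of \cref{thm:properties} and \cite[Lemma 27]{choo2022verification}). In this example the $\lfloor \omega/2 \rfloor$ bound is witnessed by the \emph{other} maximum clique $\{d,v_1,v_2,v_3\}$, whose consecutive edges happen to be exactly the covered edges of $G$. This is why the proof in \cite{choo2022verification} runs through the characterization of $\nu_1(G,E)$ as a minimum vertex cover of the covered edges rather than through the consecutive edges of an arbitrarily chosen maximum clique; repairing your argument would at minimum require proving that \emph{some} maximum clique has the disjoint-interval property you need, which is a genuinely different and nontrivial claim.
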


\begin{lemma}[\cite{gilbert1984separatorchordal}]
\label{thm:chordal-separator-weighted}
Let $G = (V,E)$ be a chordal graph with $|V| \geq 2$ and $p$ vertices in its largest clique.
Suppose each vertex $v$ is assigned a non-negative weight $c(v) \geq 0$ such that $\sum_{v} c(v) = n$.
Then, there exists a $1/2$-clique-separator $C$ of size $|C| \leq p - 1$ such that any connected component in $G$ after the removal has total weight of no more than $\sum_{v \in V} c(v)/2$.
The clique $C$ can be computed in $\cO(|E|)$ time.
\end{lemma}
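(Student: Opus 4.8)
The plan is to exploit the clique-tree representation of chordal graphs and run a weighted centroid argument on it. First I would recall the standard fact that any chordal graph $G$ admits a \emph{clique tree} $\mathcal{T}$: a tree whose nodes are the maximal cliques of $G$ (each of size at most $p$) satisfying the running-intersection property, so that for every vertex $v$ the set of maximal cliques containing $v$ induces a connected subtree $\mathcal{T}_v$. Such a clique tree, together with a perfect elimination ordering, can be produced in $\mathcal{O}(|V|+|E|)$ time via maximum cardinality search, which is also what will give the $\mathcal{O}(|E|)$ runtime claimed at the end. The structural fact I would use repeatedly is that deleting the vertex set of a single node $K$ of $\mathcal{T}$ from $G$ breaks $G$ into connected components that are in bijection with the subtrees of $\mathcal{T}-K$: a component is exactly $(\bigcup_{L \in \mathcal{T}'} L)\setminus K$ for a subtree $\mathcal{T}'$ of $\mathcal{T}-K$, because any edge of $G$ lies in some maximal clique and the endpoint subtrees $\mathcal{T}_u,\mathcal{T}_w$ of an edge must overlap.

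Next I would set up the centroid search. Root $\mathcal{T}$ at an arbitrary node, and for each vertex $v$ let $\mathrm{top}(v)$ be the node of the connected subtree $\mathcal{T}_v$ closest to the root; assign the weight $c(v)$ to $\mathrm{top}(v)$, giving each tree node $K$ a weight $W(K)=\sum_{v:\mathrm{top}(v)=K} c(v)$ with $\sum_K W(K)=n$. Writing $g(K)$ for the total $W$-weight of the subtree rooted at $K$, a standard descent --- start at the root and repeatedly move into a child whose subtree weight exceeds $n/2$ --- terminates at a node $K^\star$ with $g(K^\star)\ge n/2$ and $g(K')\le n/2$ for every child $K'$. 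I would then show that removing the maximal clique $K^\star$ leaves every component with weight at most $n/2$: for a child subtree the corresponding component contains only vertices whose $\mathrm{top}$ lies inside that subtree (such a vertex is not in $K^\star$, so $\mathcal{T}_v$ cannot cross $K^\star$), hence its weight is at most $g(K')\le n/2$; and the upward component contains only vertices whose $\mathrm{top}$ lies outside the subtree of $K^\star$, so its weight is at most $n-g(K^\star)\le n/2$. This already produces a clique separator that is a maximal clique, hence of size at most $p$.

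The delicate step --- and the one I expect to be the main obstacle --- is tightening the separator size from $p$ down to the claimed $p-1$. Since a maximal clique properly contains its intersection with the parent clique, there is a vertex $w\in K^\star$ absent from the parent clique, so $\mathcal{T}_w$ lies entirely in the subtree of $K^\star$; the natural attempt is to take $C=K^\star\setminus\{w\}$, of size at most $p-1$, and argue that returning $w$ to the graph only merges the child components whose cliques contain $w$, keeping the merged weight at most $n/2$. I would try to verify this by bounding the weight of the merged component by the $W$-weight of the connected sub-family of cliques below $K^\star$ that contain $w$, choosing $w$ to minimize that quantity. This case analysis --- ensuring $w$ can always be picked so the merge does not overflow $n/2$, which is precisely the point where the weighted statement is more subtle than the unweighted one --- is the crux, and it is where I would lean on the refined accounting of \cite{gilbert1984separatorchordal}. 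Finally, for the runtime, the clique tree, the $\mathrm{top}$ assignment, the subtree weights $g(\cdot)$, and the centroid descent are each computable by a constant number of traversals of $\mathcal{T}$, whose size is $\mathcal{O}(|V|+|E|)$, yielding the stated $\mathcal{O}(|E|)$ bound.
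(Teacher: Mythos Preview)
The paper does not actually prove this lemma: it is stated with the citation \cite{gilbert1984separatorchordal} and used as a black box in the analysis of \texttt{SubsetSearch}, with no accompanying argument. So there is no ``paper's own proof'' to compare your proposal against.

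That said, your sketch is a faithful reconstruction of the standard clique-tree centroid argument underlying the Gilbert--Rose--Tarjan result, and the overall plan (build a clique tree in linear time, push vertex weights to the top clique of each $\mathcal{T}_v$, descend to a weighted centroid $K^\star$, then drop one vertex to get size $\le p-1$) is sound. The step you flag as delicate --- shrinking the separator from $K^\star$ to $K^\star\setminus\{w\}$ while keeping every component at weight $\le n/2$ --- is indeed the only place where care is needed, and your heuristic of choosing $w$ to minimize the merged weight is not quite enough on its own; the original argument instead exploits that one can always choose $w$ so that either $w$ is adjacent to at most one side (so no merging happens) or the merged piece is bounded by $n - g(K^\star) + g(K') \le n/2$ via the centroid inequalities. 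If you intend to include a full proof rather than cite the result, that is the case analysis you would need to pin down.
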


Our analysis approach mirrors \cite{choo2022verification}: we first argue that \cref{alg:subset-search-algo} terminates after $\cO(\log |V(H)|)$ iterations, and then argue that each iteration uses at most $\cO(\nu_1(G^*))$ atomic interventions.

\begin{lemma}
\label{lem:log-rounds-suffice}
\cref{alg:subset-search-algo} terminates after at most $\cO(\log |\rho(\cI, V(H))|)$ iterations.
\end{lemma}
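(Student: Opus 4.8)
The plan is to track a potential equal to the largest number of relevant nodes contained in any single chain component, and to show that each iteration of the while loop at least halves it. Concretely, write $G_i = \cE_{\cI \cup \cI_i}(G^*)$ for the interventional essential graph after $i$ rounds (so $G_0 = \cE_{\cI}(G^*)$ since $\cI_0 = \emptyset$), and for each chain component $H_{CC} \in CC(G_i)$ let $r_i(H_{CC})$ be the number of nodes of $V(H)$ lying in $V(H_{CC})$ that are still incident to an unoriented edge of $\cE_{\cI\cup\cI_i}(G^*)[V(H)]$; this is exactly $|\rho(\cI \cup \cI_i, V(H)) \cap V(H_{CC})|$. Define the potential $\Phi_i = \max_{H_{CC} \in CC(G_i)} r_i(H_{CC})$. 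First I would observe that $\Phi_0 \le |\rho(\cI, V(H))|$, because each relevant node of $H$ lies in exactly one chain component, so the maximum over components is bounded by the total count.

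The core step is the per-round halving $\Phi_{i+1} \le \Phi_i / 2$ whenever $\Phi_i \ge 2$. Fix a chain component $H_{CC}$ of $G_i$ that the algorithm processes, i.e.\ one with at least two relevant nodes. Since $H_{CC}$ is a chain component of an interventional essential graph, it is chordal by \cref{lem:essential-graph-is-chain-graph}, so \cref{thm:chordal-separator-weighted}, applied with the weight $c$ that is uniform on the relevant nodes of $H$ inside $H_{CC}$ and zero elsewhere, produces a clique separator $K_H$ such that every connected component of $H_{CC} \setminus K_H$ carries at most half of the total weight, hence contains at most $r_i(H_{CC})/2$ relevant nodes. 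I would then invoke the structural behaviour of clique-separator interventions used in \cite{choo2022verification}: intervening on $K_H$ orients every edge within and incident to $K_H$ (via the cut edges together with Meek rules), so in $G_{i+1}$ the component $H_{CC}$ is refined into chain components each contained in a single connected component of $H_{CC} \setminus K_H$, while the separator nodes are no longer incident to any unoriented $H$-edge. Thus every chain component of $G_{i+1}$ inside $H_{CC}$ has at most $r_i(H_{CC})/2 \le \Phi_i/2$ relevant nodes, and unprocessed components (those with $r_i < 2$) contribute at most $1 \le \Phi_i/2$. Because orienting edges never creates new unoriented edges, the relevant-node count can only decrease, giving $\Phi_{i+1} \le \Phi_i/2$. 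For the bounded-size variant the same orientation of $K_H$ is obtained in a single round through the labelling scheme of \cite{shanmugam2015learning}, so the per-round guarantee is identical.

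Iterating the halving yields $\Phi_i \le \Phi_0 / 2^i \le |\rho(\cI, V(H))| / 2^i$, so after $t \in \mathcal{O}(\log |\rho(\cI, V(H))|)$ rounds we reach $\Phi_t \le 1$. At that point no chain component contains two relevant nodes; since any unoriented edge of $\cE_{\cI \cup \cI_t}(G^*)[V(H)]$ would place both of its endpoints as relevant nodes inside one common chain component, the induced subgraph $\cE_{\cI \cup \cI_t}(G^*)[V(H)]$ has no unoriented edges and the while loop terminates, establishing the claimed $\mathcal{O}(\log |\rho(\cI, V(H))|)$ iteration bound.

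I expect the main obstacle to be the second paragraph: correctly matching the weighted-separator guarantee, which is phrased in terms of total vertex weight, to a genuine halving of the relevant-node count, and rigorously arguing that intervening on the clique separator refines each processed chain component exactly into the connected components of $H_{CC} \setminus K_H$ (so that relevant nodes are truly partitioned among strictly smaller pieces). This hinges on supporting the weight function on the relevant nodes of $H$ and on citing the precise orientation consequences of clique-separator interventions.
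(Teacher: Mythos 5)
Your proof is correct and follows essentially the same approach as the paper: uniform weights supported on the $H$-nodes of each processed chain component, the weighted $1/2$-clique separator guarantee of \cref{thm:chordal-separator-weighted}, and a per-iteration halving argument that yields the $\cO(\log|\rho(\cI,V(H))|)$ bound. Your write-up is in fact more careful than the paper's own proof, which leaves implicit both the refinement of chain components into connected components of $H_{CC} \setminus K_H$ after intervening on the separator and the final step that at most one relevant node per component forces all $H$-edges to be oriented, but the underlying argument is identical.
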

\begin{proof}
Note that chain components in $\cE_{\cI_{i}}(G^*)[V(H)]$ are chordal since node-induced subgraphs of a chordal graph are also chordal.
By choice of $c(v)$ and \cref{thm:chordal-separator-weighted}, all connected components will have total weight.
Since each vertex in $V(H_{CC}) \cap V(H)$ is assigned the same weight via $c(v)$, the number of vertices from $V(H)$ within any connected component $H_{CC}$ is at least halved per iteration.
Thus, after $\cO(\log |V(H)|)$ iterations, all connected components have at most one vertex from $V(H)$, which in turn means that all edges within the node-induced graph $H$ has been oriented.
\end{proof}

\subsetsearchonnodeinducedsubgraph*
\begin{proof}
Fix an arbitrary iteration $i$ of \cref{alg:subset-search-algo} and let $G_i$ be the partially oriented graph obtained after intervening on $\cI_i$.
By \cref{lem:strengthened-lb}, $\sum_{H \in CC(\cE_{\cI_i}(G^*))} \lfloor \frac{\omega(H)}{2} \rfloor \leq \nu_1(G^*, E)$.
By definition of $\omega$, we always have $|K_H| \leq \omega(H)$.
Thus, \cref{alg:subset-search-algo} uses at most $2 \cdot \nu_1(G^*, E)$ interventions in each iteration.

By \cref{lem:log-rounds-suffice}, \cref{alg:subset-search-algo} terminates after $\cO(\log |\rho(\cI, V(H))|)$ iterations and so the algorithm uses at most $\cO(\log (|\rho(\cI, V(H))|) \cdot \nu_1(G^*, E))$ atomic interventions in total.
\end{proof}

\subsetsearchonnodeinducedsubgraphbounded*
\begin{proof}
Fix an arbitrary iteration $i$ of \cref{alg:subset-search-algo} and let $G_i$ be the partially oriented graph obtained after intervening on $\cI_i$.
Applying exactly the same proof as \cite[Theorem 16]{choo2022verification}, we see that $|C_i| \in \cO\left(\nu_k(G^*, E) \cdot \log k \right)$.
By \cref{lem:log-rounds-suffice}, there are $\cO(\log |\rho(\cI, V(H))|)$ iterations and so $\cO(\log(|\rho(\cI, V(H))|) \cdot \log (k) \cdot \nu_k(G^*, E))$ bounded size interventions are used by \cref{alg:subset-search-algo}.
\end{proof}

\subsection{Interval stabbing problem on a rooted tree}

\recurrencecorrectness*
\begin{proof}
Consider any arbitrary vertex $v \in V$ and any child $y \in \Ch(v)$ such that $C_v \cap I_y \neq \emptyset$.
For any $U \subseteq \cJ$, define
\[
L_{U,v,y} = \argmin_{[a,b] \in U \cap C_v \cap I_y} \{ \pi(b) \} = [a_{v,y}, b_{v,y}]
\]
as the earliest ending interval within $U$ that is covered by $v$ in subtree $T_y$.

Suppose $v \not\in \cI$ and $\cI$ does not include any ancestor of $v$.
To stab any interval in $[a,b] \in C_v$, we must have $w \in \cI$ for some $w \in \Des(v) \cap \Anc[b]$.
Since subtrees $T_y$ are disjoint, we can partition $C_v$ into $\;\dot\cup\;_{y \in \Ch(v)} C_{v,y} = \;\dot\cup\;_{y \in \Ch(v)} C_v \cap I_y$, where $C_{v,y}$ is associated to subtree $T_y$.
So, for each interval $[a,b] \in C_{v,y}$, we need to ensure that $w \in \cI$ for some $w \in \Des(v) \cap \Anc[b]$.
By minimality of $b_{v,y}$, stabbing $L_{U,v,y}$ ensures that all intervals in $C_{v,y}$ are stabbed and any stabbing for $C_{v,y}$ must also stab $L_{U,v,y}$.
\end{proof}

\section{Efficient dynamic programming implementation of recurrence}
\label{sec:appendix-dp}

We first recall the definitions and recurrence equations established in \cref{sec:interval-stabbing-on-a-tree} before explaining how to solve \cref{defn:IC} in polynomial time via dynamic programming (DP).

\subsection{Recap}

Given a set of intervals $\cJ$, we define the following sets with respect to an arbitrary vertex $v \in V$:
\begin{align*}
    E_v &= \{[a,b]_{G} \in \cJ : b=v\} && \text{(End with $v$)}\\
    M_v &= \{[a,b]_{G} \in \cJ : v \in (a,b)_{G}\} && \text{(Middle with $v$)}\\
    S_v &= \{[a,b]_{G} \in \cJ : a=v\} && \text{(Start with $v$)}\\
    W_v &= \{[a,b]_{G} \in \cJ : a,b \in V(T_{v}) \setminus \{v\}\} && \text{(Without $v$)}\\
    I_v &= E_v \cup M_v \cup S_v \cup W_v && \text{(Intersect $T_v$)}\\
    B_v &= S_v \cup W_v && \text{(Back of $I_v$)}\\
    C_v &= E_v \cup M_v \cup S_v && \text{(Covered by $v$)}
\end{align*}
Note that $I_v$ includes all the intervals in $\cJ$ that intersect with the subtree $T_v$ (i.e.\ has some vertex in $V(T_v)$) and $C_v$ includes all the intervals that will be covered whenever $v \in \cI$.
Observe that $I_y \subseteq I_v$ for any $y \in \Des(v)$.

For any subset $U \subseteq \cJ$ and vertex $v \in V$, $\opt(U,v)$ denotes the \emph{size} of the optimum solution to stab all the intervals in $U$ using only vertices in $V(T_v)$ in the subtree $T_v$ rooted at $v$, where
\begin{equation*}
\opt(U, v) =
\begin{cases}
\infty & \text{if $U \not\subseteq I_v$}\\
\alpha_v & \text{if $U \subseteq I_v$, $U \cap E_v \neq \emptyset$}\\
\min\{\alpha_v, \beta_v\} & \text{if $U \subseteq I_v$, $U \cap E_v = \emptyset$}
\end{cases}
\end{equation*}
\[
\text{where}\qquad
\alpha_v = 1 + \sum_{y \in \Ch(v)} \opt(U \cap B_y, y)
\qquad\text{and}\qquad
\beta_v = \sum_{y \in \Ch(v)} \opt(U \cap I_y, y)
\]

That is, we must pick $v \in \cI$ whenever $E_v \neq \emptyset$, while $\alpha_v$ and $\beta_v$ correspond to the decisions of picking $v$ into the output and ignoring $v$ from the output respectively.
Then, $\opt(\cJ, r)$ is the optimum solution size to the interval stabbing problem, where $r$ as the root of the given rooted tree.

\subsection{Efficient implementation}

Naively computing the recurrence relation of \cref{eq:dp-recurrence} will incur an exponential blow-up in state space.
Instead, we will define an ordering $\prec$ on $\cJ$ so that our state space is over the indices of a sorted array instead of a subset of intervals (see \cref{eq:prec-ordering}), so that we can implement the recurrence as a polynomial time dynamic programming (DP) problem.

Our $\prec$ ordering relies on the Euler tree data structure for rooted trees \cite{tarjan1984finding,henzinger1995randomized}, which computes a sequence $\tau$ of vertices visited in a depth-first search (DFS) from the root.
Using this sequence $\tau$, we can obtain the first ($f$) and last ($\ell$) times that a vertex is visited.
More formally, we can define the mappings $\tau : \{1, \ldots, 2n-1\} \to V$, $f: V \to \{1, \ldots, 2n-1\}$, and $\ell: V \to \{1, \ldots, 2n-1\}$.
These mappings can be computed in linear time (via DFS) and $f(v) \leq \ell(v)$ with equality only if $v$ is a leaf of the tree.
See \cref{fig:tree-example} for an illustration of $\tau$, $f$, and $\ell$.

\begin{restatable}[Properties of Euler tour data structure]{lemma}{eulertourproperties}
\label{euler-tour-properties}
Given an Euler tour data structure sequence $\tau$ with corresponding $f$ and $\ell$ indices, the following statements hold:
\begin{enumerate}
    \item $V(T_v) = \cup_{f(v) \leq i \leq \ell(v)} \tau(i)$.
    \item If $f(v) < f(u) < \ell(v)$, then $f(v) < \ell(u) < \ell(v)$.
    \item If $f(v) < \ell(u) < \ell(v)$, then $f(v) < f(u) < \ell(v)$.
    \item If $f(v) < f(u) < \ell(v)$ \emph{or} $f(v) < \ell(u) < \ell(v)$, then $f(v) < f(u), \ell(u) < \ell(v)$.
        That is, $u \in V(T_v)$.
    \item For any interval $[a,b]_G$, $f(a) < f(b) \leq \ell(b) < \ell(a)$.
    \item For any interval $[a,b]_G$ and any vertex $z \in V$, $[a,b]_G \in I_z \iff b \in T_z$.
\end{enumerate}
\end{restatable}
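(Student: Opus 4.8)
The plan is to establish the six statements in order, bootstrapping each from the recursive structure of the depth-first search that generates $\tau$. First I would prove Statement~1 directly: when the DFS first reaches $v$ (at index $f(v)$) it recurses through precisely the subtrees hanging off $v$ before backtracking past $v$ for the last time (at index $\ell(v)$), so every index in $[f(v),\ell(v)]$ records a visit to a vertex of $V(T_v)$, and, crucially, every vertex of $V(T_v)$ is visited \emph{only} within this contiguous window. This contiguity immediately yields the nesting (laminar) property I would record as an auxiliary claim: for any $u$, one has $u \in V(T_v)$ if and only if $f(v) \le f(u) \le \ell(u) \le \ell(v)$, with the two outer inequalities strict whenever $u \ne v$. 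Everything else is a consequence of Statement~1 and this nesting property, so although Statements~1--4 are standard Euler-tour facts, I would include them for completeness.

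With nesting in hand, Statements~2--4 are short. For Statement~2 the hypothesis $f(v) < f(u) < \ell(v)$ means $u = \tau(f(u))$ is visited inside $v$'s window, so $u \in V(T_v)$ by Statement~1; nesting then gives $\ell(u) \le \ell(v)$, and $u \ne v$ forces $\ell(u) < \ell(v)$, while $\ell(u) \ge f(u) > f(v)$ supplies the lower bound. Statement~3 is the mirror argument with $\ell(u)$ in place of $f(u)$, and Statement~4 follows because either hypothesis already places $u$ in $V(T_v)$, after which nesting delivers $f(v) < f(u)$ and $\ell(u) < \ell(v)$. Statement~5 is then a specialization: since $[a,b]_G$ is an interval we have $a \in \Anc(b)$, hence $b \in V(T_a) \setminus \{a\}$, and nesting gives $f(a) < f(b) \le \ell(b) < \ell(a)$.

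The substance of the lemma is Statement~6, and I expect its reverse direction to be the main obstacle. The forward direction ($[a,b]_G \in I_z \Rightarrow b \in V(T_z)$) is a quick check across the four defining sets: $E_z$ gives $b = z$; $M_z$ puts $z$ strictly between $a$ and $b$ so $z \in \Anc(b)$; $S_z$ gives $a = z \in \Anc(b)$; and $W_z$ gives $b \in V(T_z)$ outright. For the converse I would fix $b \in V(T_z)$, so that both $a$ and $z$ lie on the unique root-to-$b$ path (using $a \in \Anc(b)$ and $z \in \Anc[b]$), and then do a case analysis on their relative positions along that chain: $z = b$ lands in $E_z$; $z = a$ lands in $S_z$; $a \in \Anc(z)$ with $z \ne b$ puts $z$ strictly inside $(a,b)_G$, hence in $M_z$; and $z \in \Anc(a)$ with $z \ne b$ gives $a,b \in V(T_z) \setminus \{z\}$, hence $W_z$. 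Since any two ancestors-or-equals of $b$ are comparable, these cases are exhaustive, so $[a,b]_G \in I_z$. The delicate points to get right are the strictness bookkeeping in the comparability argument and confirming that $(a,b)_G$ denotes the \emph{open} path between $a$ and $b$, so that the $M_z$ case is cleanly separated from the endpoint cases $E_z$ and $S_z$.
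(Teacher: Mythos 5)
Your proposal is correct and takes essentially the same approach as the paper, whose own proof is a terse enumeration: statements 1--4 are asserted as standard DFS non-interleaving facts (with 2 and 3 proved by the same interleaving contradiction you invoke via nesting), statement 5 from $a \in \Anc[b]$, and statement 6 ``by definition of $I_z$ and $T_z$.'' Your write-up simply supplies the details the paper leaves implicit, most usefully the explicit four-way case analysis ($E_z$, $S_z$, $M_z$, $W_z$) establishing the reverse direction of statement 6.
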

\begin{proof}
\begin{enumerate}
    \item By definition of DFS traversal from the root.
    \item Suppose, for a contradiction, that $f(v) < f(u) < \ell(v) < \ell(u)$.
        This cannot happen in a DFS traversal on a tree.
    \item Suppose, for a contradiction, that $f(u) < f(v) < \ell(u) < \ell(v)$.
        This cannot happen in a DFS traversal on a tree.
    \item Combine above properties.
    \item $a \in \Anc[b]$ in the tree $G$ by definition of intervals.
    \item By definition of $I_z$ and $T_z$.\qedhere
\end{enumerate}
\end{proof}

Using the Euler tour data structure, we can efficiently remove a subset of ``unnecessary intervals'' from $\cJ$, whose removal will not affect the optimality of the recurrence while granting us some additional structural properties which we will exploit.
We call these ``unnecessary intervals'' \emph{superset intervals}.

\begin{definition}[Superset interval]
We say that an interval $[c,d] \in \cJ$ is a \emph{superset interval} if there exists another interval $[a,b] \in \cJ$ such that $c \in \Anc[a]$ and $b \in \Anc[d]$.
Note that $a \in \Anc[b]$ is implied by the fact that $[a,b]$ is an interval.
\end{definition}

Observe that the removal of superset intervals will not affect the optimality of the solution because stabbing $[a,b]$ will stab $[c,d]$.
For an interval $[a,b]$, we call $a$ the \emph{starting vertex} and $b$ the \emph{ending vertex} of the interval $[a,b]$ respectively.
Using the Euler tour data structure, superset intervals can be removed in $\cO(|\cJ| \log |\cJ|)$ time by first sorting the intervals according to the ending vertex, then only keep the intervals with the latest starting vertex amongst any pair of intervals that share the same ending vertex.
After removing superset intervals, we are guaranteed that the ending vertices in $\cJ$ are unique.

We now define an ordering $\prec$ on $\cJ$ using the Euler tour mapping $f$ so that $\cJ[i] \prec \cJ[i]$ for any $i < j$:
\begin{equation}
\label{eq:prec-ordering}
[a,b] \prec [c,d] \iff f(a) < f(c) \text{ or } ( a = c \text{ and } \ell(b) > \ell(d) )
\end{equation}
We write $\cJ^{-1}([a,b])$ to refer to the index of $[a,b]$ in $\cJ$.
Since $I_y \subseteq I_v$ for any $y \in \Ch(v)$, we are guaranteed that $\min_{[a,b] \in I_v} \cJ^{-1}([a,b]) \leq \min_{[a,b] \in I_y} \cJ^{-1}([a,b])$ for any $y \in \Ch(v)$.
However, note that there may be intervals \emph{outside} of $I_v$ with indices between $\min_{[a,b] \in I_v} \cJ^{-1}([a,b])$ and $\max_{[a,b] \in I_v} \cJ^{-1}([a,b])$.
For any vertex $v \in V$, we define
\[
\cJ_v
= sorted \left( \{[a,b] \in \cJ \cap I_v: \cJ^{-1}([a,b])\} \right)
\]
as the array of indices of intervals in $I_v$ such that $\cJ_v[i] \prec \cJ_v[j]$ for all $1 \leq i < j \leq |I_v| = |\cJ_v|$.
See \cref{fig:J-example}.

\begin{figure}[htbp]
\centering
\resizebox{0.5\linewidth}{!}{%
\begin{tikzpicture}
\node[draw, circle, minimum size=15pt, inner sep=2pt] at (0,0) (a) {\small $a$};
\node[draw, circle, minimum size=15pt, inner sep=2pt, below=20pt of a] (c) {\small $c$};
\node[draw, circle, minimum size=15pt, inner sep=2pt, left=25pt of c] (b) {\small $b$};
\node[draw, circle, minimum size=15pt, inner sep=2pt, right=25pt of c] (d) {\small
$d$};
\node[draw, circle, minimum size=15pt, inner sep=2pt, below=20pt of b, xshift=-20pt] (e) {\small $e$};
\node[draw, circle, minimum size=15pt, inner sep=2pt, below=20pt of b, xshift=20pt] (f) {\small $f$};
\node[draw, circle, minimum size=15pt, inner sep=2pt, below=20pt of c] (g) {\small $g$};
\node[draw, circle, minimum size=15pt, inner sep=2pt, below=20pt of d] (h) {\small $h$};
\node[draw, circle, minimum size=15pt, inner sep=2pt, below=20pt of h, xshift=-20pt] (i) {\small $i$};
\node[draw, circle, minimum size=15pt, inner sep=2pt, below=20pt of h, xshift=20pt] (j) {\small $j$};

\draw[thick, -stealth] (a) -- (b);
\draw[thick, -stealth] (a) -- (c);
\draw[thick, -stealth] (a) -- (d);
\draw[thick, -stealth] (b) -- (e);
\draw[thick, -stealth] (b) -- (f);
\draw[thick, -stealth] (c) -- (g);
\draw[thick, -stealth] (d) -- (h);
\draw[thick, -stealth] (h) -- (i);
\draw[thick, -stealth] (h) -- (j);

\draw[thick, dashed, Bracket-Bracket] ($(a) + (-0.1,0.5)$) to[bend right=20] ($(b) + (-0.5,0)$);
\draw[thick, dashed, Bracket-Bracket] ($(a) + (-0.1,0.75)$) to[bend right=30] ($(e) + (-0.5,0)$);
\draw[thick, dashed, Bracket-Bracket] ($(c) + (0.5,0)$) -- ($(g) + (0.5,0)$);
\draw[thick, dashed, Bracket-Bracket] ($(a) + (0.1,0.5)$) to[bend left=50] ($(h) + (0.75,0)$);
\draw[thick, dashed, Bracket-Bracket] ($(d) + (0.5,0)$) to[bend right=35] ($(j) + (0.5,0)$);
\draw[thick, dashed, Bracket-Bracket] ($(a) + (0.25,-0.5)$) -- ($(d) + (-0.35,0)$) -- ($(h) + (-0.35,0)$) -- ($(i) + (-0.5,0)$);
\end{tikzpicture}
}
\caption{
Consider the rooted tree $G$ on $n=10$ vertices with intervals $\cJ = \{[a,b], [a,e], [a,h], [a,i], [c,g], [d,j]\}$.
Here, $[a,e]$ is a superset interval of $[a,b]$.
We see that $\opt(\cJ, a) = 3$, where $\{a,c,d\}$ and $\{b,g,h\}$ are possible optimal sized interval covers.
One possible Euler tour sequence $\tau$ is $(a,b,e,b,f,b,a,c,g,c,a,d,h,i,h,j,h,d,a)$ of length $|\tau| = 2n-1 = 19$.
\cref{tab:tree-example} shows the first ($f$) and last ($\ell$) indices within $\tau$.
Observe that the leaves $e,f,g,i,j$ have the same first and last indices, and vertices $d,h,i,j \in V(T_d)$ have indices between $f(d) = 12$ and $\ell(d) = 18$.
Under \cref{eq:prec-ordering}, we have $[a,h] \prec [a,i] \prec [a,b] \prec [a,e] \prec [c,g] \prec [d,j]$.
}
\label{fig:tree-example}
\end{figure}
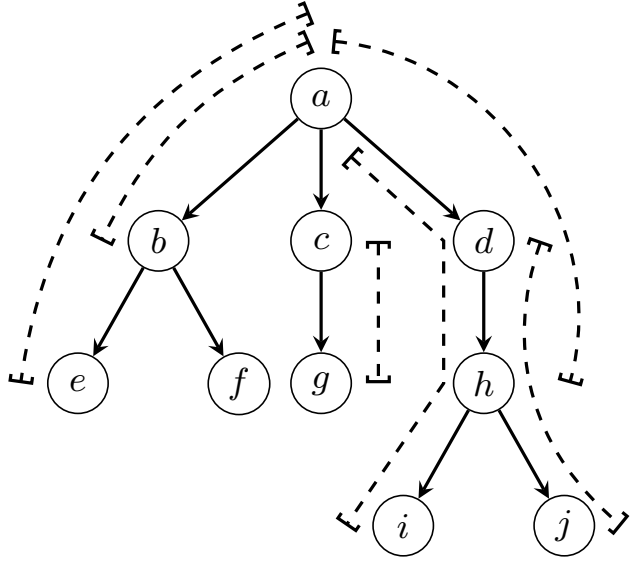
\begin{table}[htbp]
\centering
\begin{tabular}{c|cccccccccc}
\toprule
& $a$ & $b$ & $c$ & $d$ & $e$ & $f$ & $g$ & $h$ & $i$ & $j$\\
\midrule
$f$ & 1 & 2 & 8 & 12 & 3 & 5 & 9 & 13 & 14 & 16\\
$\ell$ & 19 & 6 & 10 & 18 & 3 & 5 & 9 & 17 & 14 & 16\\
\bottomrule
\end{tabular}
\caption{First ($f$) and last $(\ell)$ indices for \cref{fig:tree-example}}
\label{tab:tree-example}
\end{table}

\begin{figure}[htbp]
\centering
\resizebox{0.3\linewidth}{!}{%
\begin{tikzpicture}
\node[draw, circle, minimum size=15pt, inner sep=2pt] at (0,0) (a) {\small $a$};
\node[draw, circle, minimum size=15pt, inner sep=2pt, below=20pt of a] (b) {\small $b$};
\node[draw, circle, minimum size=15pt, inner sep=2pt, below=20pt of b, xshift=-20pt] (c) {\small $c$};
\node[draw, circle, minimum size=15pt, inner sep=2pt, below=20pt of b, xshift=20pt] (d) {\small $d$};
\node[draw, circle, minimum size=15pt, inner sep=2pt, below=20pt of d] (e) {\small $e$};

\draw[thick, -stealth] (a) -- (b);
\draw[thick, -stealth] (b) -- (c);
\draw[thick, -stealth] (b) -- (d);
\draw[thick, -stealth] (d) -- (e);

\draw[thick, dashed, Bracket-Bracket] ($(b) + (-0.25,0.5)$) -- ($(c) + (-0.5,0)$);
\draw[thick, dashed, Bracket-Bracket] ($(a) + (0.5,0)$) to[bend right=30] ($(d) + (0.5,0)$);
\draw[thick, dashed, Bracket-Bracket] ($(d) + (-0.5,0)$) -- ($(e) + (-0.5,0)$);
\end{tikzpicture}
}
\caption{
Consider the rooted tree $G$ on 5 vertices with intervals $\cJ = \{[a,d], [b,c], [d,e]\}$ and the Euler tour visits $a,b,c,d,e$ in sequence.
Then, $[a,d] \prec [b,c] \prec [d,e]$ and $\cJ^{-1}([a,d]) = 1$, $\cJ^{-1}([b,c]) = 2$, and $\cJ^{-1}([d,e]) = 3$.
In this example, $I_d = \{[a,d], [d,e]\}$ and $\cJ_d = [1,3]$.
Observe that $\min_i \cJ_v[i] < \cJ^{-1}([b,c]) < \max_i \cJ_v[i]$ despite $[b,c] \not\in I_d$.
}
\label{fig:J-example}
\end{figure}
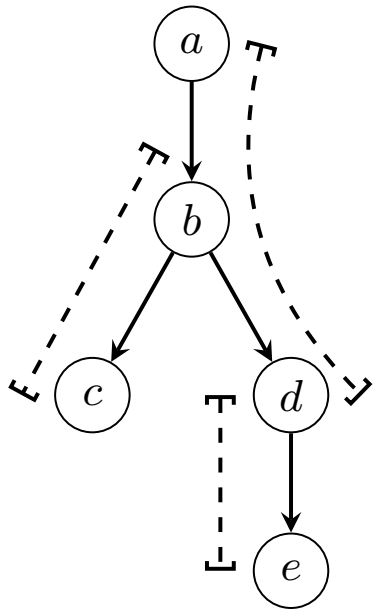

We begin with a simple lemma relating the first time a depth-first search visits a vertex and the ancestry of vertices.

\begin{lemma}
\label{lem:dfs-ancestry}
Consider arbitrary vertices $a,b,v \in V$ in a rooted tree $G$ with root $r$.
If $a,b \in \Anc(v)$, then either $a \in \Anc[b]$ or $b \in \Anc[a]$.
Furthermore, if $f(a) \leq f(b)$ then $a \in \Anc[b]$.
\end{lemma}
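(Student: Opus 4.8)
The plan is to reduce both claims to two elementary facts about rooted trees: that the ancestors of any vertex lie on a single root-to-vertex path and are therefore totally ordered, and that a depth-first traversal visits a proper ancestor strictly before it first reaches a descendant.

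For the first claim, I would recall that in a rooted tree there is a unique directed path from the root $r$ to $v$, and that $\Anc(v)$ is precisely the set of vertices strictly preceding $v$ on this path. Since $a, b \in \Anc(v)$, both $a$ and $b$ lie on this single path, and the vertices of a path are totally ordered by the ``occurs earlier along the path'' relation. Whichever of $a, b$ occurs earlier is then an ancestor of the other, so either $a \in \Anc[b]$ or $b \in \Anc[a]$.

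For the second claim, I would invoke the key property of the Euler tour sequence $\tau$, which is a DFS traversal from the root: if $x$ is a proper ancestor of $y$, then $x$ is entered (first visited) strictly before $y$, because $y$ lies in the subtree explored only after $x$ has been pushed; that is, $x \in \Anc(y) \implies f(x) < f(y)$. (This is exactly the kind of fact already used in \cref{euler-tour-properties}.) Now combine this with the first claim. By the first claim, either $a \in \Anc[b]$, in which case we are done, or $b \in \Anc[a]$. In the latter case, if $a = b$ then $a \in \Anc[b]$ holds trivially; and if $a \neq b$ then $b \in \Anc(a)$ is strict, so the DFS property yields $f(b) < f(a)$, contradicting the hypothesis $f(a) \leq f(b)$. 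Hence whenever $f(a) \leq f(b)$ we must be in the case $a \in \Anc[b]$, as desired.

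The argument is entirely elementary and I do not expect a substantive obstacle; the only point meriting a line of care is stating precisely the DFS property $f(x) < f(y)$ for proper ancestors, which follows directly from $\tau$ being a depth-first traversal. Everything else rests on the total-orderedness of the ancestors of $v$ along the unique root-to-$v$ path.
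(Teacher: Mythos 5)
Your proposal is correct and follows essentially the same route as the paper: both parts rest on the total ordering of $\Anc(v)$ along the unique root-to-$v$ path, plus the fact that a DFS from the root visits vertices on that path in order. The only cosmetic difference is that you argue the second claim by contrapositive (proper ancestor $\Rightarrow$ strictly earlier first-visit, so $b \in \Anc(a)$ would contradict $f(a) \leq f(b)$), whereas the paper reads the same DFS fact in the forward direction; given the dichotomy from the first claim these are interchangeable.
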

\begin{proof}
Since $G$ is a rooted tree, there is a unique path $P$ from $r$ to any vertex $v \in V$ that involves all ancestors of $v$.
Since $a,b \in \Anc(v)$, then either $a$ appears before $b$ in $P$ (i.e.\ $a \in \Anc[b]$) or $b$ appears before $a$ in $P$ (i.e.\ $b \in \Anc[a]$).
By definition of depth-first search from $r$, if we visit $a$ before $b$ (i.e.\ $f(a) \leq f(b)$), then it must be the case that $a \in \Anc[b]$.
\end{proof}

The next lemma tells us that unstabbed intervals form a contiguous interval in $\cJ_v$ and that $E_v$ appears first within $\cJ_v$.

\begin{restatable}[Properties of $\cJ$ with respect to $\prec$]{lemma}{propertiesofJ}
\label{lem:properties-of-J}
Consider an arbitrary $v \in V$ where $|I_v| \geq 2$.
\begin{itemize}
    \item For any $1 \leq i < j \leq |I_v|$, if $\cJ_v[j] = [c,d]$ is stabbed by some $z \in \Anc(v)$ then $\cJ_v[i] = [a,b]$ is also stabbed by $z$.
    \item If $E_v \neq \emptyset$ and $I_v \setminus E_v \neq \emptyset$, then $\max_{[a,b] \in E_v} \cJ^{-1}([a,b]) \leq \min_{[a,b] \in I_v \setminus E_v} \cJ^{-1}([a,b])$.
\end{itemize}
\end{restatable}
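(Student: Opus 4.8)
The plan is to reduce both statements to elementary ancestry reasoning on the rooted tree, using three ingredients: the Euler-tour first/last index properties (\cref{euler-tour-properties}), the characterization $[a,b] \in I_v \iff b \in V(T_v)$ from that lemma, and the standing assumption that superset intervals have already been removed, so that the right endpoints appearing in $\cJ$ are distinct and no interval in $\cJ$ wraps another. Throughout I would track the open/closed ancestor notation carefully, since the stabbing condition $z \in \Des[a] \cap \Anc[b]$ and the interval condition $a \in \Anc(v)$ differ exactly in whether endpoints are included.

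For the first bullet, I would first observe that for any $[a,b] \in I_v$ the right endpoint satisfies $b \in V(T_v) = \Des[v]$, so whenever $z \in \Anc(v)$ we automatically have $z \in \Anc[b]$; hence ``$z$ stabs $[a,b]$'' collapses to the single condition $a \in \Anc[z]$. Applying this to $\cJ_v[j] = [c,d]$ being stabbed by $z$ gives $c \in \Anc[z] \subseteq \Anc(v)$, and in particular $f(c) < f(v)$. Now take $\cJ_v[i] = [a,b]$ with $i < j$, i.e.\ $[a,b] \prec [c,d]$. If $a = c$ the conclusion $a \in \Anc[z]$ is immediate. Otherwise \cref{eq:prec-ordering} forces $f(a) < f(c) < f(v)$; since $a \in \Anc(b)$ and $b \in \Des[v]$, the vertex $a$ lies on the root-to-$b$ path, and $f(a) < f(v)$ rules out $a$ being $v$ or a descendant of $v$, so $a \in \Anc(v)$. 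Then \cref{lem:dfs-ancestry} applied to $a, c \in \Anc(v)$ with $f(a) < f(c)$ yields $a \in \Anc[c]$, and transitivity with $c \in \Anc[z]$ gives $a \in \Anc[z]$, i.e.\ $z$ stabs $[a,b]$.

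For the second bullet, removal of superset intervals makes right endpoints unique, so $E_v$ has a single element $[a_0, v]$ with $a_0 \in \Anc(v)$; it therefore suffices to show $[a_0, v] \prec [c,d]$ for every $[c,d] \in I_v \setminus E_v$. Such a $[c,d]$ has $d \in V(T_v) \setminus \{v\} = \Des(v)$, so its root-to-$d$ path passes through $v$ and $c \in \Anc(d)$ is comparable to $v$, which in particular gives $v \in \Anc[d]$. I would split on the position of $c$: if $c \in \Des(v)$ then $f(a_0) < f(v) < f(c)$; if instead $c \in \Anc[v]$, then $a_0$ and $c$ both lie on the root-to-$v$ path, and I would invoke superset-freeness to exclude $c \in \Anc[a_0]$ (otherwise $c \in \Anc[a_0]$ together with $v \in \Anc[d]$ would make $[c,d]$ a superset interval of $[a_0,v]$), forcing $a_0 \in \Anc(c)$ and hence $f(a_0) < f(c)$. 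In both cases $f(a_0) < f(c)$, so $[a_0,v] \prec [c,d]$ and the claimed index inequality follows.

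The main obstacle I anticipate is precisely the $c \in \Anc[v]$ subcase of the second bullet: comparing Euler-tour indices alone does not settle the order of $a_0$ and $c$, and the only correct way to break the tie is to invoke the removal of superset intervals to forbid an interval that ``wraps'' $[a_0, v]$. This is the single place where the no-containment structure is essential, so I would be careful to match the superset definition ($c \in \Anc[a_0]$ and $v \in \Anc[d]$) exactly. The remaining work is routine bookkeeping with the $f$/$\ell$ nesting and the $\Anc$ versus $\Anc[\cdot]$ distinction.
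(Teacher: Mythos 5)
Your proof is correct and follows essentially the same route as the paper's: both bullets reduce to comparisons of Euler-tour first-indices via \cref{lem:dfs-ancestry}, with superset-freeness invoked precisely in the $c \in \Anc[v]$ case of the second bullet. If anything, your write-up is slightly more careful than the paper's, since you explicitly establish $a \in \Anc(v)$ (hence comparability of $a$ and $c$) before applying \cref{lem:dfs-ancestry}, and you derive the strict inequality $f(a_0) < f(c)$, sidestepping the $a = c$ tie that the paper's concluding step glosses over.
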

\begin{proof}
We know that if $[a,b], [c,d] \in I_v$, then $b,d \in T_v$, i.e.\ $b,d \in \Des[v]$, or equivalently, $v \in \Anc[b]$ and $v \in \Anc[d]$.

\textbf{(First property)}
$z$ stabs $[c,d]$, so $z \in \Des[c] \cap \Anc(v)$.
Since $[a,b] \prec [c,d]$, we have $f(a) < f(c)$ or $a = c$.
In either case, $f(a) \leq f(c)$ and so $a \in \Anc[c]$ by \cref{lem:dfs-ancestry}.
So, $z \in \Des[c] \cap \Anc(v) \subseteq \Des[a] \cap \Anc(v) \subseteq \Des[a] \cap \Anc[b]$, i.e.\ $z$ stabs $[a,b]$.

\textbf{(Second property)}
It suffices to argue that $[a,b] \prec [c,d]$ for any $[a,b] \in E_v \subseteq I_v$ and $[c,d] \in I_v \setminus E_v$.

Since $[a,b], [c,d] \in I_v$, we know that $b,d \in T_v$, i.e.\ $v \in \Anc[b]$ and $v \in \Anc[d]$.
Since $[a,b] \in E_v$, we have $b = v$, so $a \in \Anc(b) = \Anc(v)$ and $b = v \in \Anc[d]$.

\textbf{Case 1: $c \not\in \Anc[v]$}\\
Since $[c,d] \in I_v$, we have $c \in \Anc[v]$, i.e.\ $f(v) \leq f(c)$.
Since $a \in \Anc(v)$, this implies that $f(a) < f(c)$.

\textbf{Case 2: $c \in \Anc[v]$}\\
Suppose, for a contradiction, that $f(a) > f(c)$.
Since $a,c \in \Anc[v]$, \cref{lem:dfs-ancestry} tells us that $c \in \Anc(a)$.
Then, $[c,d]$ is a superset interval with respect to $[a,b]$ since $c \in \Anc(a)$, $a \in \Anc(b)$, and $b \in \Anc(d)$.
This is a contradiction since we have removed all superset intervals.

In either case, $f(a) \leq f(c)$, so $[a,b] \prec [c,d]$.
\end{proof}

We now describe our DP algorithm (\cref{alg:mics} and \cref{alg:dp}) where we \emph{always} recurse on subsets within $I_v$ (e.g.\ see line 6 in \cref{alg:dp}).
For any vertex $v \in V$, our DP state will recurse on the smallest index of the remaining unstabbed intervals within $I_v$.
If all intervals within $I_v$ are stabbed, then the recursed index will be $\infty$ and the recursion terminates.

Suppose we are currently recursing on vertex $v$ and index $i$.
Let $U = \{[a,b] \in I_v : J^{-1}([a,b]) \geq i\}$.
To determine whether $U \cap E_v$ is empty, we can define
\[
e_v =
\begin{cases}
\max_{[a,b] \in E_v} \cJ^{-1}([a,b]) & \text{if $E_v \neq \emptyset$}\\
-\infty & \text{if $E_v = \emptyset$}
\end{cases}
\]
and check whether $e_v \geq i$.
This works because \cref{lem:properties-of-J} guarantees that $E_v$ appears in the front of $\cJ_v$, so $e_v \geq i \iff U \cap E_v \neq \emptyset$.
Meanwhile, the appropriate index update for $U \cap B_y$ in the $\alpha_v$ case is $\max\{a_y, i\}$ where
\[
a_y =
\begin{cases}
\min_{[a,b] \in B_y} \cJ^{-1}([a,b]) & \text{if $B_y \neq \emptyset$}\\
\infty & \text{if $B_y = \emptyset$}
\end{cases}
\]
Similarly, the index update $U \cap I_y$ in the $\beta_v$ case is $\max\{b_{v,y}, i\}$ where
\[
b_{v,y} =
\begin{cases}
\min_{[a,b] \in I_y} \cJ^{-1}([a,b]) & \text{if $I_y \neq \emptyset$}\\
\infty & \text{if $I_y = \emptyset$}
\end{cases}
\]
One can verify that all the $e_v, a_{y}, b_{v,y}$ indices can be pre-computed in polynomial time before executing the DP.
To extract a minimum sized stabbing set for $\cJ$ of size $\opt(\cJ, r)$, one can perform a standard backtracing of the memoization table.

\begin{algorithm}[htbp]
\caption{Minimum interval stab size on a rooted tree.}
\label{alg:mics}
\begin{algorithmic}[1]
    \State \textbf{Input}: Rooted tree $G$ with root $r$, set of intervals $\cJ$.
    \State \textbf{Output}: $\opt(\cJ,r) = \texttt{DP}(r, 0)$.
    \State Compute Euler tour mappings $f$ and $\ell$, sort $\cJ$ according to $\prec$ ordering.
    \State Remove superset intervals from $\cJ$.
    \State Pre-compute indices $e_v$, $a_y$ and $b_{v,y}$ for all $v \in V$ and $y \in \Ch(v)$.
    \State \Return $\texttt{DP}(r, 0)$
\end{algorithmic}
\end{algorithm}

\begin{algorithm}[htbp]
\algnewcommand{\IIf}[1]{\State\algorithmicif\ #1\ \algorithmicthen}
\algnewcommand{\EndIIf}{\unskip\ \algorithmicend\ \algorithmicif}
\caption{Dynamic programming subroutine \texttt{DP}.}
\label{alg:dp}
\begin{algorithmic}[1]
    \State \textbf{Input}: Vertex $v$, index $i \in \{0, 1, \ldots, |\cJ| - 1 \}$.
    \State \textbf{Output}: $\texttt{DP}(v, i)$
    \State\algorithmicif\ $i = \infty$ \algorithmicthen\ \Return 0 \Comment{Done processing $\cJ$}
    \State $\alpha_v = 1 + \sum_{y \in \Ch(v)} \texttt{DP}(y, \max\{a_{y}, i\})$
    \State $\beta_v = \sum_{y \in \Ch(v)} \texttt{DP}(y, \max\{b_{v,y}, i\})$
    \State\algorithmicif\ $e_v \geq i$ \algorithmicthen\ $\texttt{memo}(v, i) \leftarrow \alpha_v$\Comment{$U \cap E_v \neq \emptyset$}
    \State\algorithmicelse\ $\texttt{memo}(v, i) \leftarrow \min\{\alpha_v, \beta_v\}$
    \State \Return $\texttt{memo}(v, i)$
\end{algorithmic}
\end{algorithm}

\begin{restatable}{theorem}{polytimeforintervalstabbing}
\label{thm:poly-time-for-interval-stabbing}
Together, \cref{alg:mics} and \cref{alg:dp} correctly output $\opt(\cJ,r)$ in $\cO(n^2 \cdot |\cJ|)$ time.
\end{restatable}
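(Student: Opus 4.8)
The plan is to prove two claims: (i) the memoized recursion $\texttt{DP}(v,i)$ equals the recurrence value $\opt(U_{v,i}, v)$ for the subset $U_{v,i} = \{[a,b] \in I_v : \cJ^{-1}([a,b]) \geq i\}$, so that the top-level call $\texttt{DP}(r,0)$ returns $\opt(I_r, r) = \opt(\cJ, r)$; and (ii) the whole procedure runs in $\cO(n^2 |\cJ|)$ time. Since the correctness of the set-valued recurrence in \cref{eq:dp-recurrence} is already established by \cref{lem:recurrence-correctness}, the only content of (i) is that the scalar index $i$ is a faithful encoding of the set $U_{v,i}$ and that the index arithmetic in \cref{alg:dp} implements the set operations appearing in $\alpha_v$ and $\beta_v$. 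I would first note that removing superset intervals is harmless, since any stabber of $[a,b]$ also stabs a superset $[c,d]$, so $\opt$ is unchanged; after this step the ending vertices of $\cJ$ are distinct, which is what makes $e_v$ well defined as a single prefix boundary.

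The core of the correctness argument is an induction on the size of the subtree $T_v$. In the inductive step at $(v,i)$ I would verify three index-to-set identities. First, $e_v \geq i \iff U_{v,i} \cap E_v \neq \emptyset$: this is exactly the second bullet of \cref{lem:properties-of-J}, which places all of $E_v$ at the front of the $\prec$-sorted $\cJ_v$, so the $E_v$-intervals occupy the smallest indices and $U_{v,i}$ meets $E_v$ precisely when the largest $E_v$-index $e_v$ is at least $i$. Second, for a child $y$, $U_{v,i} \cap B_y = U_{y, \max\{a_y, i\}}$: using $f(a) < f(y) \leq f(c)$ for every $[a,b] \in E_y \cup M_y$ and $[c,d] \in B_y$ (because such $a$ is a strict ancestor of $y$ while the start of a $B_y$-interval lies in $\Des[y]$), the ordering $\prec$ forces $E_y \cup M_y$ to precede $B_y$ inside $I_y$; hence intersecting the suffix $\{[a,b]\in I_y : \text{index} \geq i\}$ with $B_y$ equals the suffix of $I_y$ starting at $\max\{a_y, i\}$, where $a_y = \min_{B_y}\cJ^{-1}$. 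Third, $U_{v,i} \cap I_y = U_{y, \max\{b_{v,y}, i\}}$ follows immediately from $I_y \subseteq I_v$ and $b_{v,y} = \min_{I_y}\cJ^{-1}$. Feeding these identities into \cref{eq:dp-recurrence} shows $\alpha_v$ and $\beta_v$ computed in \cref{alg:dp} match the recurrence, completing the induction; the base case is $i = \infty$ (equivalently $U = \emptyset$), where both sides are $0$. The first bullet of \cref{lem:properties-of-J} is what guarantees that throughout the recursion the still-unstabbed set is always such a suffix, so that the single-index encoding is lossless.

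For the running time, I would bound preprocessing and the DP separately. The Euler tour, the $\prec$-sort, and superset removal cost $\cO(n + |\cJ|\log|\cJ|)$, and every membership in $E_v, M_v, S_v, W_v$ (hence every $e_v, a_y, b_{v,y}$) is decidable in $\cO(1)$ from the $f,\ell$ stamps via \cref{euler-tour-properties}, so all of these are precomputed in $\cO(n|\cJ|)$ time. The memoization table has one entry per state $(v,i)$ with $v \in V$ and $i \in \{0,\dots,|\cJ|-1\}\cup\{\infty\}$, i.e. $\cO(n|\cJ|)$ states, and evaluating one state sums over the children of $v$, which is $\cO(n)$ in the worst case; multiplying gives the claimed $\cO(n^2|\cJ|)$ bound (a handshake argument over tree edges would even improve this to $\cO(n|\cJ|)$, but the stated bound suffices). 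Backtracking the memo table to recover an actual optimal stabbing set is linear in the table size and does not dominate.

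I expect the main obstacle to be the second identity $U_{v,i} \cap B_y = U_{y,\max\{a_y,i\}}$: it is the one place where the proof must combine the $\prec$-ordering with the Euler-tour ancestry facts to argue that $E_y \cup M_y$ is a strict prefix of $B_y$ within $I_y$, and getting the off-by-one boundaries right, together with the fact that indices outside $I_y$ may interleave globally but are filtered out because every recursive call stays inside $I_y$, is the delicate part. Everything else reduces to bookkeeping around \cref{lem:properties-of-J} and \cref{euler-tour-properties}.
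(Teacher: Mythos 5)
Your proof is correct and takes essentially the same approach as the paper: show that the index arithmetic of \cref{alg:dp} faithfully implements the set-valued recurrence \cref{eq:dp-recurrence} (whose correctness is inherited from \cref{lem:recurrence-correctness}) via the $\prec$-ordering, \cref{lem:properties-of-J}, and \cref{euler-tour-properties}, and then bound the runtime by preprocessing cost plus $\cO(n \cdot |\cJ|)$ states with $\cO(|\Ch(v)|)$ work per state. If anything, your inductive step is more careful than the paper's terse invariant argument: your identity $U_{v,i} \cap B_y = U_{y,\max\{a_y,i\}}$ requires that every interval of $E_y \cup M_y$ precede every interval of $B_y$ inside $I_y$ --- a fact slightly stronger than the second bullet of \cref{lem:properties-of-J}, which you supply directly via $f(a) < f(y) \leq f(c)$ --- and you pair $a_y$ with $B_y$ and $b_{v,y}$ with $I_y$ exactly as \cref{alg:dp} defines them (the paper's prose swaps the two, an apparent typo).
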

\begin{proof}
\textbf{Correctness}
The indices $e_v, a_y, b_{v,y}$ are defined to match \cref{eq:dp-recurrence} and the correctness follows from \cref{lem:properties-of-J}.

The invariant we maintain throughout the recursion is as follows: $\cJ[i]$ has \emph{not} been stabbed by $\Anc(v)$ whenever we are in a recursive step at some vertex $v \in V$ and index $i$.
We know from \cref{lem:properties-of-J} that any interval $[a,b]$ with $\cJ^{-1}([a,b]) < i$ would have been stabbed.
So, recursing on $\max\{a_y,i\}$ is equivalent to recursing on $U \cap I_y$ and $\max\{b_{v,y},i\}$ is equivalent to recursing on $U \cap B_y$ in \cref{eq:dp-recurrence}, for any $y \in \Ch(v)$.
Since we immediately recurse on the $\alpha_v$ case whenever $U \cap E_v \neq \emptyset$, we avoid the $\infty$ case in \cref{eq:dp-recurrence}.

\textbf{Runtime}
The computation time of Euler tour data structure can be done in $\cO(n)$ time via depth-first-search on the rooted tree.
The removal of superset intervals can be done in $\cO(|\cJ| \log |\cJ|)$ time.
Sorting of $\cJ$ according to the $\prec$ ordering can be done in $\cO(|\cJ| \log |\cJ|)$ time.
For any $v \in V$, the sets $E_v, M_v, S_v, W_v, I_v, B_v, C_v$ can be computed in $\cO(|\cJ|)$ time, then the indices $e_v$, $a_y$ and $b_{v,y}$ can be computed in $\cO(|\cJ| \log |\cJ|)$ time (we may need to sort to compute the minimum and maximum values).
The DP has at most $\cO(n \cdot |\cJ|)$ states and an execution of \cref{alg:dp} at vertex $v$ takes $\cO(|\Ch(v)|)$ time (accounting for memoization), so the \cref{alg:dp} takes $\cO(n \cdot |\cJ| \cdot \sum_{v \in V} |\Ch(v)|) \subseteq \cO(n^2 \cdot |\cJ|)$ time.
Putting everything together, we see that the overall runtime is $\cO(|\cJ| \log |\cJ| + n^2 \cdot |\cJ|) \subseteq \cO(n^2 \cdot |\cJ|)$ since $|\cJ| \leq \binom{n}{2} \leq n^2$.
\end{proof}

\section{Experiments and implementation}
\label{sec:appendix-experiments}

The experiments are conducted on an Ubuntu server with two AMD EPYC 7532 CPU and 256GB DDR4 RAM.
Our code and entire experimental setup is available at\\ \url{https://github.com/cxjdavin/subset-verification-and-search-algorithms-for-causal-DAGs}.

\subsection{Implementation details}

\paragraph{Subset verification}
We implemented our subset verification algorithm and tested its correctness on random trees and random Erd\H{o}s-R\'{e}nyi graphs with random subsets of target edges $T$.
On random trees, we know that the subset verification should be 1 since intervening on the root always suffices regardless of what $T$ is.
On random Erd\H{o}s-R\'{e}nyi graphs $G^*$, we chose $T$ to be a random subset of covered edges of $G^*$ and checked that the subset verifying set is indeed a minimum vertex cover of $T$, whose size could be smaller than the full verification number $\nu(G^*)$.

\paragraph{Node-induced subset search}
We modified the 1/2-clique separator subroutine of \cite{gilbert1984separatorchordal} within the clique-separator based search algorithm of \cite{choo2022verification} by only assigning non-zero weights to endpoints of target edges.

\paragraph{Other full search algorithms that are benchmarked against}
We modified them to take in target edges $T$ and terminate early once all edges in $T$ have been oriented.

\subsubsection{Synthetic graph generation}

As justified by \cref{sec:sufficient-to-study-without-v-structures}, it suffices to study the performance of algorithms on connected DAGs without v-structures.
Our graphs are generated in the following way:
\begin{enumerate}
    \item Fix the number of nodes $n$ and edge probability $p$
    \item Generate a random tree on $n$ nodes
    \item Generate a random Erd\H{o}s-R\'{e}nyi graph $G(n,p)$
    \item Combine their edgesets and orient the edges in an acyclic fashion: orient $u \to v$ whenever vertex $u$ has a smaller vertex numbering than $v$.
    \item Add arcs to remove v-structures: for every v-structure $u \to v \gets w$ in the graph, we add the arc $u \to w$ whenever vertex $u$ has a smaller vertex numbering than $w$.
\end{enumerate}

\subsection{Experiment 1: Subset verification number for \emph{randomly chosen} target edges}

In this experiment, we study how the subset verification number scales when the target edges $T$ is chosen \emph{randomly}.

For each pair of graph parameters $(n,p)$, we generate 100 synthetic DAGs.
Then, for each graph with $|E| = m$ edges, we sampled a random subset $T \subseteq E$ of sizes $\{0.3m, 0.5m, 0.7m, m\}$ and ran our subset verification algorithm.
Additionally, we run the verification algorithm of \cite{choo2022verification} on the entire graph.
As expected, the verification number exactly matches the subset verification number in the special case where $|T| = m$.
We show these results in \cref{fig:plots}.
Despite the trend suggested in \cref{fig:plots}, the number of target edges is typically \emph{not} a good indication for the number of interventions needed to be performed and one can always construct examples where $|T'| > |T|$ but $\nu(G,T') \not> \nu(G,T)$.
For example, for a subset $T \subseteq E$, we have $\nu(G^*, T') = \nu(G^*, T)$ if $T' \supset T$ is obtained by adding edges that are already oriented by orienting $T$.
Instead, the number of ``independent target edges''\footnote{Akin to ``linearly independent vectors'' in linear algebra.} is a more appropriate measure.

\begin{figure}[htbp]
\centering
\begin{subfigure}[b]{0.4\linewidth}
    \centering
    \includegraphics[width=\linewidth]{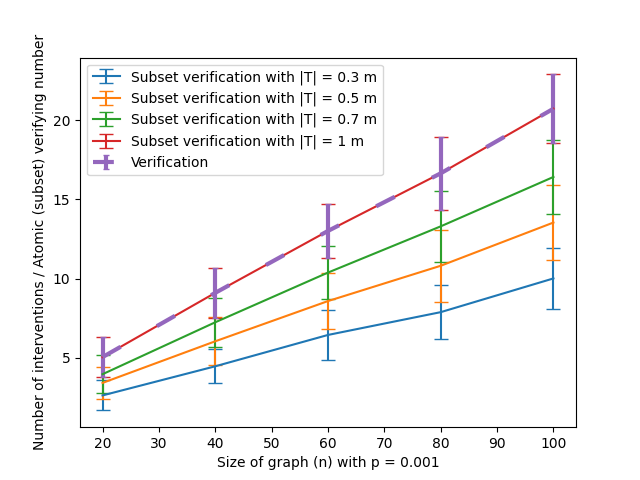}
    \caption{$p = 0.001$}
    \label{fig:p001}
\end{subfigure}
\begin{subfigure}[b]{0.4\linewidth}
    \centering
    \includegraphics[width=\linewidth]{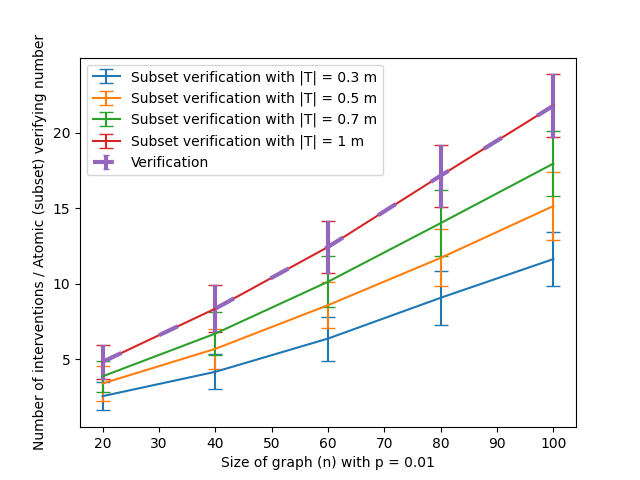}
    \caption{$p = 0.01$}
    \label{fig:p001}
\end{subfigure}
\begin{subfigure}[b]{0.4\linewidth}
    \centering
    \includegraphics[width=\linewidth]{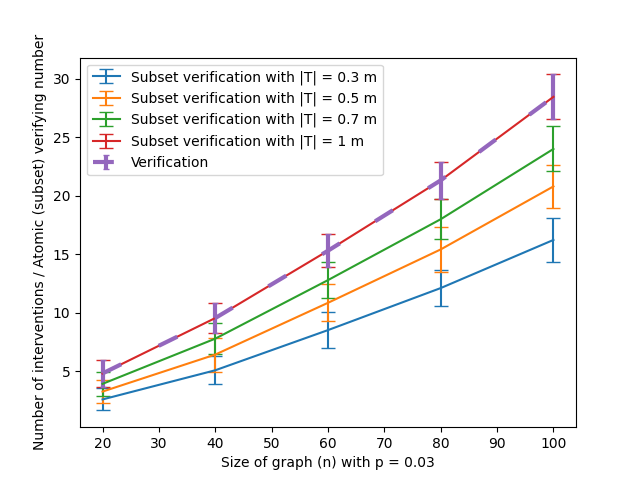}
    \caption{$p = 0.03$}
    \label{fig:p003}
\end{subfigure}
\begin{subfigure}[b]{0.4\linewidth}
    \centering
    \includegraphics[width=\linewidth]{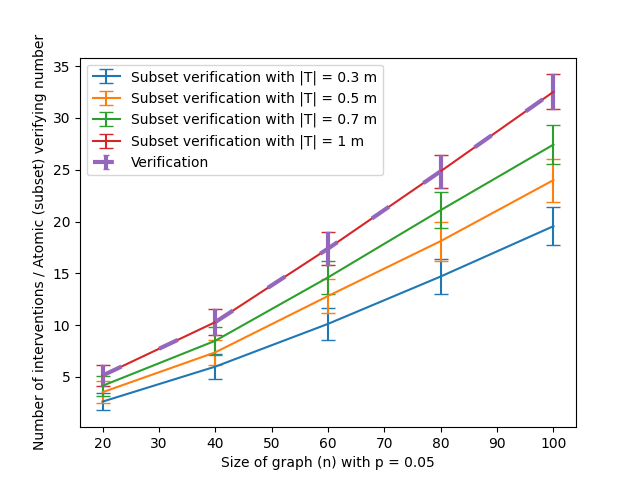}
    \caption{$p = 0.05$}
    \label{fig:p005}
\end{subfigure}
\begin{subfigure}[b]{0.4\linewidth}
    \centering
    \includegraphics[width=\linewidth]{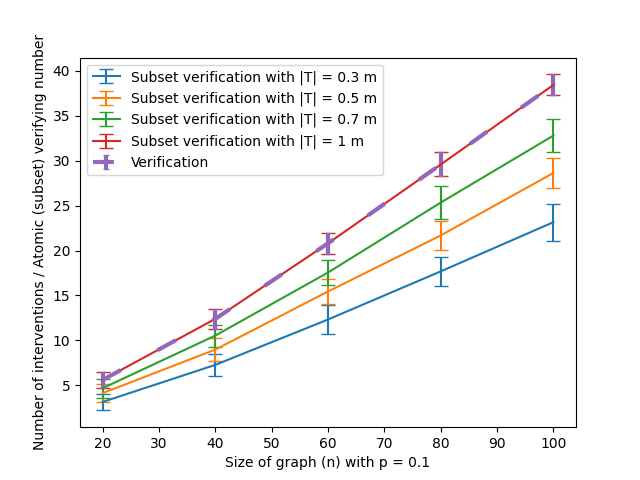}
    \caption{$p = 0.1$}
    \label{fig:p01}
\end{subfigure}
\begin{subfigure}[b]{0.4\linewidth}
    \centering
    \includegraphics[width=\linewidth]{plots/p=0.3.png}
    \caption{$p = 0.3$}
    \label{fig:p03}
\end{subfigure}
\caption{Plots for $p = \{0.001, 0.01, 0.03, 0.05, 0.1, 0.3\}$ across $n = \{10, 20, 30, \ldots, 100\}$. Observe that the subset verification number increases as the size of the random subset of target edges increases. Furthermore, in the special case of $|T| = m$, the subset verification number is exactly the verification number.}
\label{fig:plots}
\end{figure}

While the edge probability values may seem small, the graph is actually quite dense due to the addition of arcs to remove v-structures.
We show this in \cref{fig:density}, where we plot the number of edges of our generated graphs and compare it against the maximum number of possible edges.
Observe that the generated graph is almost a complete graph when $p = 0.3$.

\begin{figure}[htbp]
\centering
\includegraphics[width=0.45\textwidth]{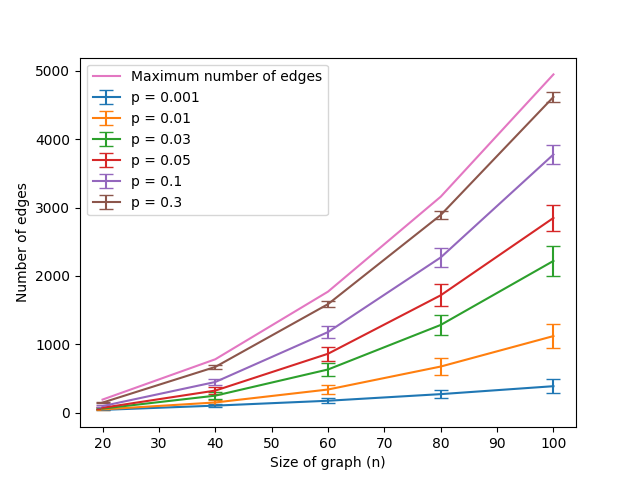}
\caption{
We plot the number of edges in our generated synthetic graphs and compare it against the maximum $\binom{n}{2}$ number of edges.
Observe that the generated graph is almost a complete graph when $p = 0.3$.
}
\label{fig:density}
\end{figure}

\subsection{Experiment 2: Local causal graph discovery}

In this experiment, we compare node-induced subset search with full search algorithms on the task of local causal graph discovery where we only wish to orient edges around a target node of interest.
Following \cite{choo2022verification}, we base our evaluation on the experimental framework of \cite{squires2020active} which empirically compares atomic intervention policies.

We compared the following atomic intervention algorithms against the atomic verification number $\nu_1(G^*)$ and atomic subset verification number $\nu_1(G^*, T)$; see \cref{fig:r-hops}:
\begin{description}
    \item[\texttt{random}:] A baseline algorithm that repeatedly picks a random non-dominated node (a node that is incident to some unoriented edge) from the interventional essential graph
    \item[\texttt{dct}:] \texttt{DCT Policy} of \cite{squires2020active}
    \item[\texttt{coloring}:] \texttt{Coloring} of \cite{shanmugam2015learning}
    \item[\texttt{separator}:] Clique-separator based search algorithm of \cite{choo2022verification}
    \item[\texttt{SubsetSearch}:] Our modification of \texttt{separator} that treats the union of endpoints of given target edges as the vertices in the node-induced subgraph of interest.
    That is, we may end up increasing the set of target edges $T \subseteq E$ if the input $T$ was not already all edges within a node-induced subgraph.
    However, note that the given inputs $T$ for this experiment already includes all edges within a node-induced subgraph so this is not a concern.
\end{description}

While our algorithms to construct the Hasse diagram and solve the produced interval stabbing problem is fast, we remark that the current implementation for computing $\{R(G^*,v)\}_{v \in V}$ in the \texttt{causaldag} package\footnote{\url{https://causaldag.readthedocs.io/en/latest/\#}} can be slow.
In particular, it is \emph{not} the $\cO(d \cdot |E|)$ time algorithm of \cite[Algorithm 2]{pmlr-v161-wienobst21a} mentioned in \cref{sec:appendix-meek-rules}.
In our experiments, computing $\{R(G^*,v)\}_{v \in V}$ takes up more than 98\% of the running time for computing subset verification numbers for each graph $G^*$.
However, note that in practical use case scenarios, one simply use the algorithms without actually needing computing $\{R(G^*,v)\}_{v \in V}$, so this is not a usability concern.

\begin{figure}[htbp]
\centering
\begin{subfigure}[b]{0.4\linewidth}
    \centering
    \includegraphics[width=\linewidth]{plots/p0001_hop1_interventioncount.png}
    \caption{$r=1$}
\end{subfigure}
\begin{subfigure}[b]{0.4\linewidth}
    \centering
    \includegraphics[width=\linewidth]{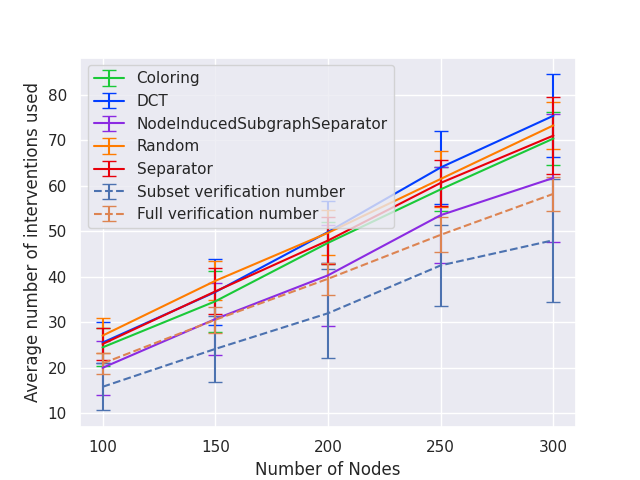}
    \caption{$r=3$}
\end{subfigure}
\caption{\texttt{SubsetSearch} consistently uses less interventions than existing state-of-the-art full graph search algorithms when we only wish to orient edges within a r-hop neighborhood of a randomly chosen target node $v$, for $r \in \{1,3\}$.}
\label{fig:r-hops}
\end{figure}

\end{document}